	\newtheorem{proposition}{Proposition}
	\newtheorem{lemma}{Lemma}
	\newtheorem{theorem}{Theorem}
	\newtheorem{corollary}{Corollary}
	\theoremstyle{remark}
	\newtheorem{remark}{Remark}
	\theoremstyle{definition}
	\newtheorem{definition}{Definition}
	\newtheorem{assumption}{Assumption}
	\newtheorem{property}{Property}
\newtcolorbox{simplebox}[1][]{%
  sharp corners,
  enhanced,
  colback=white,
  attach title to upper,
  #1
}
\crefname{assumption}{Assumption}{Assumptions}
\crefname{theorem}{Theorem}{Theorems}
\crefname{remark}{Remark}{Remarks}
\crefname{lemma}{Lemma}{Lemmas}
\crefname{corollary}{Corollary}{Corollaries}
\crefname{proposition}{Proposition}{Propositions}
\crefname{definition}{Definition}{Definitions}
\crefname{property}{Property}{Properties}
\newcommand{\norm}[1]{\left\Vert #1 \right\Vert}
\newcommand{\inlinenorm}[1]{\Vert #1 \Vert}
\newcommand{\1}[1]{\textbf{1}\left[ #1 \right]}
\newcommand{\Prb}[1]{\mathbb{P}\left[ #1 \right]}
\newcommand{\inlinePrb}[1]{\mathbb{P}[ #1 ]}
\newcommand{\E}[1]{\mathbb{E}\left[ #1 \right]}
\newcommand{\inlineE}[1]{\mathbb{E}[ #1 ]}
\newcommand{\cond}[2]{\mathbb{E}\left[\left. #1 \right\vert #2 \right]}
\newcommand{\condPrb}[2]{\mathbb{P}\left[\left. #1 \right\vert #2 \right]}
\newcommand{\inlinecondPrb}[2]{\mathbb{P}[ #1 \vert #2 ] }
\newcommand{\flb}{F_{l.b.}}
\newcommand{\lesp}{\mathcal{L}_{\epsilon}}
\title{Global Convergence and Stability of Stochastic Gradient Descent}
\author{%
  Vivak Patel \\
  Department of Statistics\\
  University of Wisconsin -- Madison\\
  Madison, WI 53706 \\
  \texttt{vivak.patel@wisc.edu} \\
  \And
  Shushu Zhang \\
  Department of Statistics \\
  University of Michigan -- Ann Arbor \\
  \texttt{shushuz@umich.edu} \\
  \AND
  Bowen Tian \\
  Department of Statistics \\
  The Ohio State University \\
  \texttt{tian.837@buckeyemail.osu.edu} \\
  % \And
  % Coauthor \\
  % Affiliation \\
  % Address \\
  % \texttt{email} \\
  % \And
  % Coauthor \\
  % Affiliation \\
  % Address \\
  % \texttt{email} \\
}
\begin{document}

\maketitle

\begin{abstract}
In machine learning, stochastic gradient descent (SGD) is widely deployed to train models using highly non-convex objectives with equally complex noise models. Unfortunately, SGD theory often makes restrictive assumptions that fail to capture the non-convexity of real problems, and almost entirely ignore the complex noise models that exist in practice. In this work, we demonstrate the restrictiveness of these assumptions using three canonical models in machine learning. Then, we develop novel theory to address this shortcoming in two ways. First, we establish that SGD's iterates will either globally converge to a stationary point or diverge under nearly arbitrary nonconvexity and noise models. Under a slightly more restrictive assumption on the joint behavior of the non-convexity and noise model that generalizes current assumptions in the literature, we show that the objective function cannot diverge, even if the iterates diverge. As a consequence of our results, SGD can be applied to a greater range of stochastic optimization problems with confidence about its global convergence behavior and stability.
\end{abstract}

\section{Introduction} \label{section-introduction}
Stochastic Gradient Descent (SGD) and its variants are dominant algorithms for solving stochastic optimization problems arising in machine learning, and have expanded their reach to more complex problems from estimating Gaussian Processes \citep{chen2020}, covariance estimation in stochastic filters \citep{kim2021}, and systems identification \citep{hardt2016,zhang2020}. Accordingly, understanding the behavior of SGD and its variants has been crucial to their reliable application in machine learning and beyond. As a result, the theory of these methods has greatly advanced, most notably for SGD as it is the basis for, and simplest of, these methods. Indeed, SGD has been analyzed from many perspectives: global convergence analysis \citep{lei2019,gower2020,khaled2020,mertikopoulos2020,patel2020}, local convergence analysis \citep{mertikopoulos2020}, greedy and global complexity analysis \citep{gower2020,khaled2020}, asymptotic weak convergence \citep{wang2021}, and saddle point analysis \citep{fang2019,mertikopoulos2020,jin2021}.

While all of these perspectives add new dimensions to our understanding of SGD, the global convergence analysis of SGD is the foundation as it dictates whether local analyses, complexity analyses or saddle point analyses are even warranted. As surveyed in \cite{patel2020}, these current global convergence analyses of SGD make a wide variety of assumptions, most commonly: (1) the objective function is bounded from below, (2) the gradient function is globally Lipschitz continuous, (3) the stochastic gradients are unbiased, and (4) the variance of the stochastic gradients are bounded. While the first and third assumption are generally reasonable,\footnote{See \cite{bottou2018} for a simple relaxation of unbiased stochastic gradients.} the second and fourth assumptions \textit{and their more recent generalizations} are not usually applicable to machine learning problems as we now demonstrate through three simple examples. Note, in the first two examples, we make use of a penalty function, which can be removed without impacting the result.

\paragraph{Example 1: Feed Forward Neural Network.} Consider the example $(Y,Z)$ where $Y$ is a binary label and $Z$ is a feature vector. We will attempt to predict $Y$ from $Z$ using a simple multi-layer feed forward network as shown in \cref{figure-ffn}. The next result states that for a simple distribution over the example space and for a simple, archetype network, the gradient function is not globally Lipschitz continuous, nor does it satisfy the (possibly) more general $(L_0,L_1)$-smooth assumption \cite[Definition 1, Assumption 3]{zhang2019}. Moreover, the variance of the stochastic gradients is unbounded.  See \cref{subsection-ffn} for a proof.

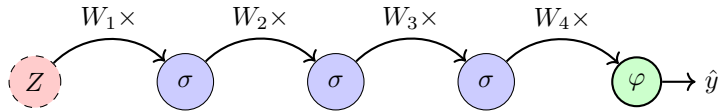
\begin{figure}[hb]
\centering
\begin{tikzpicture}
\node[circle,minimum width=0.5cm,fill=red!20,draw=black,dashed] (feat) at (0,0) {$Z$};
\node[circle,minimum width=0.75cm,fill=blue!20,draw=black] (ly1) at (2,0) {$\sigma$};
\node[circle,minimum width=0.75cm,fill=blue!20,draw=black] (ly2) at (4,0) {$\sigma$};
\node[circle,minimum width=0.75cm,fill=blue!20,draw=black] (ly3) at (6,0) {$\sigma$};
\node[circle,minimum width=0.5cm,fill=green!20,draw=black,thick] (out) at (8,0) {$\varphi$};
\node (y) at (9,0) {$\hat y$};

\draw[->,thick] (feat) to [out=45,in=135] node[above] {$W_1\times$} (ly1);
\draw[->,thick] (ly1) to [out=45,in=135] node[above] {$W_2\times$} (ly2);
\draw[->,thick] (ly2) to [out=45,in=135] node[above] {$W_3\times$} (ly3);
\draw[->,thick] (ly3) to [out=45,in=135] node[above] {$W_4 \times$} (out);
\draw[->,thick] (out) to (y);
\end{tikzpicture}
\caption{A diagram of a simple feed forward network for binary classification.}
\label{figure-ffn}
\end{figure}

\begin{proposition}\label{proposition-ffn}
Consider the feed forward network in \cref{figure-ffn} with $\sigma$ linear and $\varphi$ sigmoid trained with a binary cross-entropy loss with a Ridge penalty. There exists a finite, discrete distribution for $(Y,Z)$ such that the risk function's gradient is not globally Lipschitz continuous, nor does it satisfy the $(L_0,L_1)$-smooth assumption. Moreover, the variance of the stochastic gradients is not bounded.
\end{proposition}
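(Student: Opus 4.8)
The plan is to exploit the product structure that linear hidden activations force upon the network. With $\sigma$ linear, the architecture of \cref{figure-ffn} computes $\hat y = \varphi(W_4 W_3 W_2 W_1 Z)$, and taking $Z \in \mathbb{R}$ and unit-width layers (exactly the diagram) reduces the weights to scalars $w_1,\dots,w_4$, so the pre-activation is $s = pZ$ with $p = w_1 w_2 w_3 w_4$. Using the standard identity $\partial\ell/\partial s = \varphi(s) - Y$ for binary cross-entropy and adding the ridge penalty $\tfrac{\lambda}{2}\sum_i w_i^2$, the regularized risk $R$ satisfies $\partial R/\partial w_i = \E{(\varphi(s)-Y)\left(\prod_{j\neq i}w_j\right)Z} + \lambda w_i$ and, on the diagonal, $\partial^2 R/\partial w_i^2 = \left(\prod_{j\neq i}w_j\right)^2\E{\varphi'(s)Z^2} + \lambda$ with $\varphi' = \varphi(1-\varphi)$. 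The mechanism I will use is that the triple product $\prod_{j\neq i}w_j$ enters the Hessian squared but the penalty enters the gradient only linearly, so curvature and gradient can be made to scale at very different rates while the product $p$ is held fixed.

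First I would pin down the distribution so that the fit term vanishes along a convenient direction. Taking the well-specified law $\condPrb{Y=1}{Z} = \varphi(Z)$ with $Z$ supported on a single nonzero point (say $Z\equiv 1$), which is finite and discrete, gives $\E{(\varphi(Z)-Y)Z} = 0$, while $\E{\varphi'(Z)Z^2} = \varphi'(1) > 0$ and $\var{(\varphi(Z)-Y)Z} = \varphi(1)(1-\varphi(1)) > 0$. I would then travel along the reparametrization $w_1 = t^{-3}$, $w_2 = w_3 = w_4 = t$ as $t\to\infty$, which keeps $p = 1$ and hence $s = Z$ bounded, so $\varphi'(s)$ stays bounded away from zero. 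The crux computation is $\partial^2 R/\partial w_1^2 = (w_2 w_3 w_4)^2\E{\varphi'(s)Z^2} + \lambda = t^6\varphi'(1) + \lambda$, which diverges; since the spectral norm dominates any diagonal entry, $\norm{\nabla^2 R} \ge \partial^2 R/\partial w_1^2 \to \infty$, ruling out global Lipschitz continuity of the gradient.

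For the $(L_0,L_1)$-smooth assumption I would contrast this curvature with the gradient norm on the same path. Because $\E{(\varphi(s)-Y)Z} = 0$ at $p=1$ kills every fit term, each component collapses to its penalty piece: $\partial R/\partial w_1 = \lambda t^{-3}$ and $\partial R/\partial w_i = \lambda t$ for $i\in\{2,3,4\}$, whence $\norm{\nabla R} = \Theta(t)$. Therefore $\norm{\nabla^2 R}/(L_0 + L_1\norm{\nabla R}) = \Theta(t^6/t) = \Theta(t^5)\to\infty$ for every fixed $L_0,L_1$, which is exactly the negation of $(L_0,L_1)$-smoothness. For the variance of the stochastic gradient I would take the single-sample estimator of the first coordinate, $g_1 = (\varphi(s)-Y)\,w_2 w_3 w_4\, Z + \lambda w_1$; as only $Y$ is random and the penalty is deterministic, $\var{g_1} = (w_2 w_3 w_4)^2\var{(\varphi(s)-Y)Z} = t^6\,\varphi(1)(1-\varphi(1))$, so the variance is unbounded, and since one coordinate already blows up, the full stochastic gradient has unbounded variance.

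The hard part will be choosing the scaling exponents so that all three failures occur at once: they must keep $p=1$ so that $s$, and thus $\varphi'(s)$, stays bounded; force the common offending factor $(w_2w_3w_4)^2 = t^6$ in the Hessian and the variance to blow up; and, after the distribution annihilates the fit term, leave the gradient growing only linearly through the ridge penalty. The balance $w_1 = t^{-3},\ w_2=w_3=w_4=t$ is exactly what decouples the quadratic-in-products curvature from the linear-in-weight gradient, and the one point demanding care is checking that no off-diagonal entry of $\nabla^2 R$ or any remaining component of $\nabla R$ reintroduces faster growth and spoils the $\Theta(t^5)$ separation.
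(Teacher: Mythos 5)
Your proposal is correct, but it takes a genuinely different route from the paper's proof. The paper fixes the two-point distribution $(Y,Z)=(0,0)$ and $(Y,Z)=(1,1)$ with probability $1/2$ each, computes $F$, $\dot F$, $\ddot F$ in closed form, and then uses \emph{two separate} parameter paths: $\theta=(1,-1,W_3,W_3)$ versus $\phi=(1,0,0,0)$ to exhibit a gradient difference growing like $W_3^2$ (a direct two-point violation of Lipschitz continuity, and also the path for the variance blow-up), and $\theta=(0,W_4,W_4,W_4)$ to get the Hessian entry $\tfrac18 W_4^6$ against a gradient of order $|W_4|^3$ for the $(L_0,L_1)$ failure. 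You instead choose a well-specified law ($Z\equiv 1$, $Y\mid Z$ Bernoulli with success probability $\varphi(1)$) so that the population fit term $\E{(\varphi(s)-Y)Z}$ vanishes identically on the manifold $w_1w_2w_3w_4=1$, and then a \emph{single} product-preserving path $(t^{-3},t,t,t)$ delivers all three failures at once: the gradient collapses to the pure ridge term of order $t$, while the Hessian diagonal and the stochastic-gradient variance both carry the factor $(w_2w_3w_4)^2=t^6$. Your approach buys a cleaner and larger separation ($t^6$ curvature versus $t$ gradient, i.e.\ a $t^5$ ratio, compared with the paper's $W_4^3$ ratio) and a unified argument; the paper's approach buys fully explicit closed-form expressions that are then reused later in \cref{subsection-ffn} to verify that the same example \emph{satisfies} \cref{assumption-flb,assumption-local-holder,assumption-unbiased,assumption-moment}, something your single-support-point construction could also do but which you did not address (and which the proposition itself does not require). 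Two small remarks: your non-Lipschitz step implicitly uses the standard fact that a globally Lipschitz gradient of a $C^2$ function forces a uniformly bounded Hessian, which is fine but worth stating; and your closing worry about off-diagonal Hessian entries is a non-issue, since you only need the lower bound $\norm{\nabla^2 R}_2 \geq |e_1'\nabla^2 R\, e_1|$ and the gradient on your path is computed exactly (every fit term is annihilated), so nothing can spoil the $\Theta(t^5)$ separation.
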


\paragraph{Example 2: Recurrent Neural Network.} Consider the example $(Y,Z_0,Z_1,Z_2,Z_3)$ where $Y$ is a binary label and $\lbrace Z_0,Z_1,Z_2,Z_3 \rbrace$ are sequentially observed. We will attempt to predict $Y$ from $Z$ using a simple recurrent network as shown in \cref{figure-rnn}. The next result states that for a simple distribution over the example space and for a simple, archetype network, the training function violates the aforementioned assumptions. See \cref{subsection-rnn} for a proof.

\begin{figure}[hb]
\centering
\begin{tikzpicture}
\node[circle,minimum width=0.5cm,fill=red!20,draw=black,dashed] (feat) at (0,-2) {$Z_i$};
\node[circle,minimum width=0.5cm,fill=blue!20,draw=black] (act) at (0,0) {$\sigma$};
\node[circle,minimum width=0.5cm,fill=purple!20,draw=black] (state) at (0,2) {$H_i$};
\node[circle,minimum width=0.5cm,fill=green!20,draw=black,thick] (out) at (2,0) {$\varphi$};
\node (y) at (3,0) {$\hat y$};

\draw[->,thick] (feat) to [out=45,in=-45] node[left] {$W_{2} \times$} (act);
\draw[->,thick] (act) to [out=45,in=-45] (state);
\draw[->,thick] (state) to [out=-135,in=135] node[left] {$W_1 \times$} (act);
\draw[->,thick,dashed] (state) to [out=0,in=90] node[right] {$W_3 \times$} (out);
\draw[->,thick] (out) to (y);
\end{tikzpicture}
\caption{A diagram of a recurrent neural network for a binary classification task.}
\label{figure-rnn}
\end{figure}
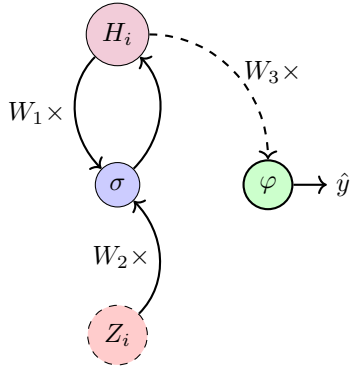

\begin{proposition}\label{proposition-rnn}
Consider the recurrent network in \cref{figure-rnn} with $\sigma$ linear and $\varphi$ sigmoid trained with a binary cross-entropy loss with a Ridge penalty. There exists a finite, discrete distribution for $(Y,Z_0,Z_1,Z_2,Z_3)$ such that the risk function's gradient is not globally Lipschitz continuous, nor does it satisfy the $(L_0,L_1)$-smooth assumption. Moreover, the variance of the stochastic gradients is not bounded.
\end{proposition}

\paragraph{Example 3: Poisson Regression.} Consider fitting a Poisson regression model by the standard maximum likelihood framework using independent copies of the example $(Y,Z)$, where $Y$ is a count response variable and $Z$ is a predictor. The next result states that for a very nice Poisson regression problem, the stochastic gradients violate the bounded variance assumption, its generalization \cite[Assumption 4.3c]{bottou2018}, and, in turn, its generalization, expected smoothness \cite[Assumption 2]{khaled2020}. See \cref{subsection-poisson} for a proof.

\begin{proposition} \label{proposition-pr}
Let $Y$ and $Z$ be independent Poisson random variables with mean one. Consider estimating a Poisson regression model of $Y$ as a function of $Z$. Then, the gradient function is not globally Lipschitz continuous, nor does it satisfy the $(L_0,L_1)$-smooth assumption. Moreover, the variance of the stochastic gradients is not bounded, does not satisfy \cite[Assumption 4.3c]{bottou2018}, nor does it satisfy \cite[Assumption 2]{khaled2020}.
\end{proposition}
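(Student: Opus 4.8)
The plan is to work with the canonical log-link Poisson regression model, so that the per-sample negative log-likelihood (with the additive constant dropped) is $\ell(\theta;Y,Z) = \exp(\beta_0 + \beta_1 Z) - Y(\beta_0 + \beta_1 Z)$ with parameter $\theta = (\beta_0,\beta_1)$. Differentiating gives the stochastic gradient
\[
g(\theta;Y,Z) = \left( \exp(\beta_0 + \beta_1 Z) - Y \right)\begin{pmatrix} 1 \\ Z \end{pmatrix},
\]
and, using that $Y,Z$ are independent with the Poisson moment generating function $\E{e^{tZ}} = e^{e^t - 1}$, the risk becomes $F(\theta) = e^{\beta_0 + e^{\beta_1} - 1} - \beta_0 - \beta_1$. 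First I would record the gradient $\nabla F(\theta) = (e^{u}-1,\, e^{u}e^{\beta_1}-1)$ and the Hessian $\nabla^2 F(\theta) = e^{u}\left(\begin{smallmatrix} 1 & e^{\beta_1} \\ e^{\beta_1} & e^{\beta_1}(e^{\beta_1}+1)\end{smallmatrix}\right)$, where $u := \beta_0 + e^{\beta_1} - 1$, and verify by a first-order analysis that the unique stationary point is $(0,0)$ with $F$ coercive, so $F^\star = F(0,0) = 1$.

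For the two deterministic claims I would exploit the double-exponential growth of the Hessian. Since $\partial^2_{\beta_0}F = e^u = e^{\beta_0 + e^{\beta_1}-1}$ is unbounded (e.g.\ as $\beta_0 \to \infty$), $\nabla F$ admits no global Lipschitz constant. For the $(L_0,L_1)$-smoothness refutation I would take the path $\beta_1 \to \infty$ with $\beta_0$ fixed (or the sharper $\beta_0 = 1 - e^{\beta_1}$, which pins $u=0$): there $\norm{\nabla F} \sim e^{u}e^{\beta_1}$ while $\norm{\nabla^2 F} \sim e^{u}e^{2\beta_1}$, so $\norm{\nabla^2 F}/\norm{\nabla F} \sim e^{\beta_1} \to \infty$, which is incompatible with any bound $\norm{\nabla^2 F} \le L_0 + L_1 \norm{\nabla F}$.

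For the stochastic claims the key quantities are the conditional moments $\cond{(g_0)^2}{Z} = e^{2\beta_0 + 2\beta_1 Z} - 2e^{\beta_0 + \beta_1 Z} + 2$ (using $\E{Y}=\E{Y^2}-1=1$), which upon taking expectation over $Z$ introduce the factor $\E{e^{2\beta_1 Z}} = e^{e^{2\beta_1}-1}$. I would show $\var{g(\theta)} \to \infty$ as $\beta_1 \to \infty$ because the variance carries the term $e^{2\beta_0}\big(e^{e^{2\beta_1}-1} - e^{2e^{\beta_1}-2}\big)$, defeating the uniform bound. To refute both the relaxed-growth Assumption 4.3c of \cite{bottou2018} and expected smoothness (Assumption 2 of \cite{khaled2020}) simultaneously, I would compare growth rates along $\beta_1 \to \infty$: the second moment $\E{\norm{g}^2}$ and the variance scale like $\exp(e^{2\beta_1})$, whereas $\norm{\nabla F}^2$ scales only like $\exp(2e^{\beta_1})$ and $F - F^\star$ only like $\exp(e^{\beta_1})$. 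Since $e^{2\beta_1} \gg 2e^{\beta_1} \gg e^{\beta_1}$, the ratio $\E{\norm{g}^2}\big/\big(1 + (F - F^\star) + \norm{\nabla F}^2\big)$ diverges, so no finite constants $M,M_V$ (respectively $A,B,C$) can exist.

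The main obstacle is not any single calculation but isolating the mechanism behind the divergence cleanly: everything reduces to the Jensen-type gap between the second moment $\E{(e^{\beta_1 Z})^2} = e^{e^{2\beta_1}-1}$ and the squared first moment $\big(\E{e^{\beta_1 Z}}\big)^2 = e^{2e^{\beta_1}-2}$. The former grows like $\exp(e^{2\beta_1})$ and the latter like $\exp(2e^{\beta_1})$, and this strict separation of exponents is exactly what forces the stochastic-gradient magnitude to outpace every admissible function of $F$ and $\nabla F$. I would therefore center the write-up on this rate comparison, keeping the moment computations as lemmas so that each of the five assertions follows by selecting the appropriate limiting path and reading off the dominant exponent.
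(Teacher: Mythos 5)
Your proposal is correct and follows essentially the same route as the paper's own proof: every claim rests on the same mechanism, namely that the Hessian-to-gradient ratio diverges (refuting global Lipschitz continuity and $(L_0,L_1)$-smoothness) and that the Jensen gap for the Poisson moment generating function makes the second moment of the stochastic gradient grow like $\exp(e^{2t})$ while $F - \flb$ and the squared gradient norm grow only like $\exp(2e^{t})$, so no constants in the Bottou or expected-smoothness bounds can exist. The only difference is cosmetic: you fit the two-parameter model with intercept $(\beta_0,\beta_1)$, whereas the paper uses the intercept-free scalar model $f(\theta,X) = -YZ\theta + \exp(\theta Z)$, which reduces the identical computations to one dimension.
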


As these three examples show, global convergence analyses that make use of the aforementioned assumptions do not apply to these canonical examples of machine learning problems. In fact, to our knowledge and as summarized in \cref{table-survey}, \textit{there are no global convergence analyses of SGD} that apply to these examples. 
\begin{simplebox}
\begin{center}
\textbf{The Problem:} \\

As a result, we do not have guarantees about how SGD behaves on these simple machine learning problems, which calls into question what SGD and its variants are doing on more complicated machine learning models.
\end{center}
\end{simplebox}

\paragraph{Contributions.} To address this problem, 
\begin{enumerate}[leftmargin=*,itemsep=0em,topsep=0em]
\item We relax the global Lipschitz continuous gradient assumption, the bounded variance assumption, and their aforementioned generalizations to assumptions that are applicable to the examples above. Specifically, we relax the  global Lipschitz continuity assumption and the $(L_0,L_1)$-smooth assumption to local $\alpha$-H\"{o}lder continuity of the gradient for $\alpha \in (0,1]$ (see \cref{assumption-local-holder}), which is even a relaxation even for deterministic gradient algorithms (c.f. \cite{nocedal2006},Theorems 3.2, 3.8, 4.5, 4.6). For the $\alpha$ in the local H\"{o}lder assumption, we also relax the bounded variance assumption to only require that the $(1+\alpha)$-moment of the stochastic gradient is bounded by an \textit{arbitrary} upper semi-continuous function (see \cref{assumption-moment}). Our assumption allows stochastic gradients whose noise may not have a variance. Moreover, our assumption generalizes the noise assumption of \cite{bottou2018}, the expected smoothness assumption of \cite{gower2020} and \cite{khaled2020}, and the noise assumption of \cite{asi2019}. We also point out that we do not require the coercivity or asymptotic flatness assumptions that are commonly considered in the analysis of SGD (e.g., \cite{mertikopoulos2020}, Assumptions 2 and 3).
\item Owing to the relaxation in the assumptions, we cannot apply the standard analysis as the local H\"{o}lder constant and the iterate difference are conditionally dependent random variables (see the discussion after \cref{lemma-taylor-holder-bound}). As a result, by generalizing our previous techniques in \cite{patel2020,patel2021stochastic} to the $\alpha$-H\"{o}lder continuous setting, we innovate a new analysis strategy (see \cref{subsection-pseudo-global}) to prove that, with probability one, either SGD's iterates will converge to a stationary point \textit{or} they diverge (see \cref{theorem-global-convergence}). Importantly, our new analysis strategy can be broadly applied even to deterministic algorithms to relax the assumptions found in the literature.
%\item The convergence to a stationary point component of our \cref{theorem-global-convergence} is controversial; in particular, our results disallows the existence of limit cycles even for the special case of gradient descent (GD) with diminishing step size. However, in the continuous-time analysis of GD, there exists a beautiful example of continuous-time GD approaching a limit cycle \citep[Example 3]{palis2012}. Moreover, as  we intuitively expect SGD or GD with sufficiently small, diminishing step size to approximate continuous-time GD (as formalized in  \cite{ljung1977,benaim1999,benveniste2012,kushner2012,mertikopoulos2020}), we should expect SGD or GD to be able to converge to a limit cycle. Hence, we have arrived to a controversy. We reconcile this controversey by showing that the limit cycle for continuous-time GD on \cite{palis2012}'s Example 3 is a \textit{spurious solution} for GD with diminishing step sizes. In other words, \textit{GD with sufficiently small, diminishing step sizes is not necessarily a good approximation of its continuous counterpart}. 
\item The divergence component of our \cref{theorem-global-convergence} is somewhat disconcerting as we cannot say exactly what happens if the iterates diverge. To ameliorate this concern, we add an additional assumption (see \cref{assumption-stability}) and introduce another analysis strategy (see \cref{subsection-local}) to prove, even if the iterates are diverging, the objective function converges to a finite random variable with probability one (see \cref{theorem-stability}).
\end{enumerate}
Our results \textit{do not} supply a rate of convergence as this is impossible for the broad class of nonconvex functions and the generality of the noise models studied in this work \citep{wolpert1997}; in other words, we can always construct a nonconvex objective function, a noise process, and choose an initialization such that any rate of convergence statement will be violated. Indeed, we find it remarkable that it is even possible to provide a global convergence statement for such a broad class of nonconvex functions and general noise models.

\begin{table}[t]
\centering
\caption{A survey of influential, recent global analyses of SGD and their dependence on the two assumptions that are either individually or both violated by the simple examples discussed in \cref{section-introduction}. \label{table-survey}}
\begin{tabular}{@{}p{1.2in}p{3.5in}} \toprule
\textbf{Assumption} & \textbf{Works Depending on the Assumption} \\ \midrule
Global Lipschitz or H\"{o}lder Continuity of Gradient & \cite{reddi2016,ma2017,zhou2018,bassily2018,lei2019,li2019,gower2020,khaled2020,mertikopoulos2020,patel2020,jin2021,wang2021}. \\
Bounded Variance of Stochastic Gradients & \cite{reddi2016stochastic,ma2017,majewski2018,hu2019,bi2019stochastic,zou2019,mertikopoulos2020}.\tablefootnote{In \cite{majewski2018}, this is implied by their third assumption.} \\ \bottomrule
\end{tabular}
\end{table}

\paragraph{Limitations.} We make note of two important limitations in our work. First, we do not consider the important case of nonsmoothness in this work as we require that the gradients of the stochastic optimization function are continuous. However, we note that if the results of \cite{bianchi2022} are broadly applicable, then SGD never observes a point of nonsmoothness and our results would then be applicable. Second, we do not have a simple interpretation of \cref{assumption-stability}---though it seems to have a close relative in another analysis (see \cite{wang2020})---, nor have we been able to construct a relevant counterexample that can illuminate the limitations of this assumption.

\section{Stochastic Optimization} \label{section-so}
We consider solving the optimization problem
\begin{equation} \label{eqn-stochastic-optimization-problem}
\min_{\theta \in \mathbb{R}^p} \lbrace F(\theta) := \E{ f(\theta, X) } \rbrace,
\end{equation}
where $F$ maps $\mathbb{R}^p$ into $\mathbb{R}$; $f$ maps $\mathbb{R}^p$ and the co-domain of the random variable $X$ into $\mathbb{R}$; and $\mathbb{E}$ is the expectation operator. As we require gradients, we take $F$ and $f$ to differentiable in $\theta$, and denote its derivatives with respect to $\theta$ by $\dot F(\theta)$ and $\dot f (\theta, X)$. With this notation, we make the following general assumptions about the deterministic portion of the objective function.

\begin{assumption} \label{assumption-flb}
There exists $\flb \in \mathbb{R}$ such that $\forall \theta \in \mathbb{R}^p$, $\flb \leq F(\theta)$.
\end{assumption}

\begin{assumption} \label{assumption-local-holder}
There exists $\alpha \in (0,1]$ such that $\dot{F}(\theta)$ is locally $\alpha$-H\"{o}lder continuous.
\end{assumption}

\begin{remark}
For $\dot F$ to be locally $\alpha$-H\"{o}lder continuous for some $\alpha \in (0,1]$, for every compact set $K \subset \mathbb{R}^p$ there exists a constant $L \geq 0$ such that for every $\theta, \psi \in K$,
\begin{equation}
\norm{ \dot F(\theta) - \dot F (\psi) }_2 \leq L \norm{ \theta - \psi}_2^\alpha.
\end{equation}
\end{remark}

\begin{remark}
As an example, an empirical risk minimization problem for a deep neural network with twice continuously differentiable activation functions with a twice continuously differentiable loss function will readily satisfy the above conditions.
\end{remark}

\cref{assumption-flb,assumption-local-holder} would even be considered mild in the context of non-convex deterministic optimization, in which it is also common to assume that the objective function has well-behaved level sets \citep[e.g.,][Theorems 3.2, 3.8, 4.5, 4.6]{nocedal2006}. Importantly, \cref{assumption-local-holder} relaxes the common restrictive assumption of globally H\"{o}lder continuous gradient functions that is common in other analyses (see \cref{table-survey}).

Our final step is to make some assumptions about the stochastic portion of the objective function. The first assumption requires the stochastic gradients to be unbiased, which can readily be relaxed \citep{bottou2018}. The second assumption allows for a generic noise model for an $\alpha$-H\"{o}lder continuous gradient function, and even allows for the second moment to not exist when $\alpha < 1$ \citep[c.f.][ which requires a decomposition of the noise term that we do not require]{wang2021}.

\begin{assumption} \label{assumption-unbiased}
For all $\theta \in \mathbb{R}^p$, $\dot F(\theta) = \inlineE{ \dot f(\theta, X ) }$.
\end{assumption}

\begin{assumption} \label{assumption-moment}
Let $\alpha \in (0,1]$ be as in \cref{assumption-local-holder}. There exists an upper semi-continuous function $G(\theta)$ such that $\inlineE{ \inlinenorm{ \dot f(\theta,X) }_2^{1+\alpha}} \leq G(\theta)$.

\begin{remark}
For $G(\theta)$ to be upper semi-continuous, then for all $g > 0$, $\lbrace \theta \in \mathbb{R}^p : G(\theta) < g \rbrace$ are open in $\mathbb{R}^p$.
\end{remark}

\end{assumption}
\begin{simplebox}
\begin{center}
We will show that \cref{assumption-flb,assumption-local-holder,assumption-unbiased,assumption-moment} are sufficient for a global convergence result (see \cref{theorem-global-convergence}). 
\end{center}
\end{simplebox}

\begin{remark}
As shown in \S\ref{section-counter-examples-details}, our examples from \S\ref{section-introduction} satisfy \cref{assumption-flb,assumption-local-holder,assumption-unbiased,assumption-moment}.
\end{remark}

\begin{remark}
It is entirely possible that $\inlineE{ \inlinenorm{ \dot f(\theta,X)}_2^{1+\alpha}}$ is (at least) upper semi-continuous, and to set $G(\theta)$ equal to this function. In the case that $\inlineE{ \inlinenorm{ \dot f(\theta,X)}_2^{1+\alpha}}$ is not upper semi-continuous, it is possible to specify $G(\theta)$ as the upper envelope of $\inlineE{ \inlinenorm{ \dot f(\theta,X)}_2^{1+\alpha}}$ (i.e., its limit supremum function). However, it is unlikely that $\inlineE{ \inlinenorm{ \dot f(\theta,X) }_2^{1+\alpha}}$ nor its upper envelope are easy to specify explicitly, and it is more likely to be able to find an upper bound.
\end{remark}

\begin{remark}
We use \cref{assumption-moment} to conclude that on any compact set, the $1+\alpha$ moment of the stochastic gradient is bounded. Of course, we can assume this directly (i.e., on any compact set, the $1+\alpha$ moment is bounded), which, at first glance, appears to be a relaxation. However, if we assume that on any compact set, the $1+\alpha$ moment is bounded, we can use this to construct a $G(\theta)$ that is upper semi-continuous. Thus, the two assumptions are equivalent.
\end{remark}

\begin{remark}
A simple example that shows the utility of \cref{assumption-moment} is to optimize $\inlineE{ \theta^X }$ where $X$ is an exponential random variable with parameter $1$ and $\theta \in [1,u]$ where $u < \exp(1)$. First, it is easy to confirm that the objective function is differentiable and its derivative is globally Lipschitz continuous. Moreover, given that we are on a bounded interval, we conclude that the derivative is globally $\alpha$-H\"{o}lder continuous for any $\alpha \in (0,1]$; therefore, we are free to choose the $\alpha$ as we see fit. Now, when $u < \exp(1/2)$, we have that second moment of the stochastic gradient function exists. However, when $\exp(1/2) < u < \exp(1)$, only smaller moments of the stochastic gradient will exist. Specifically, only for $u < \exp(1/(1+\epsilon))$ with $\epsilon \in (0,1)$ will the $1+\epsilon$ moment of the stochastic gradient will exist. Thus, depending on the size of our interval, we may not have the existence of the second moment, and, consequently, we may not have the existence of the variance.
\end{remark}

In order to show that the objective function cannot diverge (i.e., to prove stability), we will need an additional assumption. This assumption will relate the gradient function, noise model and variation on the local H\"{o}lder constant. To begin, we define the variation on the local H\"{o}lder constant. Let $\alpha \in (0,1]$ be as in \cref{assumption-local-holder} and $\epsilon > 0$ be arbitrary, and define
\begin{equation} \label{eqn-lesp}
\lesp(\theta) = \begin{cases}
\sup_{\varphi} \left\lbrace \frac{\norm{ \dot F(\varphi) - \dot F(\theta)}_2}{\norm{ \varphi - \theta }_2^\alpha} : \norm{ \varphi - \theta}_2 \leq (G(\theta) \vee \epsilon )^{\frac{1}{1+\alpha}} \right\rbrace & \text{if this quantity is nonzero} \\
\epsilon & \text{otherwise},
\end{cases}
\end{equation}
where $\vee$ indicates the maximum between two quantities. Note, the choice of $\epsilon$ is irrelevant, and they can be distinct for the two cases in the definition of $\lesp$, but we fix them to be the same for simplicity. Importantly, the quantity, $\lesp$, is defined at every parameter $\theta$ under \cref{assumption-local-holder}.

With this quantity, we can state a nonintuitive, technical assumption that is needed to prove stability.
\begin{assumption} \label{assumption-stability}
There exists $C_1, C_2, C_3 \geq 0$ such that, $\forall \theta \in \mathbb{R}^p$,
\begin{equation}
\lesp(\theta) G(\theta) + \alpha \left( \frac{\norm{\dot F(\theta) }_2^{1+\alpha}}{\lesp(\theta)} \right)^{1/\alpha} \leq C_1 + C_2 (F(\theta) - \flb) + C_3 \norm{ \dot F(\theta) }_2^2.
\end{equation}
\end{assumption}

\cref{assumption-stability} generalizes Assumption 4.3(c) of \cite{bottou2018}, which is satisfied for a large swath of statistical models. Moreover, \cref{assumption-stability} generalizes the notion of expected smoothness \citep[see][for a history of the assumption]{khaled2020}, which expanded the optimization problems covered by the theory of \cite{bottou2018}. Note, \cref{assumption-stability} is about the asymptotic properties of the stochastic optimization problem as the left hand side of the inequality in \cref{assumption-stability} can be bounded inside of any compact set. Thus, \cref{assumption-stability} covers a variety of asymptotic behaviors, such as $\exp(\inlinenorm{\theta}_2^2)$, $\exp(\inlinenorm{\theta}_2)$, $\inlinenorm{\theta}_2^r$ for $r \in \mathbb{R}$, $\log(\inlinenorm{\theta}_2 + 1)$, and
$\log(\log(\inlinenorm{\theta}_2+1) + 1)$.
Therefore, \cref{assumption-stability} holds for functions with a variety of different asymptotic behaviors.

\begin{simplebox}
\begin{center}
We will show that \cref{assumption-flb,assumption-local-holder,assumption-unbiased,assumption-moment,assumption-stability} are sufficient for a stability result (see \cref{theorem-stability}).
\end{center}
\end{simplebox}

Now that we have specified the nature of the stochastic optimization problem, we turn our attention to the algorithm used to solve the problem, namely, stochastic gradient descent (SGD).

\section{Stochastic Gradient Descent} \label{section-sgd}
SGD starts with an arbitrary initial value, $\theta_0 \in \mathbb{R}^p$, and generates a sequence of iterates $\lbrace \theta_k : k \in \mathbb{N} \rbrace$ according to the rule 
\begin{equation} \label{eqn-sgd-update}
\theta_{k+1} = \theta_k - M_k \dot f(\theta_k, X_{k+1}),
\end{equation}
where $\lbrace M_k : k+1 \in \mathbb{N} \rbrace \subset \mathbb{R}^{ p \times p }$; and $\lbrace X_k : k \in \mathbb{N} \rbrace$ are independent and identically distributed copies of $X$. Importantly, $\lbrace M_k \rbrace$ cannot be arbitrary, and the following properties specify a generalization of the \cite{robbins1951} conditions for matrix-valued learning rates \citep[c.f.][]{patel2020}. 

The first condition requires a positive learning rate, and imposes symmetry to ensure the existence of real eigenvalues.
\begin{property} \label{prop-psd}
$\lbrace M_k : k+1 \in \mathbb{N} \rbrace$ are symmetric, positive definite matrices.
\end{property}
The next two properties are a natural generalization of the Robbins-Monro conditions. Let $\alpha \in (0,1]$ be as in \cref{assumption-local-holder}.
\begin{property} \label{prop-max-eig}
Let $\lambda_{\max}(\cdot)$ denote the largest eigenvalue of a symmetric, positive definite matrix. Then, $\sum_{k=0}^\infty \lambda_{\max}(M_k)^{1+\alpha} =: S < \infty$.
\end{property}
\begin{property} \label{prop-min-eig}
Let $\lambda_{\min}(\cdot)$ denote the smallest eigenvalue of a symmetric, positive definite matrix. Then, $\sum_{k=0}^\infty \lambda_{\min}(M_k) = \infty$.
\end{property}

\begin{simplebox}
\begin{center}
We will show that \cref{prop-psd,prop-max-eig,prop-min-eig} are sufficient for a global convergence result (see \cref{theorem-global-convergence}). 
\end{center}
\end{simplebox}

The final property ensures the stability of the condition number of $\lbrace M_k \rbrace$. Note, this property is readily satisfied for scalar learning rates satisfying the Robbins-Monro conditions.
\begin{property} \label{prop-cond-num}
Let $\kappa(\cdot)$ denote the ratio of the largest and smallest eigenvalues of a symmetric, positive definite matrix. Then, $\lim_{k \to \infty} \lambda_{\max}(M_k)^\alpha \kappa(M_k) = 0$.
\end{property}

\begin{simplebox}
\begin{center}
We will show that \cref{prop-psd,prop-max-eig,prop-min-eig,prop-cond-num} are sufficient for stability (see \cref{theorem-stability}).
\end{center}
\end{simplebox}

\section{Global Convergence \& Stability} \label{section-convergence-stability}
With the stochastic optimization problem and with stochastic gradient descent (SGD) specified, we now turn our attention to what happens when SGD is applied to a stochastic optimization problem. The key step in the analysis of SGD on any objective function is to establish a bound between the optimality gap at $\theta_{k+1}$ with that of $\theta_k$. This step is achieved by using the local H\"{o}lder continuity of the gradient function and the fundamental theorem of calculus. Using \cref{assumption-local-holder}, we first specify the local H\"{o}lder constant.
\begin{definition} \label{def-local-holder-constant}
For any $\theta, \varphi \in \mathbb{R}^p$, define
\begin{equation}
L(\theta, \varphi) = \sup_{\psi} \left\lbrace \frac{\norm{ \dot F (\psi) - \dot F (\theta) }_2}{ \norm{ \psi - \theta}_2^\alpha} : \psi \in \overline{B(\theta, \norm{\varphi - \theta}_2)} \right\rbrace,
\end{equation}
where $B(\theta, r)$ is an open ball around $\theta$ of radius $r > 0$, and $\overline{B(\theta,r)}$ is its closure. Moreover, for any $R \geq 0$, let $L_R$ be the supremum of $L(\theta, \varphi)$ for any distinct $\theta,\varphi \in \overline{B(0,R)}$.
\end{definition}
\begin{remark}
Note, when the gradient is locally H\"{o}lder continuous, $L_R$ is finite for any $R \geq 0$.
\end{remark}
With this definition, we can now relate the optimality gap of $\theta_{k+1}$ with that of $\theta_k$ by using the following result and proved in \cref{section-technical-lemmas}.
\begin{lemma} \label{lemma-taylor-holder-bound}
Suppose \cref{assumption-flb,assumption-local-holder} hold. Then, for any $\theta, \varphi \in \mathbb{R}^p$,
\begin{equation}
F(\varphi) - \flb \leq F(\theta) - \flb + \dot F(\theta)'(\varphi - \theta) + \frac{L(\theta, \varphi)}{1+\alpha} \norm{ \varphi - \theta}_2^{1+\alpha}.
\end{equation}
\end{lemma}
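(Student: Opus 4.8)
The plan is to apply the fundamental theorem of calculus to $F$ along the line segment from $\theta$ to $\varphi$, and then to control the resulting integral using the definition of the local H\"older constant $L(\theta,\varphi)$ from \cref{def-local-holder-constant}. First I would dispose of the degenerate case $\theta = \varphi$, for which both sides of the claimed inequality reduce to $F(\theta) - \flb$ and there is nothing to prove. For $\theta \neq \varphi$, \cref{assumption-local-holder} guarantees that $\dot F$ is continuous, so the map $t \mapsto \dot F(\theta + t(\varphi - \theta))'(\varphi - \theta)$ is continuous on $[0,1]$ and the fundamental theorem of calculus gives
\begin{equation}
F(\varphi) - F(\theta) = \int_0^1 \dot F(\theta + t(\varphi - \theta))'(\varphi - \theta)\,dt.
\end{equation}

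Next I would subtract the linear term $\dot F(\theta)'(\varphi-\theta)$, written as the integral of a constant, from both sides to obtain
\begin{equation}
F(\varphi) - F(\theta) - \dot F(\theta)'(\varphi-\theta) = \int_0^1 \left[ \dot F(\theta + t(\varphi-\theta)) - \dot F(\theta) \right]'(\varphi-\theta)\,dt,
\end{equation}
and bound the integrand by Cauchy--Schwarz, so that the inner product is at most $\norm{\dot F(\theta+t(\varphi-\theta)) - \dot F(\theta)}_2 \norm{\varphi-\theta}_2$. The key observation is that for every $t \in [0,1]$ the point $\psi = \theta + t(\varphi-\theta)$ satisfies $\norm{\psi-\theta}_2 = t\,\norm{\varphi-\theta}_2 \leq \norm{\varphi-\theta}_2$, hence $\psi$ lies in the closed ball $\overline{B(\theta,\norm{\varphi-\theta}_2)}$ over which $L(\theta,\varphi)$ is defined. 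Invoking the defining supremum pointwise in $t$ yields $\norm{\dot F(\theta+t(\varphi-\theta)) - \dot F(\theta)}_2 \leq L(\theta,\varphi)\, t^\alpha \norm{\varphi-\theta}_2^\alpha$. Substituting this and evaluating the elementary integral $\int_0^1 t^\alpha\,dt = 1/(1+\alpha)$ produces $\frac{L(\theta,\varphi)}{1+\alpha}\norm{\varphi-\theta}_2^{1+\alpha}$, and adding $F(\theta)-\flb$ to both sides gives the claim.

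I do not expect a serious obstacle; the only points requiring care are the geometric observation that the segment stays inside the ball, which is what licenses the pointwise use of the supremum defining $L(\theta,\varphi)$, and the verification that local H\"older continuity of $\dot F$ (which simultaneously gives continuity of $\dot F$ and finiteness of $L(\theta,\varphi)$ on the compact segment) legitimizes both the fundamental theorem of calculus and the bounding of the integral. The H\"older exponent enters only through $\int_0^1 t^\alpha\,dt$, which replaces the familiar factor of $1/2$ from the classical Lipschitz descent lemma ($\alpha = 1$) by $1/(1+\alpha)$.
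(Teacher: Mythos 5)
Your proposal is correct and follows essentially the same route as the paper's proof: the fundamental theorem of calculus along the segment, adding and subtracting $\dot F(\theta)$, bounding the increment by $L(\theta,\varphi)\,t^\alpha \norm{\varphi-\theta}_2^\alpha$ via \cref{def-local-holder-constant}, and evaluating $\int_0^1 t^\alpha\,dt = 1/(1+\alpha)$. The only differences are presentational---you make explicit the Cauchy--Schwarz step, the degenerate case $\theta=\varphi$, and the observation that the segment lies in $\overline{B(\theta,\norm{\varphi-\theta}_2)}$, all of which the paper leaves implicit.
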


Now, if we simply set $\varphi = \theta_{k+1}$ and $\theta = \theta_{k}$ in \cref{lemma-taylor-holder-bound} and try to take expectations to manage the randomness of the stochastic gradient, we will run into the problem that $L(\theta_{k},\theta_{k+1})$ and $\norm{\theta_{k+1} - \theta_k}_2$ are potentially dependent,\footnote{While it is possible that these two terms are independent, we would require a lot more information to determine this and it would likely be on an iterate-by-iterate basis for the general class of problems considered in this work. Thus, in this general setting, we cannot assume independence and need to default to treating these terms as dependent.} and we cannot compute its expectation. In previous work, this technical challenge is waived away by using a global H\"{o}lder constant to upper bound $L(\theta_k, \theta_{k+1})$, which is unrealistic even for simple problems (see \cref{section-introduction}).

To address this technical challenge, we innovate two new strategies for handling the dependence between $L(\theta_k, \theta_{k+1})$ and $\norm{\theta_{k+1} - \theta_k}_2$.  In both strategies, we follow the same general approach:
\begin{enumerate}[noitemsep,parsep=0em,topsep=0em]
\item We begin by restricting our analysis to specific events, which will allow us to decouple $L(\theta_k, \theta_{k+1})$ and $\inlinenorm{\theta_{k+1} - \theta_k}_2$.
\item With these two quantities decoupled, we will develop a recurrence relationship between the optimality gap at $\theta_{k+1}$ and that of $\theta_k$.
\item We apply this recurrence relationship with refinements of standard arguments or new ones to derive the desired property about the objective function.
\item Finally, we state the generality of the specific events on which we have studied SGD's iterates.
\end{enumerate}
Thus, it follows, we will define two distinct series of events for the two strategies. The first strategy, which we refer to as the pseudo-global strategy, will provide the global convergence analysis. The second strategy, which we refer to as the local strategy, will provide the stability result.

\subsection{Pseudo-Global Strategy and Global Convergence Analysis} \label{subsection-pseudo-global}

For the first strategy, which supplies the global convergence result, we study SGD on the events
\begin{equation} \label{eqn-iterates-in-ball}
\mathcal{B}_k(R) := \bigcap_{j=0}^k \left\lbrace \norm{\theta_j}_2 \leq R \right\rbrace, ~k +1 \in \mathbb{N},
\end{equation}
for every $R \geq 0$. We now try to control the optimality gap at iteration $k+1$ with that of iteration $k$, which will result in two cases. 
\begin{enumerate}[noitemsep,parsep=0em,topsep=0em]
\item (Case 1) $\mathcal{B}_{k+1}(R)$ holds. We can bound $L(\theta_k,\theta_{k+1})$ by $L_R$, and $G(\theta)$ is also bounded in the ball of radius $R$ about the origin (which follows from $G$ being upper semi-continuous in \cref{assumption-moment}). As a result, we could then proceed with the analysis in a manner that is similar to having a global H\"{o}lder constant.
\item (Case 2) $\inlinenorm{\theta_{k+1}}_2 > R$ and $\mathcal{B}_{k}(R)$ holds. In this case, controlling $L(\theta_k, \theta_{k+1})$ is very challenging and, to our knowledge, was not solved before our work.
\end{enumerate} 
Our approach for controlling the optimality gap in both cases is supplied in the next lemma, whose proof is in \cref{section-global-convergence-analysis}.

\begin{lemma}\label{lemma-recursion-constrained}
Let $\lbrace M_k \rbrace$ satisfy \cref{prop-psd}.
Suppose \cref{assumption-flb,assumption-local-holder,assumption-unbiased,assumption-moment} hold. 
Let $\lbrace \theta_k \rbrace$ satisfy \cref{eqn-sgd-update}.
Then, $\forall R \geq 0$,
\begin{equation}
\begin{aligned}
&\cond{ [F(\theta_{k+1}) - \flb] \1{ \mathcal{B}_{k+1}(R) } }{\mathcal{F}_k} \leq [F(\theta_k) - \flb] \1{ \mathcal{B}_{k}(R) } \\
&\quad - \lambda_{\min}(M_k) \norm{ \dot{F}(\theta_k)}_2^2 \1{\mathcal{B}_k(R)} + \frac{L_{R+1} + \partial F_R}{1+\alpha} \lambda_{\max}(M_k)^{1+\alpha} G_R,
\end{aligned}
\end{equation}
where $G_R = \sup_{\theta \in \overline{B(0,R)}} G(\theta) < \infty$ with $G(\theta)$; and $\partial F_R  = \sup_{ \theta \in \overline{B(0,R)} } \inlinenorm{ \dot{F}(\theta)}_2 (1+\alpha) < \infty$.
\end{lemma}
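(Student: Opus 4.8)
The plan is to apply \cref{lemma-taylor-holder-bound} with $\varphi = \theta_{k+1}$ and $\theta = \theta_k$, multiply through by a carefully chosen indicator, and take the conditional expectation $\cond{\cdot}{\mathcal{F}_k}$ with respect to the natural filtration (for which $\theta_k$ and $M_k$ are measurable while $X_{k+1}$ is independent). Since $\mathcal{B}_{k+1}(R) \subseteq \mathcal{B}_k(R)$ and $\mathcal{B}_k(R)$ is $\mathcal{F}_k$-measurable, on $\mathcal{B}_k(R)^c$ the left-hand side vanishes while the right-hand side is nonnegative, so it suffices to argue on $\mathcal{B}_k(R)$ and carry the factor $\1{\mathcal{B}_k(R)}$ throughout. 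Writing $\Delta_k = \theta_{k+1} - \theta_k = -M_k \dot f(\theta_k, X_{k+1})$, the three pieces I must control after taking expectations are the drift $F(\theta_k) - \flb$, the inner product $\dot F(\theta_k)'\Delta_k$, and the Hölder remainder $\tfrac{L(\theta_k,\theta_{k+1})}{1+\alpha}\norm{\Delta_k}_2^{1+\alpha}$, the last of which carries the problematic coupling between $L(\theta_k,\theta_{k+1})$ and $\norm{\Delta_k}_2$.

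The crucial first move, which dictates the appearance of $L_{R+1}$ rather than $L_R$, is to relax the indicator before expanding. Because $F - \flb \geq 0$ and $\mathcal{B}_{k+1}(R)$ forces $\norm{\theta_{k+1}}_2 \leq R \leq R+1$, I have $[F(\theta_{k+1}) - \flb]\1{\mathcal{B}_{k+1}(R)} \leq [F(\theta_{k+1}) - \flb]\1{\mathcal{B}_k(R)}\1{\norm{\theta_{k+1}}_2 \leq R+1}$. On the event $\mathcal{B}_k(R) \cap \lbrace\norm{\theta_{k+1}}_2 \leq R+1\rbrace$ both $\theta_k$ and $\theta_{k+1}$ lie in $\overline{B(0,R+1)}$, so the dependent random constant $L(\theta_k,\theta_{k+1})$ is deterministically bounded by $L_{R+1}$; as the remainder is nonnegative I may substitute this bound and then drop the extra indicator. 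The Hölder remainder thus reduces to $\tfrac{L_{R+1}}{1+\alpha}\norm{\Delta_k}_2^{1+\alpha}\1{\mathcal{B}_k(R)}$, whose conditional expectation is at most $\tfrac{L_{R+1}}{1+\alpha}\lambda_{\max}(M_k)^{1+\alpha} G_R$ via $\norm{\Delta_k}_2 \leq \lambda_{\max}(M_k)\norm{\dot f(\theta_k,X_{k+1})}_2$, \cref{assumption-moment}, and $G(\theta_k) \leq G_R$ on $\mathcal{B}_k(R)$. The drift term is disposed of by dropping its indicator, leaving the $\mathcal{F}_k$-measurable quantity $[F(\theta_k)-\flb]\1{\mathcal{B}_k(R)}$.

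The inner-product term is the main obstacle, since $\1{\norm{\theta_{k+1}}_2 \leq R+1}$ still couples with $\Delta_k$ through $X_{k+1}$ and cannot be pulled out of the expectation. I would split $\1{\norm{\theta_{k+1}}_2 \leq R+1} = 1 - \1{\norm{\theta_{k+1}}_2 > R+1}$. The unconstrained piece $\dot F(\theta_k)'\Delta_k\1{\mathcal{B}_k(R)}$ is exactly what \cref{assumption-unbiased} is designed for: its conditional expectation equals $-\1{\mathcal{B}_k(R)}\dot F(\theta_k)' M_k \dot F(\theta_k) \leq -\lambda_{\min}(M_k)\norm{\dot F(\theta_k)}_2^2\1{\mathcal{B}_k(R)}$ by \cref{prop-psd}, producing the descent term. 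For the correction $-\dot F(\theta_k)'\Delta_k\1{\mathcal{B}_k(R)}\1{\norm{\theta_{k+1}}_2 > R+1}$, Cauchy--Schwarz bounds it by $\norm{\dot F(\theta_k)}_2\norm{\Delta_k}_2$ times the indicators. The observation that rescues this term, and the reason the radius-$(R+1)$ relaxation was worth making, is that on $\mathcal{B}_k(R)\cap\lbrace\norm{\theta_{k+1}}_2 > R+1\rbrace$ the reverse triangle inequality gives $\norm{\Delta_k}_2 \geq \norm{\theta_{k+1}}_2 - \norm{\theta_k}_2 > (R+1)-R = 1$, whence $\norm{\Delta_k}_2 \leq \norm{\Delta_k}_2^{1+\alpha}$. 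This converts the otherwise uncontrollable first power of the step, for which no moment bound is available when $\alpha<1$, into the $(1+\alpha)$-power that \cref{assumption-moment} governs; together with $\norm{\dot F(\theta_k)}_2(1+\alpha) \leq \partial F_R$ on $\mathcal{B}_k(R)$, the correction's conditional expectation is at most $\tfrac{\partial F_R}{1+\alpha}\lambda_{\max}(M_k)^{1+\alpha}G_R$.

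Collecting the three contributions yields the stated recurrence, with the two error pieces combining into $\tfrac{L_{R+1}+\partial F_R}{1+\alpha}\lambda_{\max}(M_k)^{1+\alpha}G_R$. The finiteness claims $G_R < \infty$, $\partial F_R < \infty$, and $L_{R+1} < \infty$ follow, respectively, from upper semi-continuity of $G$ (\cref{assumption-moment}), continuity of $\dot F$ (implied by \cref{assumption-local-holder}), and local H\"{o}lder continuity (the remark after \cref{def-local-holder-constant}), each supremized over a compact ball. I anticipate the only delicate bookkeeping is verifying that every indicator relaxation is in the direction consistent with nonnegativity of the multiplied quantity, and that the coupling indicator is discarded only on terms where doing so increases the value.
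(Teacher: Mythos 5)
Your proposal is correct and follows essentially the same approach as the paper's proof: the radius-$(R+1)$ enlargement so that $L_{R+1}$ controls the local H\"{o}lder constant, the observation that escaping from $\overline{B(0,R)}$ past radius $R+1$ forces $\inlinenorm{\theta_{k+1}-\theta_k}_2 > 1$ so that Cauchy--Schwarz together with $\inlinenorm{\theta_{k+1}-\theta_k}_2 \leq \inlinenorm{\theta_{k+1}-\theta_k}_2^{1+\alpha}$ converts the uncontrollable first-power term into a $(1+\alpha)$-moment term, and the same constants $\partial F_R$ and $G_R$ handled by \cref{assumption-moment} and \cref{prop-psd}. The only difference is bookkeeping: the paper decomposes $\1{\mathcal{B}_{k+1}(R)} = \1{\mathcal{B}_k(R)} + \bigl(\1{\mathcal{B}_{k+1}(R)} - \1{\mathcal{B}_k(R)}\bigr)$ and needs a small contrapositive argument (via \cref{lemma-taylor-holder-bound} and \cref{assumption-flb}) to show that the difference term contributes only when $\inlinenorm{\theta_{k+1}}_2 > R+1$, whereas you relax the left-hand indicator to $\1{\mathcal{B}_k(R)}\1{\inlinenorm{\theta_{k+1}}_2 \leq R+1}$ upfront and let the escape event emerge directly from splitting the inner-product term, which avoids that contrapositive step entirely.
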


With this recursion and standard martingale results \citep[Exercise II.4]{robbins1971,neveu1975}, the limit of $[F(\theta_k) - \flb]\1{ \mathcal{B}_k(R)}$ exists with probability one and is finite for every $R \geq 0$. As a result, the limit of $F(\theta_k) - \flb$ exists and is finite on the event $\lbrace \sup_k \inlinenorm{\theta_k}_2 < \infty \rbrace$ (see \cref{corollary-convergence-objective-constrained}).

We can also use \cref{lemma-recursion-constrained} to make a statement about the gradient. Specifically, we can show that the limit infimum of $\inlineE{ \inlinenorm{ \dot F(\theta_k) }_2^2 \1{\mathcal{B}_k(R)}}$ must be zero, which is now a standard argument that mimics Zoutendijk's theorem \citep[Theorem 3.2]{nocedal2006}. 
By Markov's inequality, this result implies that $\inlinenorm{\dot F(\theta_k)}_2 \1{\mathcal{B}_k(R)}$ gets arbitrarily close to $0$ infinitely often (see \cref{lemma-liminf-grad-constrained}). 
To show convergence to zero, however, is not standard. Several strategies have been developed, namely those of \cite{li2019,lei2019,mertikopoulos2020,patel2020,patel2021stochastic}. Unfortunately, the approaches of \cite{li2019,lei2019} rely intimately on the existence of a global H\"{o}lder constant, while that of \cite{mertikopoulos2020} requires even more restrictive assumptions. Fortunately, the approach of \cite{patel2020,patel2021stochastic} can be improved and generalized to the current context (see \cref{lemma-limsup-grad-constrained}). 
Thus, we show that $\lim_{k \to \infty} \inlinenorm{ \dot F(\theta_k)}_2 = 0$ on $\lbrace \sup_k \inlinenorm{\theta_k}_2 < \infty \rbrace$ (see \cref{corollary-convergence-grad-constrained}).

Our final step is to clarify the role of $\lbrace \sup_k \inlinenorm{\theta_k}_2 < \infty \rbrace$ in the asymptotics of SGD's iterates. At first glance, this event seems to imply that the iterates converge to a point. However, owing to the general nature of the noise, it is also possible, say, that the iterates approach a limit cycle or oscillate between points with the same norm. Even beyond this event, the generality of the noise model may allow for substantial excursions between $\flb$ and infinity (c.f., a simple random walk, which has a limit supremum of infinity and a limit infinimum of negative infinity). Thankfully, we can prove that either the iterates converge to a point or they must diverge---a result that we refer to as the Capture Theorem (see \cref{section-global-convergence-analysis}).

\begin{theorem}[Capture Theorem] \label{theorem-capture}
Let $\lbrace \theta_k \rbrace$ be defined as in \cref{eqn-sgd-update}, and let $\lbrace M_k \rbrace$ satisfy \cref{prop-psd,prop-max-eig}. If \cref{assumption-moment} holds, then either $\lbrace \lim_{k \to \infty} \theta_k ~\mathrm{exists} \rbrace$ or $\lbrace \liminf_{k\to\infty} \inlinenorm{ \theta_k}_2 = \infty \rbrace$ must occur.
\end{theorem}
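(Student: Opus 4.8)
The plan is to establish the dichotomy by showing that its complement is a null event. Because $\liminf_{k}\inlinenorm{\theta_k}_2$ takes values in $[0,\infty]$, the negation of the statement of \cref{theorem-capture} is exactly $\mathcal N := \lbrace \lim_{k\to\infty}\theta_k \text{ does not exist}\rbrace \cap \lbrace \liminf_{k\to\infty}\inlinenorm{\theta_k}_2 < \infty\rbrace$, so it suffices to prove $\Prb{\mathcal N} = 0$. I would write $\mathcal N$ as the countable union over $R \in \mathbb N$ of $\mathcal N \cap \lbrace \liminf_k \inlinenorm{\theta_k}_2 < R\rbrace$ and show each term is null; this localizes the whole argument to how the iterates behave relative to the balls $\overline{B(0,R)}$ and lets me use the upper-semicontinuity bound $G_R := \sup_{\overline{B(0,R)}} G < \infty$ from \cref{assumption-moment}.

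The one quantitative input I would extract from the hypotheses is a summability bound on the increments \emph{while the iterate sits in a bounded ball}. From \cref{eqn-sgd-update} and \cref{prop-psd}, $\inlinenorm{\theta_{k+1}-\theta_k}_2 \le \lambda_{\max}(M_k)\inlinenorm{\dot f(\theta_k,X_{k+1})}_2$; raising this to the power $1+\alpha$, conditioning on $\mathcal F_k$, and invoking \cref{assumption-moment} gives $\cond{\inlinenorm{\theta_{k+1}-\theta_k}_2^{1+\alpha}}{\mathcal F_k} \le \lambda_{\max}(M_k)^{1+\alpha} G(\theta_k)$. Since the instantaneous indicator $\1{\inlinenorm{\theta_k}_2 \le R}$ is $\mathcal F_k$-measurable and forces $G(\theta_k) \le G_R$, multiplying through, summing over $k$, taking expectations, and using \cref{prop-max-eig} yields $\E{\sum_{k} \inlinenorm{\theta_{k+1}-\theta_k}_2^{1+\alpha}\1{\inlinenorm{\theta_k}_2 \le R}} \le G_R S < \infty$. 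Hence $\sum_k \inlinenorm{\theta_{k+1}-\theta_k}_2^{1+\alpha}\1{\inlinenorm{\theta_k}_2 \le R} < \infty$ almost surely; in particular, along any stretch of times during which the iterate stays in $\overline{B(0,R)}$ the increments tend to zero. This is the counterpart here of the step term in \cref{lemma-recursion-constrained}, stripped of the descent contributions and hence available under the weaker hypotheses of \cref{theorem-capture}.

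On $\mathcal N \cap \lbrace \liminf_k \inlinenorm{\theta_k}_2 < R\rbrace$ the iterate returns to $\overline{B(0,R)}$ infinitely often, so convergence fails in one of two ways, and I would exclude each. Either (a) $\limsup_k \inlinenorm{\theta_k}_2 = \infty$, so the iterate makes infinitely many excursions out past $\overline{B(0,R+1)}$ and back; or (b) the iterate is bounded but not Cauchy, so it has two distinct subsequential limits $a \ne b$. In case (b), because the increments vanish on the (now global) bounded stretch, the set of subsequential limits is a connected compact set, and non-convergence forces the iterate to cross the fixed-width gap between neighborhoods of $a$ and $b$ infinitely often. Both (a) and (b) thus reduce to excluding infinitely many crossings of an annulus of fixed width, which I would mark with stopping times and attempt to rule out by pairing the crossing count against a finite, monotone resource supplied by the asymptotics of the objective and gradient along the iterates (\cref{corollary-convergence-objective-constrained,corollary-convergence-grad-constrained}): each crossing should draw down a bounded-below amount of accumulated descent, which cannot recur infinitely given the convergence of $F(\theta_k)$.

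I expect the crossing exclusion, especially case (b), to be the main obstacle. The increment budget of the second paragraph does \emph{not} by itself bound the number of crossings: a crossing assembled from many tiny steps of fixed total displacement consumes an arbitrarily small share of $\sum_k\inlinenorm{\theta_{k+1}-\theta_k}_2^{1+\alpha}$, so that sum can stay finite across infinitely many crossings. Controlling the crossing count therefore requires a genuinely different resource that every crossing must spend in a uniformly bounded-below amount, and coupling such a resource to the random crossing times runs directly into the dependence between the local step size and the iterate's position — the same coupling that obstructs the naive use of \cref{lemma-taylor-holder-bound} — which is precisely why the entire argument must be carried out on the localized events rather than globally.
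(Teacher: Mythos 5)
Your plan stalls exactly where you admit it does, and the resource you propose to finish it with cannot work, for two reasons. First, the corollaries you want to draw on (\cref{corollary-convergence-objective-constrained,corollary-convergence-grad-constrained}) are not available under the hypotheses of \cref{theorem-capture}: the theorem assumes only \cref{prop-psd,prop-max-eig} and \cref{assumption-moment}, whereas those corollaries also require \cref{assumption-flb,assumption-local-holder,assumption-unbiased} (and, for the gradient one, \cref{prop-min-eig}); the paper deliberately proves capture from the noise-moment bound alone. Second, and more fundamentally, even if you imported all of those assumptions, a descent budget cannot exclude your case (b). The scenario the theorem must rule out---flagged explicitly in the discussion preceding it---is the iterate drifting forever along a connected set of accumulation points on which $F$ is constant and $\dot F$ vanishes (a limit cycle, or oscillation among points of equal norm). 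Such motion is perfectly compatible with $F(\theta_k)$ converging and $\dot F(\theta_k)\to 0$: traversing a level set consumes no objective decrease, and the unbounded total step budget $\sum_k \lambda_{\min}(M_k)=\infty$ leaves room for infinitely many crossings. So there is no quantity that each crossing draws down by a uniformly positive amount; your own last paragraph senses this, and it is fatal to the plan rather than a technicality to be patched.

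The paper's proof uses a different observable altogether and never counts crossings of $\inlinenorm{\theta_k}_2$, nor does it invoke $F$ or $\dot F$ at all. Fix an arbitrary reference point $\bar\theta\in\mathbb{R}^p$. By the reverse triangle inequality, \cref{eqn-sgd-update}, Markov's inequality, and \cref{assumption-moment} (upper semicontinuity makes $G$ bounded on balls), the one-step jump event $\lbrace \inlinenorm{\theta_{k+1}-\bar\theta}_2 \geq R+\gamma,\ \inlinenorm{\theta_k-\bar\theta}_2\leq R\rbrace$ has probability at most $\gamma^{-(1+\alpha)}\inlinenorm{M_k}_2^{1+\alpha}G_{R+\inlinenorm{\bar\theta}_2}$, which is summable in $k$ by \cref{prop-max-eig}. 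Borel--Cantelli plus countable unions over rational $R$ and over $\gamma_n\downarrow 0$ then yield that, almost surely, the scalar process $\inlinenorm{\theta_k-\bar\theta}_2$ has no gap between its limit inferior and limit superior, i.e., it converges in $[0,\infty]$. Running this simultaneously for $p+1$ fixed centers not contained in any hyperplane of dimension below $p$: either all radial limits are infinite, which is exactly $\liminf_k\inlinenorm{\theta_k}_2=\infty$, or some (hence all) are finite, and trilateration forces the accumulation set to be a single point, i.e., $\lim_k\theta_k$ exists. The idea you are missing is this change of observable: distances to finitely many fixed external points are rigid enough that convergence of each of them forces convergence of the iterate, with no appeal to any descent structure.

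One caveat so that you draw the right lesson from the comparison: the obstruction you isolated---an annulus crossing assembled from many vanishing steps spends arbitrarily little of any summable functional of the increments---is genuinely the crux, and it is also the most compressed point of the paper's own argument. The Borel--Cantelli step excludes only infinitely many \emph{one-step} jumps across a fixed annulus $[R, R+\gamma_n]$; a trajectory crossing gradually triggers none of those events, so the paper's passage from ``finitely many one-step jumps'' to ``no limsup/liminf gap'' is precisely where a careful reader should press. Your instinct about where the difficulty lives is therefore sound; what your proposal lacks is the reduction to scalar radial processes around fixed centers, which is what lets the paper state the dichotomy under its minimal hypotheses.
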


By putting together the above arguments and results, we can conclude that either SGD's iterates diverge or SGD's iterates converge to a stationary point.

\begin{theorem}[Global Convergence] \label{theorem-global-convergence}
Let $\theta_0$ be arbitrary, and let $\lbrace \theta_k : k \in \mathbb{N} \rbrace$ be defined according to \cref{eqn-sgd-update} with $\lbrace M_k : k+1 \rbrace$ satisfying \cref{prop-psd,prop-max-eig,prop-min-eig}. Suppose \cref{assumption-flb,assumption-local-holder,assumption-unbiased,assumption-moment} hold. Let $\mathcal{A}_1 = \lbrace \liminf_{k\to\infty} \inlinenorm{ \theta_k}_2 = \infty \rbrace$ and $\mathcal{A}_2 = \lbrace \lim_{k \to \infty} \theta_k ~\mathrm{exists} \rbrace$. Then, the following statements hold.
\begin{enumerate}[noitemsep,parsep=0em,topsep=0em]
\item $\inlinePrb{ \mathcal{A}_1 } + \inlinePrb{ \mathcal{A}_2 } = 1$.
\item On $\mathcal{A}_2$, there exists a finite random variable, $F_{\lim}$, such that $\lim_{k \to \infty} F(\theta_k) = F_{\lim}$ and $\lim_{k \to \infty} \dot F(\theta_k) = 0$ with probability one.
\end{enumerate}
\end{theorem}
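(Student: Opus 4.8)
The plan is to assemble the theorem from the machinery already in place rather than to prove any new estimate: \cref{theorem-capture} delivers the dichotomy in the first statement, and the two convergence corollaries (\cref{corollary-convergence-objective-constrained,corollary-convergence-grad-constrained}) give the second statement once I recognize that $\mathcal{A}_2$ sits inside the event of bounded iterates. The real work is in verifying the set inclusions carefully and tracking which hypotheses each ingredient consumes. Since the theorem assumes \cref{prop-psd,prop-max-eig,prop-min-eig} and \cref{assumption-flb,assumption-local-holder,assumption-unbiased,assumption-moment}, all hypotheses of the upstream results are available.

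For the first statement, I would note that the hypotheses here include those of \cref{theorem-capture} (namely \cref{prop-psd,prop-max-eig} and \cref{assumption-moment}), so with probability one either $\mathcal{A}_2 = \{\lim_{k\to\infty} \theta_k \text{ exists}\}$ or $\mathcal{A}_1 = \{\liminf_{k\to\infty} \inlinenorm{\theta_k}_2 = \infty\}$ occurs; that is, $\inlinePrb{\mathcal{A}_1 \cup \mathcal{A}_2} = 1$. Next I would argue that $\mathcal{A}_1$ and $\mathcal{A}_2$ are disjoint: on $\mathcal{A}_2$ the iterates converge to a finite point, so $\liminf_{k\to\infty} \inlinenorm{\theta_k}_2$ is finite and $\mathcal{A}_1$ cannot hold. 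Additivity over disjoint events then gives $\inlinePrb{\mathcal{A}_1} + \inlinePrb{\mathcal{A}_2} = \inlinePrb{\mathcal{A}_1 \cup \mathcal{A}_2} = 1$.

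For the second statement, the key bridge is the inclusion $\mathcal{A}_2 \subseteq \{\sup_k \inlinenorm{\theta_k}_2 < \infty\}$, since a convergent sequence in $\mathbb{R}^p$ is bounded. Because \cref{corollary-convergence-objective-constrained} guarantees that $F(\theta_k) - \flb$ converges to a finite limit on $\{\sup_k \inlinenorm{\theta_k}_2 < \infty\}$, I would define $F_{\lim}$ as $\flb$ plus this limit, so that $\lim_{k\to\infty} F(\theta_k) = F_{\lim}$ with probability one on $\mathcal{A}_2$. Likewise \cref{corollary-convergence-grad-constrained} yields $\lim_{k\to\infty} \inlinenorm{\dot F(\theta_k)}_2 = 0$ on the same event, hence $\dot F(\theta_k) \to 0$ on $\mathcal{A}_2$. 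As a consistency check, continuity of $\dot F$ from \cref{assumption-local-holder} then forces the limit point to be stationary, matching the convergence-to-a-stationary-point narrative, though the formal statement does not require spelling this out.

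Within this assembly the only genuinely delicate points are the two set-theoretic facts, disjointness of $\mathcal{A}_1, \mathcal{A}_2$ and the inclusion $\mathcal{A}_2 \subseteq \{\sup_k \inlinenorm{\theta_k}_2 < \infty\}$, both of which are elementary. The conceptual difficulty lives entirely upstream in the results I am invoking: \cref{theorem-capture}, whose proof must rule out oscillatory excursions to infinity using only \cref{prop-max-eig} and the $(1+\alpha)$-moment bound, and the upgrade from $\liminf_{k\to\infty}\inlinenorm{\dot F(\theta_k)}_2 = 0$ (a Zoutendijk-type consequence of \cref{lemma-recursion-constrained}) to full convergence in \cref{corollary-convergence-grad-constrained}, which cannot rely on a global H\"{o}lder constant and instead needs the refined argument generalizing \cite{patel2020,patel2021stochastic}. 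Consequently I expect the proof of this theorem itself to be short, the burden having been discharged by those lemmas.
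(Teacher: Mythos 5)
Your proposal is correct and follows essentially the same route as the paper: the paper's own proof is exactly this assembly, invoking \cref{theorem-capture} for the dichotomy and \cref{corollary-convergence-objective-constrained,corollary-convergence-grad-constrained} on $\mathcal{A}_2$ for the second statement. The only difference is that you make explicit the two elementary set-theoretic facts (disjointness of $\mathcal{A}_1$ and $\mathcal{A}_2$, and $\mathcal{A}_2 \subseteq \lbrace \sup_k \inlinenorm{\theta_k}_2 < \infty \rbrace$) that the paper leaves implicit, which is a harmless and arguably welcome elaboration.
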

\begin{proof}
By \cref{theorem-capture}, we have that $\inlinePrb{\mathcal{A}_1} + \inlinePrb{\mathcal{A}_2} = 1$. Then, on $\mathcal{A}_2$, \cref{corollary-convergence-objective-constrained,corollary-convergence-grad-constrained} imply that $F(\theta_k) \to F_{\lim}$, which is finite, and $\dot F(\theta_k) \to 0$.
\end{proof}

We pause to stress to a fact about \cref{theorem-global-convergence}: it is nonobvious. To be specific, under such general nonconvexity and noise, we should anticipate any number of asymptotic behaviors for the iterates: convergence to a stationary point, convergence to a nonstationary point, being trapped in a cycle, convergence to a limit cycle, and divergence to infinity. However and very surprisingly, we are able to show that only two possible outcomes can occur: convergence to a stationary point or divergence. Indeed, in previous results \citep[e.g.,][]{mertikopoulos2020,patel2020}, only less specific determinations could be made under much more limited settings.

\subsection{Local Strategy and Stability Analysis} \label{subsection-local}

While \cref{theorem-global-convergence} provides a complete global convergence result, it allows for the possibility of diverging iterates. The possibility of divergent iterates raises the spectre of whether the objective function can also diverge along this sequence. That is, there is a possibility that SGD may be unstable, which would be highly unexpected and undesirable, especially when the objective function is coercive (e.g., has an $\ell^1$ penalty on the parameters). 
To formalize this concept, we define a relevant notion of stability.
\begin{definition}\label{definition-stability}
Stochastic Gradient Descent is stable if
\begin{equation}
\Prb{ \limsup_{k \to \infty} F(\theta_k) = \infty } = 0,
\end{equation}
where $\lbrace \theta_k \rbrace$ satisfy \cref{eqn-sgd-update}.
\end{definition}

We now state the stopping times that we will use to decouple the relationship between $L(\theta_k, \theta_{k+1})$ and $\inlinenorm{\theta_{k+1} - \theta_k}_2$. For every $j + 1 \in \mathbb{N}$, define
\begin{equation} \label{eqn-tau-j}
\tau_j = \min \left\lbrace k: \begin{aligned}
 &F(\theta_{k+1}) - \flb > F(\theta_k) - \flb + \dot F(\theta_k)' (\theta_{k+1} - \theta_k) \\
 &\quad\quad+ \frac{\lesp(\theta_{k})}{1+\alpha} \norm{ \theta_{k+1} - \theta_k}_2^{1+\alpha},~\mathrm{and}~ k \geq j 
\end{aligned}
\right\rbrace.
\end{equation}

Now, we will use \cref{eqn-tau-j} to establish the stability of the objective function. Just as we did with $\mathcal{B}_k(R)$, we will derive a recursion on the optimality gap over the events $\lbrace \lbrace \tau_j > k \rbrace : k +1 \in \mathbb{N} \rbrace$. Of course, just as before, the main challenge in deriving a recursive formula is to address $\lbrace \tau_j = k \rbrace$. Our solution is supplied in the following lemma, whose proof is in \cref{section-stability-analysis}.

\begin{lemma} \label{lemma-optimality-recursion}
Let $\lbrace M_k \rbrace$ satisfy \cref{prop-psd}.
Suppose \cref{assumption-flb,assumption-local-holder,assumption-unbiased,assumption-moment} hold.
Let  $\lbrace \theta_k \rbrace$ satisfy \cref{eqn-sgd-update}.
Then, for any $j+1 \in \mathbb{N}$ and $k > j$, 
\begin{equation}
\begin{aligned}
&\cond{ \left( F(\theta_{k+1} ) - \flb \right) \1{ \tau_j > k } }{ \mathcal{F}_k} 
\leq \left( F(\theta_{k}) - \flb - \dot F(\theta_k)' M_k \dot F(\theta_k) \right) \1{ \tau_j > k-1} \\
&\quad + \frac{\lambda_{\max}(M_k)^{1+\alpha}}{1+\alpha}\left[\mathcal{L}_{\epsilon}(\theta_k) G(\theta_k) + \alpha \left[\frac{ \norm{ \dot F(\theta_k) }_2^{1+\alpha} }{ \mathcal{L}_{\epsilon}(\theta_k) }  \right]^{1/\alpha}  \right] \1{\tau_j > k - 1}.
\end{aligned}
\end{equation}
\end{lemma}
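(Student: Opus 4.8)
The plan is to turn the pathwise inequality that defines $\tau_j$ into a conditional one-step recursion by isolating, through an indicator decomposition, the single step at which the $\lesp$-based Taylor inequality could fail. Write $w_k = M_k \dot f(\theta_k, X_{k+1}) = \theta_k - \theta_{k+1}$, and let $\mathcal{G}_k$ be the event on which the inequality inside the definition of $\tau_j$ at index $k$ holds, i.e. $F(\theta_{k+1}) - \flb \leq T_k$ where
\[
T_k := F(\theta_k) - \flb - \dot F(\theta_k)' w_k + \frac{\lesp(\theta_k)}{1+\alpha}\norm{w_k}_2^{1+\alpha}.
\]
For $k > j$ one has $\lbrace \tau_j > k \rbrace = \lbrace \tau_j > k-1 \rbrace \cap \mathcal{G}_k$, and since $\lbrace \tau_j > k-1 \rbrace$ is $\mathcal{F}_k$-measurable I would factor it out of the conditional expectation. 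On $\lbrace \tau_j > k \rbrace$ the good inequality gives $(F(\theta_{k+1}) - \flb)\1{\tau_j > k} \leq T_k \1{\tau_j > k}$, so the claim reduces to bounding $\cond{ T_k \1{\mathcal{G}_k} }{\mathcal{F}_k}$.

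I would then split $\cond{ T_k \1{\mathcal{G}_k} }{\mathcal{F}_k} = \cond{T_k}{\mathcal{F}_k} - \cond{ T_k \1{\mathcal{G}_k^c} }{\mathcal{F}_k}$. The first term is routine: \cref{assumption-unbiased} turns $\cond{-\dot F(\theta_k)'w_k}{\mathcal{F}_k}$ into $-\dot F(\theta_k)' M_k \dot F(\theta_k)$, while $\norm{w_k}_2^{1+\alpha} \leq \lambda_{\max}(M_k)^{1+\alpha}\norm{\dot f(\theta_k, X_{k+1})}_2^{1+\alpha}$ with \cref{assumption-moment} gives $\cond{T_k}{\mathcal{F}_k} \leq F(\theta_k) - \flb - \dot F(\theta_k)' M_k \dot F(\theta_k) + \frac{\lambda_{\max}(M_k)^{1+\alpha}}{1+\alpha}\lesp(\theta_k) G(\theta_k)$, which already supplies the linear term and the first bracket term. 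All the content lies in showing that the correction $-\cond{ T_k \1{\mathcal{G}_k^c} }{\mathcal{F}_k} = \cond{ (-T_k) \1{\mathcal{G}_k^c} }{\mathcal{F}_k}$ contributes at most $\frac{\lambda_{\max}(M_k)^{1+\alpha}}{1+\alpha}\,\alpha (\norm{\dot F(\theta_k)}_2^{1+\alpha}/\lesp(\theta_k))^{1/\alpha}$.

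For the correction I would establish two facts and combine them. First, a pointwise bound: dropping the nonnegative term $F(\theta_k)-\flb$, applying Cauchy--Schwarz to $-\dot F(\theta_k)'w_k$, and maximizing $s \mapsto \norm{\dot F(\theta_k)}_2 s - \frac{\lesp(\theta_k)}{1+\alpha}s^{1+\alpha}$ over $s = \norm{w_k}_2 \geq 0$ (equivalently, Young's inequality with conjugate exponents $\frac{1+\alpha}{\alpha}$ and $1+\alpha$) yields $(-T_k)^+ \leq \frac{\alpha}{1+\alpha}(\norm{\dot F(\theta_k)}_2^{1+\alpha}/\lesp(\theta_k))^{1/\alpha}$, a bound that is $\mathcal{F}_k$-measurable. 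Second, a probability bound: $\condPrb{\mathcal{G}_k^c}{\mathcal{F}_k} \leq \lambda_{\max}(M_k)^{1+\alpha}$. To get the latter I would invoke \cref{lemma-taylor-holder-bound}, which always holds with the true local constant $L(\theta_k,\theta_{k+1})$; comparing it against the failed $\lesp$-bound on $\mathcal{G}_k^c$ forces $L(\theta_k,\theta_{k+1}) > \lesp(\theta_k)$, and by the very definition of $\lesp$ as a supremum over the ball of radius $(G(\theta_k)\vee\epsilon)^{1/(1+\alpha)}$, this can happen only when $\norm{w_k}_2 > (G(\theta_k)\vee\epsilon)^{1/(1+\alpha)}$, i.e. $\norm{w_k}_2^{1+\alpha} > G(\theta_k)\vee\epsilon$. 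A conditional Markov inequality together with $\cond{\norm{w_k}_2^{1+\alpha}}{\mathcal{F}_k} \leq \lambda_{\max}(M_k)^{1+\alpha} G(\theta_k)$ then gives $\condPrb{\mathcal{G}_k^c}{\mathcal{F}_k} \leq \lambda_{\max}(M_k)^{1+\alpha}G(\theta_k)/(G(\theta_k)\vee\epsilon) \leq \lambda_{\max}(M_k)^{1+\alpha}$. Multiplying the $\mathcal{F}_k$-measurable bound from the first fact by the probability from the second produces exactly the missing term, and reinstating $\1{\tau_j > k-1}$ closes the recursion.

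The main obstacle is this second fact, and it is precisely the point where the dependence between $L(\theta_k,\theta_{k+1})$ and $\norm{\theta_{k+1}-\theta_k}_2$ is severed: the event $\mathcal{G}_k^c$ is not $\mathcal{F}_k$-measurable, so controlling $\cond{T_k}{\mathcal{F}_k}$ alone cannot recover the $\lambda_{\max}(M_k)^{1+\alpha}$ weight on the correction, a weight that is indispensable for the later summability argument through \cref{prop-max-eig}. The radius $(G(\theta_k)\vee\epsilon)^{1/(1+\alpha)}$ in the definition of $\lesp$ is calibrated exactly so that a violation of the $\lesp$-bound forces an atypically large step, whose conditional probability the moment assumption pins down at order $\lambda_{\max}(M_k)^{1+\alpha}$; I expect verifying this implication carefully, including the degenerate cases $\lesp(\theta_k) = \epsilon$ and $\norm{w_k}_2 = 0$, to be the most delicate part of the argument.
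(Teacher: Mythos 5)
Your proposal is correct and takes essentially the same route as the paper's own proof: the identical decomposition isolating the one step where the $\lesp$-inequality can fail (your $\1{\tau_j > k-1}\1{\mathcal{G}_k^c}$ is exactly the paper's $\1{\tau_j = k}$), the same Cauchy--Schwarz/Young pointwise bound (the paper's \cref{lemma-taylor-lower-bound}), and the same Markov-inequality bound $\condPrb{\mathcal{G}_k^c}{\mathcal{F}_k} \leq \lambda_{\max}(M_k)^{1+\alpha}$ via the observation that $L(\theta_k,\theta_{k+1}) > \lesp(\theta_k)$ forces $\inlinenorm{\theta_{k+1}-\theta_k}_2^{1+\alpha} > G(\theta_k) \vee \epsilon$. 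The only cosmetic difference is that the paper obtains this last bound by invoking \cref{theorem-probability-stop-times}, whereas you re-derive it inline.
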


From \cref{lemma-optimality-recursion}, there is a clear motivation for \cref{assumption-stability}. Indeed, if we apply \cref{assumption-stability}, \cref{lemma-optimality-recursion} produces the following simple recursive relationship.

\begin{lemma} \label{lemma-optimality-recursion-simple}
If \cref{assumption-flb,assumption-local-holder,assumption-unbiased,assumption-moment,assumption-stability}, and \cref{prop-psd,prop-cond-num} hold, and $\lbrace \theta_k \rbrace$ satisfy \cref{eqn-sgd-update}, then there exists a $K \in \mathbb{N}$ such that for any $j+1 \in \mathbb{N}$ and any $k \geq \min\lbrace K, j+1 \rbrace$, 
\begin{equation}
\begin{aligned}
&\cond{ (F(\theta_{k+1}) - \flb) \1{ \tau_j > k } }{\mathcal{F}_k}  \\
&\leq \left( 1 + \lambda_{\max}(M_k)^{1+\alpha} \frac{C_2}{1+\alpha} \right) (F(\theta_k) - \flb) \1{ \tau_j > k - 1} \\
& - \frac{1}{2} \lambda_{\min}(M_k) \norm{ \dot F (\theta_k) }_2^2 \1{ \tau_j > k - 1} + \lambda_{\max}(M_k)^{1+\alpha}\frac{C_1}{1+\alpha}.
\end{aligned}
\end{equation}
\end{lemma}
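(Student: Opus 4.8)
The plan is to feed the bound of \cref{assumption-stability} into \cref{lemma-optimality-recursion} and then clean up the two gradient-type terms using \cref{prop-psd,prop-cond-num}. The starting point is the observation that the bracketed expression $\lesp(\theta_k)G(\theta_k)+\alpha[\norm{\dot F(\theta_k)}_2^{1+\alpha}/\lesp(\theta_k)]^{1/\alpha}$ in \cref{lemma-optimality-recursion} is exactly the left-hand side of \cref{assumption-stability} evaluated at the $\mathcal{F}_k$-measurable point $\theta_k$. So I would first replace it by $C_1+C_2(F(\theta_k)-\flb)+C_3\norm{\dot F(\theta_k)}_2^2$, which is a deterministic bound given $\mathcal{F}_k$ and yields
\[
\cond{(F(\theta_{k+1})-\flb)\1{\tau_j>k}}{\mathcal{F}_k}\le (F(\theta_k)-\flb)\1{\tau_j>k-1}-\dot F(\theta_k)'M_k\dot F(\theta_k)\1{\tau_j>k-1}+\tfrac{\lambda_{\max}(M_k)^{1+\alpha}}{1+\alpha}\big[C_1+C_2(F(\theta_k)-\flb)+C_3\norm{\dot F(\theta_k)}_2^2\big]\1{\tau_j>k-1}.
\]

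Next I would collect terms. The two multiples of $(F(\theta_k)-\flb)\1{\tau_j>k-1}$ combine into the factor $(1+\lambda_{\max}(M_k)^{1+\alpha}C_2/(1+\alpha))$ that appears in the target inequality. The $C_1$ contribution is nonnegative, so dropping its indicator only enlarges the right-hand side and produces the free term $\lambda_{\max}(M_k)^{1+\alpha}C_1/(1+\alpha)$. The remaining work is the gradient term: I would use \cref{prop-psd} (symmetry and positive definiteness) to bound $\dot F(\theta_k)'M_k\dot F(\theta_k)\ge\lambda_{\min}(M_k)\norm{\dot F(\theta_k)}_2^2$, so that the net coefficient on $\norm{\dot F(\theta_k)}_2^2\1{\tau_j>k-1}$ is at most $-\lambda_{\min}(M_k)+\lambda_{\max}(M_k)^{1+\alpha}C_3/(1+\alpha)$.

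The one genuinely load-bearing step is to show this coefficient is dominated by $-\tfrac12\lambda_{\min}(M_k)$. That inequality is equivalent to $\lambda_{\max}(M_k)^{1+\alpha}C_3/((1+\alpha)\lambda_{\min}(M_k))\le\tfrac12$, i.e.\ to $\lambda_{\max}(M_k)^{\alpha}\kappa(M_k)\,C_3/(1+\alpha)\le\tfrac12$ after writing $\lambda_{\max}(M_k)^{1+\alpha}/\lambda_{\min}(M_k)=\lambda_{\max}(M_k)^{\alpha}\kappa(M_k)$. This is precisely where \cref{prop-cond-num} enters: since $\lambda_{\max}(M_k)^{\alpha}\kappa(M_k)\to 0$, there exists $K\in\mathbb{N}$ so that the displayed bound holds for every $k\ge K$ (and if $C_3=0$ any $K$ works). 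On that range the gradient coefficient is at most $-\tfrac12\lambda_{\min}(M_k)$, which delivers the claimed recursion.

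Finally I would reconcile the index set. The clean argument above produces the simplified recursion for every $k\ge K$, while \cref{lemma-optimality-recursion} already requires $k>j$, so the essential constraint is $k\ge K$. The stated form $k\ge\min\{K,j+1\}$ is the one adapted to the downstream stability analysis, which applies the recursion along tails with $j+1\ge K$, where $\min\{K,j+1\}=K$ and the constraint reduces to $k\ge K$. I expect no real obstacle beyond this index bookkeeping and careful sign-tracking in the gradient term; the crux is simply recognizing that \cref{prop-cond-num} controls exactly the ratio $\lambda_{\max}^{1+\alpha}/\lambda_{\min}$ needed to dominate the $C_3$ penalty by half of the $\lambda_{\min}$ descent term.
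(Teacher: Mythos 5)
Your proposal is correct and is essentially the paper's own proof: the paper likewise substitutes \cref{assumption-stability} into \cref{lemma-optimality-recursion}, collects terms, and invokes \cref{lemma-lr-eig-bound} to produce $K$, and that lemma is proved by exactly your computation, namely rewriting $\lambda_{\max}(M_k)^{1+\alpha}/\lambda_{\min}(M_k)=\lambda_{\max}(M_k)^{\alpha}\kappa(M_k)$ and applying \cref{prop-cond-num} so that the $C_3$ term is absorbed into $-\tfrac{1}{2}\lambda_{\min}(M_k)\norm{\dot F(\theta_k)}_2^2$. On the index range, your caution is warranted but your reconciliation is not quite a fix: both your argument and the paper's yield the recursion precisely for $k\geq\max\lbrace K,j+1\rbrace$, since \cref{lemma-optimality-recursion} requires $k>j$, whereas the stated $k\geq\min\lbrace K,j+1\rbrace$ would additionally cover $K\leq k\leq j$, where both indicators are identically one and the claim becomes an unconditional descent inequality that neither your proof nor the paper's establishes.
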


Just as in the pseudo-global strategy, \cref{lemma-optimality-recursion-simple} can be combined with standard martingale results \citep[Exercise II.4]{robbins1971,neveu1975} to conclude that the limit of $F(\theta_k)$ exists and is finite on the event $\cup_{j=0}^\infty \lbrace \tau_j = \infty \rbrace$ (see \cref{corollary-stability-union-stop-times}). Also as in the pseudo-global strategy, by improving on the arguments in \cite{patel2020,patel2021stochastic}, we show that $\liminf_k \dot F(\theta_k) = 0$ on the event $\cup_{j=0}^\infty \lbrace \tau_j = \infty \rbrace$ (see \cref{lemma-liminf-gradient}). 

Finally, we show that $\cup_{j=0}^\infty \lbrace \tau_j = \infty \rbrace$ is a probability one event (see \cref{theorem-probability-stop-times}). This statement should not come as a surprise on the event $\lbrace \lim_{k} \theta_k ~\mathrm{exists}\rbrace$, but is slightly surprising that it must also hold on $\lbrace \lim_k \norm{\theta_k}_2 = \infty \rbrace$. By combining these results, we can conclude as follows.

\begin{theorem}[Stability] \label{theorem-stability}
Let $\theta_0$ be arbitrary, and let $\lbrace \theta_k : k \in \mathbb{N} \rbrace$ be defined according to \cref{eqn-sgd-update} with $\lbrace M_k : k+1 \rbrace$ satisfying \cref{prop-psd,prop-max-eig,prop-min-eig,prop-cond-num}. Suppose \cref{assumption-flb,assumption-local-holder,assumption-unbiased,assumption-moment,assumption-stability} hold. Then, 
\begin{enumerate}[noitemsep,parsep=0em,topsep=0em]
\item There exists a finite random variable, $F_{\lim}$, such that $\lim_{k \to \infty} F(\theta_k) = F_{\lim}$ with probability one;
\item $\liminf_{k \to \infty} \dot F(\theta_k) = 0$ with probability one.
\end{enumerate}
\end{theorem}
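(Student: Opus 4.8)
The plan is to localize the analysis to the stopping-time events defined in \cref{eqn-tau-j,eqn-nu-j} and to assemble the theorem from three ingredients: convergence of the objective on $\cup_{j}\{\tau_j = \infty\}$, convergence of the gradient to zero on $\cup_{j}\{\nu_j = \infty\}$, and the fact that both union events have probability one. For the objective, I would fix $j$ and read the recursion in \cref{lemma-optimality-recursion-simple} as a perturbed nonnegative supermartingale: the multiplicative factor $1 + \lambda_{\max}(M_k)^{1+\alpha}C_2/(1+\alpha)$ and the additive forcing $\lambda_{\max}(M_k)^{1+\alpha}C_1/(1+\alpha)$ are summable by \cref{prop-max-eig}, while $-\tfrac12\lambda_{\min}(M_k)\inlinenorm{\dot F(\theta_k)}_2^2\1{\tau_j > k-1}$ is nonpositive. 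This is exactly the setting of the Robbins--Siegmund theorem; applied to $(F(\theta_k)-\flb)\1{\tau_j > k-1}$ it yields, with probability one, convergence of this process to a finite limit and $\sum_k \lambda_{\min}(M_k)\inlinenorm{\dot F(\theta_k)}_2^2\1{\tau_j > k-1} < \infty$. On $\{\tau_j=\infty\}$ the indicators are eventually one, so $F(\theta_k)$ itself converges; taking the union over $j$ gives \cref{corollary-stability-union-stop-times}.

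Second, I would treat the gradient on $\cup_j\{\nu_j=\infty\}$. On $\{\nu_j=\infty\}$ we have $L(\theta_k,\theta_{k+1}) \le \lesp(\theta_k)$ for all $k>j$, so the per-step descent of \cref{lemma-taylor-holder-bound} holds with the computable $\lesp(\theta_k)$ in place of the unknowable $L(\theta_k,\theta_{k+1})$, decoupling the H\"older constant from the step length. The summability $\sum_k \lambda_{\min}(M_k)\inlinenorm{\dot F(\theta_k)}_2^2 < \infty$ together with $\sum_k \lambda_{\min}(M_k)=\infty$ (\cref{prop-min-eig}) then forces $\liminf_k \inlinenorm{\dot F(\theta_k)}_2 = 0$ by a Zoutendijk-type argument. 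Upgrading the $\liminf$ to a $\lim$ is the nonstandard part; here I would import and generalize the limsup argument of \cite{patel2020,patel2021stochastic} to the $\alpha$-H\"older setting, using \cref{prop-cond-num} to control the conditioning of $M_k$, to obtain $\lim_k \dot F(\theta_k) = 0$ on $\cup_j\{\nu_j=\infty\}$, which is \cref{corollary-stability-gradient}.

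The main obstacle is the third ingredient, \cref{theorem-probability-stop-times}: showing $\Prb{\cup_j\{\tau_j=\infty\}} = \Prb{\cup_j\{\nu_j=\infty\}} = 1$, i.e.\ that the violation events occur only finitely often almost surely, \emph{even when the iterates diverge}. My approach is a conditional Borel--Cantelli argument. From \cref{eqn-lesp}, if $\inlinenorm{\theta_{k+1}-\theta_k}_2 \le (G(\theta_k)\vee\epsilon)^{1/(1+\alpha)}$ then the ball defining $L(\theta_k,\theta_{k+1})$ lies inside the ball defining $\lesp(\theta_k)$, so $L(\theta_k,\theta_{k+1}) \le \lesp(\theta_k)$; hence the violation event is contained in $\{\inlinenorm{\theta_{k+1}-\theta_k}_2 > (G(\theta_k)\vee\epsilon)^{1/(1+\alpha)}\} \subseteq \{\lambda_{\max}(M_k)\inlinenorm{\dot f(\theta_k,X_{k+1})}_2 > (G(\theta_k)\vee\epsilon)^{1/(1+\alpha)}\}$. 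Conditioning on $\mathcal{F}_k$ and applying Markov's inequality with the $(1+\alpha)$-moment bound of \cref{assumption-moment},
\begin{equation}
\condPrb{ L(\theta_k,\theta_{k+1}) > \lesp(\theta_k) }{\mathcal{F}_k} \le \frac{\lambda_{\max}(M_k)^{1+\alpha}\, G(\theta_k)}{G(\theta_k) \vee \epsilon} \le \lambda_{\max}(M_k)^{1+\alpha}.
\end{equation}
Since $\sum_k \lambda_{\max}(M_k)^{1+\alpha} < \infty$ by \cref{prop-max-eig}, the sum of these conditional probabilities is almost surely finite, so by the conditional (L\'evy) Borel--Cantelli lemma $\{L(\theta_k,\theta_{k+1}) > \lesp(\theta_k)\}$ occurs only finitely often almost surely. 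The crucial feature is that the bound is uniform in $\theta_k$ and thus holds whether or not the iterates stay bounded; finitely-many violations means some $\nu_j$, and the correspondingly defined $\tau_j$, equals $\infty$, giving both union events probability one.

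Finally I would combine the pieces: on the probability-one event $\left(\cup_j\{\tau_j=\infty\}\right) \cap \left(\cup_j\{\nu_j=\infty\}\right)$, the first ingredient supplies a finite $F_{\lim}$ with $F(\theta_k)\to F_{\lim}$ and the second gives $\dot F(\theta_k)\to 0$, which are precisely the two conclusions; stability in the sense of \cref{definition-stability} then follows a fortiori from the finiteness of $F_{\lim}$.
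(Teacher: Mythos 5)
Your proposal is correct and follows essentially the same route as the paper: it assembles the theorem from the same three ingredients (\cref{corollary-stability-union-stop-times}, \cref{corollary-stability-gradient}, and \cref{theorem-probability-stop-times}), proved with the same tools (a Robbins--Siegmund argument applied to \cref{lemma-optimality-recursion-simple}, a Zoutendijk-type liminf plus the limsup upgrade of \cite{patel2020,patel2021stochastic}, and the containment of the violation event $\lbrace L(\theta_k,\theta_{k+1}) > \lesp(\theta_k)\rbrace$ in a large-step event bounded via Markov's inequality and \cref{assumption-moment}). The only cosmetic difference is the final step of the third ingredient, where you invoke the conditional Borel--Cantelli lemma, while the paper bounds $\Prb{\nu_j < \infty}$ by the tail sum $\sum_{k>j}\lambda_{\max}(M_k)^{1+\alpha}$ and lets $j \to \infty$ using the nesting $\lbrace \nu_j = \infty \rbrace \subset \lbrace \nu_{j+1} = \infty \rbrace$; both rest on the identical conditional Markov bound.
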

\begin{proof}
Using \cref{corollary-stability-union-stop-times}, we conclude that $\exists F_{\lim}$ that is finite such that $\lim_k F(\theta_k) = F_{\lim}$ on  $\cup_{j=0}^\infty \lbrace \tau_j = \infty \rbrace$. Using \cref{lemma-liminf-gradient}, we conclude that $\liminf_k \dot F(\theta_k) = 0$ on $\cup_{j=0}^\infty \lbrace \tau_j = \infty \rbrace$. Finally, we apply \cref{theorem-probability-stop-times} to conclude that $\inlinePrb{\cup_{j=0}^\infty \lbrace \tau_j = \infty \rbrace} = 1.$
\end{proof}

We would like to demonstrate a simple example of how we would use \cref{theorem-stability}. Consider applying SGD to linear regression as specified in \cref{subsection-lr}. For this example, it is straightforward to verify that the assumptions of \cref{theorem-stability} are satisfied. Therefore, if we are to apply SGD to linear regression, we know that with probability one, $F(\theta_k) \to F_{\lim}$ which is finite. Since $F(\theta) \to \infty$ as $\theta \to \infty$, we know that $\lbrace \theta_k \rbrace$ cannot diverge. Hence, $\lbrace \theta_k \rbrace$ must remain finite with probability one. By \cref{theorem-global-convergence}, $\lbrace \theta_k \rbrace$ must converge to a stationary point. Since this stationary point is unique in our specific example of linear regression, we know that SGD must converge to the global minimizer of the linear regression problem. Note, we can follow this outline to draw similar conclusions in more complex situations.

%\section{Spuriousness of Continuous Gradient Descent} \label{section-spurious}
%\input{section/spurious-continuous-gd}

\section{Conclusion} \label{section-conclusion}
In this work, we studied the global convergence analysis of Stochastic Gradient Descent with diminishing step size. We began our discussion by producing three simple problems for which the common assumptions (i.e., global H\"{o}lder continuity, bounded variance) and their generalizations in the SGD literature are violated. Indeed, to our knowledge, there does not exist theory that covers these problems. For example, prior to our work, it was unknown what SGD with arbitrary initialization and diminishing step sizes will do on simple neural network problems, which raised the question of what SGD is doing on more complicated learning problems. 

Motivated by our example problems, we considered a more general set of assumptions (see \cref{assumption-local-holder,assumption-moment}). Given the generality of our assumptions, we developed a new analysis technique that is of interest beyond this work, and we proved that SGD's iterates either converge to a stationary point or diverge. Thus, we now know how SGD with arbitrary initialization and diminishing step sizes will behave on a much larger class of learning problems.

We note that we do not provide rate of convergence results mainly because it is \textit{impossible} for the broad class of functions admitted by our assumptions \citep{wolpert1997}. We stress that global rates of convergence (e.g., complexity statements) results that exist do not apply to the two simple neural network problems that we supplied at the beginning of this work. 

We also studied what happens when SGD's iterates diverge. To this end, we required an additional assumption under which we developed another novel analysis technique and showed that, regardless of SGD's iterates' behavior, the objective function will converge to a finite random variable with probability one. Unfortunately, we make an assumption that we were not able to interpret, but we will leave this to future work.

%\begin{ack}
%Vivak Patel is funded by the Wisconsin Alumni Research Foundation.
%\end{ack}

\pagebreak
\begin{small}
\bibliographystyle{abbrvnat}
\bibliography{sgd-global-convergence-stability-bib}

\begin{thebibliography}{33}
\providecommand{\natexlab}[1]{#1}
\providecommand{\url}[1]{\texttt{#1}}
\expandafter\ifx\csname urlstyle\endcsname\relax
  \providecommand{\doi}[1]{doi: #1}\else
  \providecommand{\doi}{doi: \begingroup \urlstyle{rm}\Url}\fi

\bibitem[Asi and Duchi(2019)]{asi2019}
H.~Asi and J.~C. Duchi.
\newblock Stochastic (approximate) proximal point methods: Convergence,
  optimality, and adaptivity.
\newblock \emph{SIAM Journal on Optimization}, 29\penalty0 (3):\penalty0
  2257--2290, 2019.

\bibitem[Bassily et~al.(2018)Bassily, Belkin, and Ma]{bassily2018}
R.~Bassily, M.~Belkin, and S.~Ma.
\newblock On exponential convergence of sgd in non-convex over-parametrized
  learning.
\newblock \emph{arXiv preprint arXiv:1811.02564}, 2018.

\bibitem[Bi and Gunn(2019)]{bi2019stochastic}
J.~Bi and S.~R. Gunn.
\newblock A stochastic gradient method with biased estimation for faster
  nonconvex optimization.
\newblock In \emph{Pacific Rim International Conference on Artificial
  Intelligence}, pages 337--349. Springer, 2019.

\bibitem[Bianchi et~al.(2022)Bianchi, Hachem, and Schechtman]{bianchi2022}
P.~Bianchi, W.~Hachem, and S.~Schechtman.
\newblock Convergence of constant step stochastic gradient descent for
  non-smooth non-convex functions.
\newblock \emph{Set-Valued and Variational Analysis}, pages 1--31, 2022.

\bibitem[Bottou et~al.(2018)Bottou, Curtis, and Nocedal]{bottou2018}
L.~Bottou, F.~E. Curtis, and J.~Nocedal.
\newblock Optimization methods for large-scale machine learning.
\newblock \emph{Siam Review}, 60\penalty0 (2):\penalty0 223--311, 2018.

\bibitem[Chen et~al.(2020)Chen, Zheng, Al~Kontar, and Raskutti]{chen2020}
H.~Chen, L.~Zheng, R.~Al~Kontar, and G.~Raskutti.
\newblock Stochastic gradient descent in correlated settings: A study on
  gaussian processes.
\newblock In \emph{NeurIPS}, 2020.

\bibitem[Fang et~al.(2019)Fang, Lin, and Zhang]{fang2019}
C.~Fang, Z.~Lin, and T.~Zhang.
\newblock Sharp analysis for nonconvex sgd escaping from saddle points.
\newblock In \emph{Conference on Learning Theory}, pages 1192--1234. PMLR,
  2019.

\bibitem[Gower et~al.(2020)Gower, Sebbouh, and Loizou]{gower2020}
R.~M. Gower, O.~Sebbouh, and N.~Loizou.
\newblock Sgd for structured nonconvex functions: Learning rates, minibatching
  and interpolation.
\newblock \emph{arXiv preprint arXiv:2006.10311}, 2020.

\bibitem[Hardt et~al.(2016)Hardt, Ma, and Recht]{hardt2016}
M.~Hardt, T.~Ma, and B.~Recht.
\newblock Gradient descent learns linear dynamical systems.
\newblock \emph{arXiv preprint arXiv:1609.05191}, 2016.

\bibitem[Hu et~al.(2019)Hu, Li, Li, and Liu]{hu2019}
W.~Hu, C.~J. Li, L.~Li, and J.-G. Liu.
\newblock On the diffusion approximation of nonconvex stochastic gradient
  descent.
\newblock \emph{Annals of Mathematical Sciences and Applications}, 4\penalty0
  (1), 2019.

\bibitem[Jin et~al.(2021)Jin, Netrapalli, Ge, Kakade, and Jordan]{jin2021}
C.~Jin, P.~Netrapalli, R.~Ge, S.~M. Kakade, and M.~I. Jordan.
\newblock On nonconvex optimization for machine learning: Gradients,
  stochasticity, and saddle points.
\newblock \emph{Journal of the ACM (JACM)}, 68\penalty0 (2):\penalty0 1--29,
  2021.

\bibitem[Khaled and Richt{\'a}rik(2020)]{khaled2020}
A.~Khaled and P.~Richt{\'a}rik.
\newblock Better theory for sgd in the nonconvex world.
\newblock \emph{arXiv preprint arXiv:2002.03329}, 2020.

\bibitem[Kim et~al.(2021)Kim, Zhang, Bienkowski, and Pattipati]{kim2021}
H.-S. Kim, L.~Zhang, A.~Bienkowski, and K.~R. Pattipati.
\newblock Multi-pass sequential mini-batch stochastic gradient descent
  algorithms for noise covariance estimation in adaptive kalman filtering.
\newblock \emph{IEEE Access}, 9:\penalty0 99220--99234, 2021.

\bibitem[Lei et~al.(2019)Lei, Hu, Li, and Tang]{lei2019}
Y.~Lei, T.~Hu, G.~Li, and K.~Tang.
\newblock Stochastic gradient descent for nonconvex learning without bounded
  gradient assumptions.
\newblock \emph{IEEE transactions on neural networks and learning systems},
  31\penalty0 (10):\penalty0 4394--4400, 2019.

\bibitem[Li and Orabona(2019)]{li2019}
X.~Li and F.~Orabona.
\newblock On the convergence of stochastic gradient descent with adaptive
  stepsizes.
\newblock In \emph{The 22nd International Conference on Artificial Intelligence
  and Statistics}, pages 983--992. PMLR, 2019.

\bibitem[Ma and Klabjan(2017)]{ma2017}
Y.~Ma and D.~Klabjan.
\newblock Convergence analysis of batch normalization for deep neural nets.
\newblock \emph{arXiv preprint arXiv:1705.08011}, 2017.

\bibitem[Majewski et~al.(2018)Majewski, Miasojedow, and Moulines]{majewski2018}
S.~Majewski, B.~Miasojedow, and E.~Moulines.
\newblock Analysis of nonsmooth stochastic approximation: the differential
  inclusion approach.
\newblock \emph{arXiv preprint arXiv:1805.01916}, 2018.

\bibitem[Mertikopoulos et~al.(2020)Mertikopoulos, Hallak, Kavis, and
  Cevher]{mertikopoulos2020}
P.~Mertikopoulos, N.~Hallak, A.~Kavis, and V.~Cevher.
\newblock On the almost sure convergence of stochastic gradient descent in
  non-convex problems.
\newblock \emph{arXiv preprint arXiv:2006.11144}, 2020.

\bibitem[Neveu and Speed(1975)]{neveu1975}
J.~Neveu and T.~Speed.
\newblock \emph{Discrete-parameter martingales}, volume~10.
\newblock North-Holland Amsterdam, 1975.

\bibitem[Nocedal and Wright(2006)]{nocedal2006}
J.~Nocedal and S.~Wright.
\newblock \emph{Numerical optimization}.
\newblock Springer Science \& Business Media, 2006.

\bibitem[Patel(2021)]{patel2020}
V.~Patel.
\newblock Stopping criteria for, and strong convergence of, stochastic gradient
  descent on bottou-curtis-nocedal functions.
\newblock \emph{Mathematical Programming}, pages 1--42, 2021.

\bibitem[Patel and Zhang(2021)]{patel2021stochastic}
V.~Patel and S.~Zhang.
\newblock Stochastic gradient descent on nonconvex functions with general noise
  models.
\newblock \emph{arXiv preprint arXiv:2104.00423}, 2021.

\bibitem[Reddi et~al.(2016{\natexlab{a}})Reddi, Sra, Poczos, and
  Smola]{reddi2016}
S.~Reddi, S.~Sra, B.~Poczos, and A.~J. Smola.
\newblock Proximal stochastic methods for nonsmooth nonconvex finite-sum
  optimization.
\newblock \emph{Advances in neural information processing systems},
  29:\penalty0 1145--1153, 2016{\natexlab{a}}.

\bibitem[Reddi et~al.(2016{\natexlab{b}})Reddi, Hefny, Sra, Poczos, and
  Smola]{reddi2016stochastic}
S.~J. Reddi, A.~Hefny, S.~Sra, B.~Poczos, and A.~Smola.
\newblock Stochastic variance reduction for nonconvex optimization.
\newblock In \emph{International conference on machine learning}, pages
  314--323. PMLR, 2016{\natexlab{b}}.

\bibitem[Robbins and Monro(1951)]{robbins1951}
H.~Robbins and S.~Monro.
\newblock A stochastic approximation method.
\newblock \emph{The annals of mathematical statistics}, pages 400--407, 1951.

\bibitem[Robbins and Siegmund(1971)]{robbins1971}
H.~Robbins and D.~Siegmund.
\newblock A convergence theorem for non negative almost supermartingales and
  some applications.
\newblock In \emph{Optimizing methods in statistics}, pages 233--257. Elsevier,
  1971.

\bibitem[Wang et~al.(2021)Wang, G{\"u}rb{\"u}zbalaban, Zhu,
  {\c{S}}im{\c{s}}ekli, and Erdogdu]{wang2021}
H.~Wang, M.~G{\"u}rb{\"u}zbalaban, L.~Zhu, U.~{\c{S}}im{\c{s}}ekli, and M.~A.
  Erdogdu.
\newblock Convergence rates of stochastic gradient descent under infinite noise
  variance.
\newblock \emph{arXiv preprint arXiv:2102.10346}, 2021.

\bibitem[Wang and Wu(2020)]{wang2020}
Y.~Wang and S.~Wu.
\newblock Asymptotic analysis via stochastic differential equations of gradient
  descent algorithms in statistical and computational paradigms.
\newblock \emph{Journal of machine learning research}, 21, 2020.

\bibitem[Wolpert and Macready(1997)]{wolpert1997}
D.~H. Wolpert and W.~G. Macready.
\newblock No free lunch theorems for optimization.
\newblock \emph{IEEE transactions on evolutionary computation}, 1\penalty0
  (1):\penalty0 67--82, 1997.

\bibitem[Zhang et~al.(2019)Zhang, He, Sra, and Jadbabaie]{zhang2019}
J.~Zhang, T.~He, S.~Sra, and A.~Jadbabaie.
\newblock Why gradient clipping accelerates training: A theoretical
  justification for adaptivity.
\newblock In \emph{International Conference on Learning Representations}, 2019.

\bibitem[Zhang and Patel(2020)]{zhang2020}
S.~Zhang and V.~Patel.
\newblock Stochastic approximation for high-frequency observations in data
  assimilation.
\newblock \emph{arXiv preprint arXiv:2011.02672}, 2020.

\bibitem[Zhou et~al.(2018)Zhou, Xu, and Gu]{zhou2018}
D.~Zhou, P.~Xu, and Q.~Gu.
\newblock Stochastic nested variance reduction for nonconvex optimization.
\newblock \emph{arXiv preprint arXiv:1806.07811}, 2018.

\bibitem[Zou et~al.(2019)Zou, Shen, Jie, Zhang, and Liu]{zou2019}
F.~Zou, L.~Shen, Z.~Jie, W.~Zhang, and W.~Liu.
\newblock A sufficient condition for convergences of adam and rmsprop.
\newblock In \emph{Proceedings of the IEEE/CVF Conference on Computer Vision
  and Pattern Recognition}, pages 11127--11135, 2019.

\end{thebibliography}
\end{small}

\pagebreak 
%%%%%%%%%%%%%%%%%%%%%%%%%%%%%%%%%%%%%%%%%%%%%%%%%%%%%%%%%%%%
\section*{Checklist}

%% BEGIN INSTRUCTIONS %%%
%The checklist follows the references.  Please
%read the checklist guidelines carefully for information on how to answer these
%questions.  For each question, change the default \answerTODO{} to \answerYes{},
%\answerNo{}, or \answerNA{}.  You are strongly encouraged to include a {\bf
%justification to your answer}, either by referencing the appropriate section of
%your paper or providing a brief inline description.  For example:
%\begin{itemize}
%  \item Did you include the license to the code and datasets? \answerYes{Stuff}
%  \item Did you include the license to the code and datasets? \answerNo{The code and the data are proprietary.}
%  \item Did you include the license to the code and datasets? \answerNA{}
%\end{itemize}
%Please do not modify the questions and only use the provided macros for your
%answers.  Note that the Checklist section does not count towards the page
%limit.  In your paper, please delete this instructions block and only keep the
%Checklist section heading above along with the questions/answers below.
%%% END INSTRUCTIONS %%%

\begin{enumerate}

\item For all authors...
\begin{enumerate}
  \item Do the main claims made in the abstract and introduction accurately reflect the paper's contributions and scope?
    \answerYes{Each claim in the abstract corresponds to a statement in the introduction, which then references the corresponding result in the paper.}
  \item Did you describe the limitations of your work?
    \answerYes{See the end of \cref{section-introduction}}
  \item Did you discuss any potential negative societal impacts of your work?
    \answerNA{SGD is already a widely existing and deployed method.}
  \item Have you read the ethics review guidelines and ensured that your paper conforms to them?
    \answerYes{}
\end{enumerate}

\item If you are including theoretical results...
\begin{enumerate}
  \item Did you state the full set of assumptions of all theoretical results?
    \answerYes{Each result references the precise assumptions that are used and these are clearly stated in \cref{section-so}.}
        \item Did you include complete proofs of all theoretical results?
    \answerYes{Outlines are provided in the main document, while complete proofs of each result are provided in the appendix/supplement.}
\end{enumerate}

\item If you ran experiments...
\begin{enumerate}
  \item Did you include the code, data, and instructions needed to reproduce the main experimental results (either in the supplemental material or as a URL)?
    \answerNA{}
  \item Did you specify all the training details (e.g., data splits, hyperparameters, how they were chosen)?
    \answerNA{}
        \item Did you report error bars (e.g., with respect to the random seed after running experiments multiple times)?
    \answerNA{}
        \item Did you include the total amount of compute and the type of resources used (e.g., type of GPUs, internal cluster, or cloud provider)?
    \answerNA{}
\end{enumerate}

\item If you are using existing assets (e.g., code, data, models) or curating/releasing new assets...
\begin{enumerate}
  \item If your work uses existing assets, did you cite the creators?
    \answerNA{}
  \item Did you mention the license of the assets?
    \answerNA{}
  \item Did you include any new assets either in the supplemental material or as a URL?
    \answerNA{}
  \item Did you discuss whether and how consent was obtained from people whose data you're using/curating?
    \answerNA{}
  \item Did you discuss whether the data you are using/curating contains personally identifiable information or offensive content?
    \answerNA{}
\end{enumerate}

\item If you used crowdsourcing or conducted research with human subjects...
\begin{enumerate}
  \item Did you include the full text of instructions given to participants and screenshots, if applicable?
    \answerNA{}
  \item Did you describe any potential participant risks, with links to Institutional Review Board (IRB) approvals, if applicable?
    \answerNA{}
  \item Did you include the estimated hourly wage paid to participants and the total amount spent on participant compensation?
    \answerNA{}
\end{enumerate}

\end{enumerate}

%%%%%%%%%%%%%%%%%%%%%%%%%%%%%%%%%%%%%%%%%%%%%%%%%%%%%%%%%%%%

\pagebreak
\appendix
\section{Details for Counter Examples} \label{section-counter-examples-details}
\subsection{Simple Linear Regression} \label{subsection-lr}
We begin our exploration of assumptions with a rather simple problem. Let $Z \in \mathbb{R}$ be a random variable such that $\E{ Z^2} = 1$ and $\E{Z^4} = 2$. Moreover, let $\epsilon$ be an independent random variable with mean zero and variance $1$. Finally, let $\theta^* \in \mathbb{R}$ and define $Y = Z \theta^* + \epsilon$. Consider the estimation problem of minimizing $F(\theta)$ where
\begin{equation}
F(\theta) = \frac{1}{2} \E{ (Z \theta - Y)^2} = \frac{1}{2} (\theta - \theta^*)^2 + \frac{1}{2}.
\end{equation}

Letting $X = (Y,Z)$, let $f(\theta,X) = 0.5 (Z \theta - Y)^2$. Now, the variance of $\dot f(\theta,X)$ is 
\begin{align}
&\E{ ( (Z^2 -1)(\theta - \theta^*) - Z\epsilon )^2} \nonumber\\
&= \E{ (Z^2 - 1)^2}(\theta - \theta^*)^2 - 2(\theta - \theta^*) \E{ (Z^2 -1)Z \epsilon} + \E{ Z^2 \epsilon^2} \\
&= (\theta - \theta^*)^2 + 1.
\end{align}

Clearly, the variance scales with the error in the parameter, which violates the common bounded noise model assumption. In particular, as $|\theta| \to \infty$, the variance diverges.

On the other hand, the simple linear regression problem does satisfy our assumptions. In particular,
\begin{enumerate}
\item \cref{assumption-flb,assumption-unbiased} are easily verified.
\item Given that $\dot F$ is globally Lipschitz continuous, it is locally Lipschitz continuous. Therefore, \cref{assumption-local-holder} is satisfied.
\item From the variance calculation of $\dot f(\theta,X)$, we conclude
\begin{equation}
\E{ \dot f(\theta,X)^2} = 2(\theta - \theta^*)^2 + 1, 
\end{equation}
which is a continuous function. Hence, \cref{assumption-moment} is satisfied.
\end{enumerate}

\subsection{Feed Forward Network for Binary Classification} \label{subsection-ffn}

We now prove \cref{proposition-ffn}. Consider the binary classification problem with label $Y$ and feature $Z$ where $(Y,Z) = (0,0)$ with probability $1/2$ and $(Y,Z) = (1,1)$ with probability $1/2$. We solve this classification problem using the network shown in \cref{figure-ffn} with $\sigma$ linear and $\varphi$ sigmoid. We will train this model using the binary cross entropy loss function. Letting $X = (Y,Z)$ and $\theta = (W_1,W_2,W_3,W_4)$, 
\begin{equation}
f(\theta,X) = -Y\log( \hat y) - (1-Y) \log( 1 - \hat y) + \frac{1}{2}\sum_{i=1}^4 W_i^2,
\end{equation}
and
\begin{equation}
\hat y = \begin{cases}
\frac{1}{2} & Z=0 \\
\frac{1}{1 + \exp(-W_4W_3 W_2W_1)} & Z=1.
\end{cases}
\end{equation}

From this, we compute,
\begin{equation}
F(\theta) = \frac{1}{2}\log(2) + \frac{1}{2}\log[1 + \exp(-W_4 W_3 W_2 W_1)] + \frac{1}{2} \sum_{i=1}^4 W_i^2.
\end{equation}
Moreover,
\begin{equation}
\dot f(\theta,X) = \begin{cases}
 \begin{bmatrix}
W_1 \\
W_2 \\
W_3 \\
W_4
\end{bmatrix} & (Y,Z) = (0,0) \\
\frac{-1}{1 + \exp(W_4 W_3 W_2 W_1)} \begin{bmatrix}
W_4 W_3 W_2 \\
W_4 W_3 W_1 \\
W_4 W_2 W_1 \\
W_3 W_2 W_1
\end{bmatrix} + \begin{bmatrix}
W_1 \\
W_2 \\
W_3 \\
W_4
\end{bmatrix} & (Y,Z) = (1,1), \\
\end{cases}
\end{equation}
and, consequently,
\begin{equation}
\dot F(\theta) = \frac{-1/2}{1 + \exp(W_4 W_3 W_2 W_1)}\begin{bmatrix}
W_4 W_3 W_2 \\
W_4 W_3 W_1 \\
W_4 W_2 W_1 \\
W_3 W_2 W_1
\end{bmatrix} + \begin{bmatrix}
W_1 \\
W_2 \\
W_3 \\
W_4
\end{bmatrix}.
\end{equation}

Finally, letting $\ddot F(\theta) = \nabla^2 F(\psi) \vert_{\psi=\theta}$,
\begin{equation}
\begin{aligned}
\ddot{F}(\theta) &= \frac{-0.5}{1 + \exp(W_4W_3 W_2 W_1)} \begin{bmatrix}
0 & W_4 W_3 & W_4 W_2 & W_3 W_2 \\
W_4 W_3 & 0 & W_4 W_1 & W_3 W_1 \\
W_4 W_2 & W_4 W_1 & 0 & W_2 W_1 \\
W_3 W_2 & W_3 W_1 & W_2 W_1 & 0
\end{bmatrix} \\
&+
\frac{0.5 \exp(W_4 W_3 W_2 W_1)}{[ 1 + \exp(W_4 W_3 W_2 W_1)]^2} \begin{bmatrix}
W_4 W_3 W_2 \\
W_4 W_3 W_1 \\
W_4 W_2 W_1 \\
W_3 W_2 W_1
\end{bmatrix} \begin{bmatrix}
W_4 W_3 W_2 \\
W_4 W_3 W_1 \\
W_4 W_2 W_1 \\
W_3 W_2 W_1
\end{bmatrix}' + I_4,
\end{aligned}
\end{equation}
where $I_4$ is the $4 \times 4$ identity matrix.

We first establish that $\dot{F}(\theta)$ is not globally Lipschitz continuous. With $\theta = (1,-1,W_3,W_3)$ and $\phi = (1,-1,W_3,0)$, it is enough to find a lower bound for the first component of $\dot F(\theta) - \dot F(\phi)$, denoted by $\dot F_1(\theta) - \dot F_1(\phi)$. To this end,
\begin{align}
| \dot F_1(\theta) - \dot F_1(\phi) | = \frac{0.5 W_3^2}{1 + \exp(-W_3^2)} \geq \frac{1}{4}| W_3 - 0|^2.
\end{align}
Thus, $\dot F$ is not globally Lipschitz. 

We now establish that $F$ does not satisfy $(L_0,L_1)$-smoothness. That is, we show that there is no $L_0, L_1 \geq 0$ such that $\inlinenorm{ \ddot F(\theta)} \leq L_0 \inlinenorm{ \dot F(\theta)} + L_1$, where the norms can be chosen arbitrarily owing to the equivalence of norms in finite-dimensional vector spaces. To see this, note that the Frobenius norm of $\ddot F(\theta)$ is lower bounded by the absolute value of the $[1,1]$ entry. Using notation,

\begin{equation}
\frac{0.5 \exp(W_4 W_3 W_2 W_1)}{[ 1 + \exp(W_4 W_3 W_2 W_1)]^2} (W_4 W_3 W_2)^2 + 1 \leq \norm{ \ddot{F}(\theta)}_F.
\end{equation}

Let $\theta = (0, W_4, W_4, W_4)$, then the lower bound is
\begin{equation}
\frac{1}{8} W_4^6 \leq \norm{ \ddot{F}(\theta)}_F.
\end{equation}

Notice, for this same choice of $\theta$, the $l^1$ norm of the gradient is bounded above by
\begin{equation}
\norm{ \dot F(\theta)}_1 \leq \frac{1}{4} |W_4|^3 + 3|W_4|.
\end{equation}

For any choice of $L_0, L_1 > 0$, we conclude that there is a $W_4$ sufficiently large such that, for this parametrization of $\theta$,
\begin{equation}
L_0 \norm{\dot F(\theta)} + L_1 \leq L_0 [\frac{1}{4} |W_4|^3 + 3|W_4|] + L_1 < \frac{1}{8} W_4^6 \leq \norm{ \ddot{F}(\theta)}_F.
\end{equation}
Thus, we see that no $L_0$ nor $L_1$ can exist that will satisfy the $(L_0,L_1)$-smooth assumption for all choices of $\theta$.

To show that the variance is not bounded, we study the variance of the first component of $\dot f(\theta,X)$ which we denote by $\dot f_1(\theta,X)$. By direct calculation,
\begin{equation}
\E{ ( \dot f_1(\theta,X) - \dot F_1(\theta) )^2} = \frac{1}{4} \frac{W_4^2 W_3^2 W_2^2}{[1 + \exp(W_4 W_3 W_2 W_1)]^2}.
\end{equation}
We again consider $\theta = (1,-1,W_3,W_3)$, then the variance at this value of $\theta$ is
\begin{equation}
\frac{1}{4} \frac{W_3^4}{[ 1 + \exp(-W_3^2)]^2} \geq \frac{1}{16}W_3^4.
\end{equation}
Therefore, as $W_3 \to \infty$, the variance goes to infinity. That is, the variance of the stochastic gradients is unbounded. 

On the other hand, the problem does satisfy our assumptions. In particular,
\begin{enumerate}
\item \cref{assumption-flb,assumption-unbiased} are easily verified.
\item Given that $\dot F$ is continuously differentiable, then compactness and continuity of the derivative of $\dot F$ imply that it is locally Lipschitz continuous. Therefore, \cref{assumption-local-holder} is satisfied.
\item Given the computation of the variance for the first component, we have $\inlineE{ \dot f_1(\theta,X)^2 }$ is
\begin{equation}
\frac{1}{4} \frac{W_4^2 W_3^2 W_2^2}{[1 + \exp(W_4 W_3 W_2 W_1)]^2} + \dot F_1(\theta)^2,
\end{equation}
which is a continuous function. By repeating this argument for each component, we conclude that \cref{assumption-moment} is satisfied.
\end{enumerate}

\subsection{Recurrent Neural Network for Binary Classification} \label{subsection-rnn}
Consider observing one of two sequences $(1,0,0,0)$ or $(0,0,0,0)$ with equal probabilities, and suppose that each sequence corresponds to the label $1$ or $0$, respectively. Now consider \cref{figure-rnn} to be a 1-dimensional linear recurrent neural network which reads each element of the sequence and uses a logistic output layer to predict either a label of one or zero. If we fix $H_0=0$ and $W_3 =1$, then the model predicts the probability of a $1$ label as
\begin{equation}
\hat{y}(Z_0,Z_1,Z_2,Z_3) = \frac{\exp(W_1^3 W_2 Z_0)}{1 + \exp(W_1^3 W_2 Z_0)}.
\end{equation}

If we use the binary cross entropy loss with $\ell^2$ regularization, and let $X = (Y,Z_0,Z_1,Z_2,Z_3)$ and $\theta = (W_1,W_2)$  then
\begin{align}
f(\theta,X) 
&= - Y \log \hat{y}(Z_0,Z_1,Z_2,Z_3) - (1 - Y) \log [1 - \hat{y}(Z_0,Z_1,Z_2,Z_3) ] + \frac{1}{2}(W_1^2 + W_2^2)\\
&= -Y \left[ W_1^3W_2Z_0 - \log( 1 + \exp(W_1^3W_2 Z_0) )\right] + (1 - Y)\log( 1 + \exp(W_1^3W_2 Z_0 )) \nonumber \\
&\quad + \frac{1}{2}(W_1^2 + W_2^2) \\
&= - W_1^3W_2 Z_0 Y + \log(1 + \exp(W_1^3W_2 Z_0 )) + \frac{1}{2}(W_1^2+W_2^2),
\end{align}
and
\begin{equation}
\dot f(\theta,X) = \begin{bmatrix}
-3W_1^2 W_2 Z_0 Y + \frac{3W_1^2 W_2 Z_0 \exp( W_1^3 W_2 Z_0)}{1 + \exp(W_1^3 W_2 Z_0)} + W_1 \\
-W_1^3 Z_0 Y + \frac{W_1^3 Z_0 \exp( W_1^3 W_2 Z_0)}{1 + \exp(W_1^3 W_2 Z_0)} + W_2
\end{bmatrix}
\end{equation}

Taking the expectations, we compute
\begin{equation}
F(\theta) = \frac{1}{2}\left[ \log(2) + \log( 1 + \exp(W_1^3W_2) ) - W_1^3W_2 + W_1^2 + W_2^2 \right],
\end{equation}
and
\begin{equation}
\dot F(\theta) = \begin{bmatrix}
\frac{-3W_1^2W_2}{2} \frac{1}{1 + \exp(W_1^3 W_2)} + W_1 \\
\frac{ - W_1^3}{2} \frac{1}{1 + \exp(W_1^3 W_2)} + W_2
\end{bmatrix}.
\end{equation}

Taking another derivative and letting $\ddot F(\theta) = \nabla^2 F(\psi) \vert_{\psi = \theta}$, 
\begin{equation}
\ddot F(\theta) = \begin{bmatrix}
\frac{9W_1^4 W_2^2 \exp(W_1^3W_2)}{2(1+\exp(W_1^3W_2))^2} - \frac{3W_1 W_2}{1+\exp(W_1^3 W_2)}+1 & \frac{3W_1^5 W_2 \exp(W_1^3W_2)}{2(1+\exp(W_1^3 W_2))^2} - \frac{3W_1^2}{2}\frac{1}{1+\exp(W_1^3 W_2)} \\
\frac{3W_1^5 W_2 \exp(W_1^3 W_2)}{2(1+\exp(W_1^3W_2))^2} - \frac{3 W_1^2}{2}\frac{1}{1 + \exp(W_1^3 W_2)} & \frac{W_1^6 \exp(W_1^3 W_2)}{2(1+\exp(W_1^3 W_2))^2} + 1
\end{bmatrix}.
\end{equation}

We first establish that $\dot F$ is not globally Lipschitz continuous. Notice, if we set $W_2 = 1$, then the first and second component of $\dot F(\theta)$ are proportional to $-W_1^2$ and $-W_1^3$ respectively, which are not globally Lipschitz continuous functions. 

We now show that $F$ also does not satisfy $(L_0,L_1)$-smoothness. Notice that, using the bottom right entry of $\ddot F(\theta)$, 
\begin{equation}
\frac{W_1^6 \exp(W_1^3 W_2)}{2(1+\exp(W_1^3 W_2))^2} < \norm{ \ddot F(\theta)}_F,
\end{equation}
and
\begin{equation}
\norm{ \dot F(\theta)}_1 \leq \frac{3W_1^2 |W_2| + |W_1|^3}{2[1 + \exp(W_1^3 W_2)]} + |W_1| + |W_2|.
\end{equation}
If we choose $W_2 = 0$, then, for any $L_0, L_1 \geq 0$ there exists a $|W_1|$ sufficiently large such that
\begin{equation}
\frac{W_1^6}{8} < \norm{ \ddot F(\theta)}_F \not\leq L_0 \norm{ \dot F(\theta)}_1 + L_1 \leq L_0 \left( \frac{|W_1|^3}{4} + |W_1|\right) + L_1.
\end{equation}
Hence, $F(\theta)$ is not $(L_0, L_1)$-smooth.

Moreover, computing the trace of the variance of $\dot f(\theta,X)$, we recover
\begin{equation}
\E{ \norm{ \dot f(\theta,X) - \dot F(\theta) }_2^2} = \left( \frac{3W_1^2W_2}{2[ 1 + \exp(W_1^3 W_2)]}  \right)^2 + \left( \frac{W_1^3}{2[1+\exp(W_1^3W_2)]} \right)^2, 
\end{equation}
which does not satisfy a bounded variance assumption (choose $W_2 = 0$ and let $W_1 \to \infty$). Thus, any work that makes either a global Lipschitz bound on the gradient or a global noise model bound  fails to apply to this simple recurrent neural network training problem. 

On the other hand, the problem does satisfy our assumptions. In particular,
\begin{enumerate}
\item \cref{assumption-flb,assumption-unbiased} are easily verified.
\item Given that $\dot F$ is continuously differentiable, then compactness and continuity of the derivative of $\dot F$ imply that it is locally Lipschitz continuous. Therefore, \cref{assumption-local-holder} is satisfied.
\item Moreover,  
\begin{equation}
\E{ \norm{\dot f(\theta,X)}_2^2} = \left( \frac{3W_1^2W_2}{2[ 1 + \exp(W_1^3 W_2)]}  \right)^2 + \left( \frac{W_1^3}{2[1+\exp(W_1^3W_2)]} \right)^2 + \norm{\dot F(\theta)}_2^2, 
\end{equation}
which is a continuous function. Hence, \cref{assumption-moment} is satisfied.
\end{enumerate}

\subsection{Poisson Regression} \label{subsection-poisson}

Here, we consider the task of estimating a Poisson regression model for data $X=(Y,Z)$ where $Y$ is a count response variable and $Z$ is the covariate. To make this problem simpler, we will assume that both $Y$ and $Z$ are independent Poisson random variables with mean $1$, which implies that the parameter in the model, $\theta^* = 0$. If we use a likelihood framework, then, up to a constant depending on $Y$,
\begin{equation}
f(\theta,X) = -YZ\theta + \exp(\theta Z),
\end{equation}
and
\begin{equation}
\dot f(\theta,X) = -YZ + Z\exp(\theta Z).
\end{equation}

From this, we compute
\begin{equation}
F(\theta) = -\theta + \exp( \exp(\theta) - 1),
\end{equation}
\begin{equation}
\dot F(\theta) = -1 + \exp( \exp(\theta) + \theta - 1),
\end{equation}
and, letting $\nabla^2 F(\psi) |_{\psi=\theta} = \ddot F(\theta)$, 
\begin{equation}
\ddot F(\theta) = (\exp(\theta) + 1) \exp( \exp(\theta) + \theta - 1).
\end{equation}

We begin by showing that $\dot F(\theta)$ is not globally Lipschitz continuous. To do so, for any $\theta > 0$,  note
\begin{equation}
| \dot F(\theta) - \dot F(0) | = \exp( \exp(\theta) + \theta - 1) - 1 > \exp(\theta) - 1  \geq \theta + \theta^2/2.
\end{equation}
Thus, for any $L > 0$ there exists a $\theta > 0$ such that $|\dot F(\theta) - \dot F(0)| > L |\theta|$.

We now show that $F(\theta)$ does not satisfy the $(L_0,L_1)$-smooth assumption. Note, for $\theta \geq 0$,
\begin{equation}
\exp( \exp(\theta) + 2 \theta - 1) < \ddot{F}(\theta),
\end{equation}
and
\begin{equation} \label{eqn-poisson-dotf-upper}
\dot F(\theta) < \exp(\exp(\theta) + \theta - 1).
\end{equation}
It follows that for any $L_0,L_1 > 0$, there exists a $\theta > 0$ such that $L_0 |\dot F(\theta)| + L_1 < \ddot F(\theta).$

For the noise, we compute the second moment of $\dot f(\theta,X)$. That is,
\begin{align}
\E{ \dot f(\theta,X)^2 } &= \E{ Y^2 Z^2 - 2 Y Z^2 \exp(\theta Z) + Z^2 \exp(2 \theta Z) } \\
						&= 4 - 2 \E{ Z^2 \exp(\theta Z)} + \E{ Z^2 \exp(2 \theta Z) } \\
						&= 4 - 2(\exp(\theta) +1)\exp(\exp(\theta) + \theta -1) \nonumber \\
						& + (\exp(2\theta) + 1) \exp(\exp(2\theta) + 2\theta - 1). 
\end{align}

It is clear from this calculation that the variance (computed by subtracting off $\dot F(\theta)^2$) will diverge as $\theta$ tends to infinity. To show that \cite[Assumption 4.3c]{bottou2018} does not apply, it is enough to show that its generalization, \cite[Assumption 2]{khaled2020} does not apply. To this end, we must show that there does not exists a $C_0, C_1, C_2 \geq 0$ such that, $\forall \theta$,
\begin{equation}
\E{ \dot f(\theta,X)^2} \leq C_0 + C_1 F(\theta) + C_2 |\dot F(\theta)|^2.
\end{equation}
From our calculations, it is easy to verify that $F(\theta)$ and $\dot F(\theta)$ are dominated by $\exp(2\exp(\theta))$, and that the second moment of the stochastic gradient is bounded from below by $\exp(\exp(2\theta))$ for $\theta \geq \log(4)$. Hence, for any $C_0,C_1,C_2 \geq 0$, there exists $\theta$ sufficiently large such that 
\begin{equation}
C_0 + C_1 F(\theta) + C_2 |\dot F(\theta)|^2 \leq C_0 + (C_1 + C_2) \exp(2 \exp(\theta)) < \exp(\exp(2\theta)) \leq \E{ \dot f(\theta,X)^2}.
\end{equation}
Thus, \cite[Assumption 4.3c]{bottou2018} and \cite[Assumption 2]{khaled2020} do not hold.

On the other hand, the problem does satisfy our assumptions. In particular,
\begin{enumerate}
\item \cref{assumption-flb,assumption-unbiased} are easily verified.
\item Given that $\ddot F$ is continuous, \cref{assumption-local-holder} is satisfied.
\item Moreover, we can use the calculated value $\inlineE{ \dot f(\theta,X)^2}$, which is a continuous function, as $G(\theta)$ to satisfy \cref{assumption-moment}.
\end{enumerate}

\subsection{Noiseless Feed Forward Network for Binary Classification}

Out of interest, we reconsider the second example but construct a different data distribution that produces noiseless stochastic gradient. Consider the binary classification problem with label $Y$ and feature $Z$ where $(Y,Z) = (0,-1)$ with probability $1/2$ and $(Y,Z) = (1,1)$ with probability $1/2$. We solve this classification problem using the network shown in \cref{figure-ffn} with $\sigma$ linear and $\varphi$ sigmoid. We will train this model using the binary cross entropy loss function. Letting $X = (Y,Z)$ and $\theta = (W_1,W_2,W_3,W_4)$, 
\begin{equation}
f(\theta,X) = -Y\log( \hat y) - (1-Y) \log( 1 - \hat y) + \frac{1}{2}\sum_{i=1}^4 W_i^2,
\end{equation}
and
\begin{equation}
\hat y = \frac{1}{1 + \exp(-W_4W_3 W_2W_1 Z)}.
\end{equation}

Moreover,
\begin{equation}
\dot f(\theta,X) = Z (\hat{y} - Y) \begin{bmatrix}
W_4 W_3 W_2 \\
W_4 W_3 W_1 \\
W_4 W_2 W_1 \\
W_3 W_2 W_1
\end{bmatrix} + \begin{bmatrix}
W_1 \\
W_2 \\
W_3 \\
W_4
\end{bmatrix},
\end{equation}
and, consequently,
\begin{equation}
\dot F(\theta) = \frac{-1}{1 + \exp(W_4 W_3 W_2 W_1)}\begin{bmatrix}
W_4 W_3 W_2 \\
W_4 W_3 W_1 \\
W_4 W_2 W_1 \\
W_3 W_2 W_1
\end{bmatrix} + \begin{bmatrix}
W_1 \\
W_2 \\
W_3 \\
W_4
\end{bmatrix}.
\end{equation}

We first establish that $\dot{F}(\theta)$ is not globally Lipschitz continuous. With $\theta = (1,-1,W_3,W_3)$ and $\phi = (1,0,0,0)$, it is enough to find a lower bound for the first component of $\dot F(\theta) - \dot F(\phi)$, denoted by $\dot F_1(\theta) - \dot F_1(\phi)$. To this end,
\begin{align}
| \dot F_1(\theta) - \dot F_1(\phi) | = \frac{W_3^2}{1 + \exp(-W_3^2)} \geq \frac{1}{2}| W_3 - 0|^2.
\end{align}
Thus, $\dot F$ is not globally Lipschitz. 

On the other hand, the problem does satisfy our assumptions. In particular,
\begin{enumerate}
\item \cref{assumption-flb,assumption-unbiased} are easily verified.
\item Given that $\dot F$ is continuously differentiable, then compactness and continuity of the derivative of $\dot F$ imply that it is locally Lipschitz continuous. Therefore, \cref{assumption-local-holder} is satisfied.
\item Moreover, $\dot f(\theta,Z) = \dot F(\theta)$---that is, there $\dot f(\theta,Z)$ has zero variance for the distribution that we have constructed. Therefore, 
\begin{equation}
\E{ \norm{\dot f(\theta,X)}_2^2} = \norm{\dot F(\theta)}_2^2, 
\end{equation}
which is a continuous function. Hence, \cref{assumption-moment} is satisfied.
\end{enumerate}

\section{Technical Lemmas} \label{section-technical-lemmas}
\begin{lemma}[\cref{lemma-taylor-holder-bound}]
Suppose \cref{assumption-flb,assumption-local-holder} hold. Then, for any $\theta, \varphi \in \mathbb{R}^p$,
\begin{equation}
F(\varphi) - \flb \leq F(\theta) - \flb + \dot F(\theta)'(\varphi - \theta) + \frac{L(\theta, \varphi)}{1+\alpha} \norm{ \varphi - \theta}_2^{1+\alpha}.
\end{equation}
\end{lemma}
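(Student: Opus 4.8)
The plan is to apply the fundamental theorem of calculus to the scalar function $t \mapsto F(\psi(t))$ along the line segment $\psi(t) = \theta + t(\varphi - \theta)$, $t \in [0,1]$, and then split off a first-order term from a H\"older remainder. Since \cref{assumption-local-holder} makes $\dot F$ continuous, $t \mapsto F(\psi(t))$ is continuously differentiable with derivative $\dot F(\psi(t))'(\varphi - \theta)$, so that
\[
F(\varphi) - F(\theta) = \int_0^1 \dot F(\psi(t))'(\varphi - \theta)\,dt.
\]
Adding and subtracting the linearization at $\theta$, I would rewrite the right-hand side as $\dot F(\theta)'(\varphi - \theta) + \int_0^1 [\dot F(\psi(t)) - \dot F(\theta)]'(\varphi - \theta)\,dt$, isolating the term that will become the H\"older remainder.

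Next I would bound that remainder integral. By Cauchy--Schwarz its absolute value is at most $\int_0^1 \norm{\dot F(\psi(t)) - \dot F(\theta)}_2\, \norm{\varphi - \theta}_2\,dt$. The key observation is that every intermediate point satisfies $\norm{\psi(t) - \theta}_2 = t\,\norm{\varphi - \theta}_2 \leq \norm{\varphi - \theta}_2$, so $\psi(t) \in \overline{B(\theta, \norm{\varphi - \theta}_2)}$ for all $t \in [0,1]$. Hence the definition of the local H\"older constant $L(\theta,\varphi)$ in \cref{def-local-holder-constant} applies uniformly along the entire segment, yielding $\norm{\dot F(\psi(t)) - \dot F(\theta)}_2 \leq L(\theta,\varphi)\, t^\alpha \,\norm{\varphi - \theta}_2^\alpha$. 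Substituting this and using $\int_0^1 t^\alpha\,dt = 1/(1+\alpha)$ produces the remainder bound $\frac{L(\theta,\varphi)}{1+\alpha}\norm{\varphi - \theta}_2^{1+\alpha}$; combining it with the fundamental-theorem identity and subtracting $\flb$ from both sides gives the stated inequality.

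This is essentially a Taylor expansion with a H\"older-continuous gradient, so I do not expect a serious obstacle—only a few bookkeeping points require care. First, I must confirm $L(\theta,\varphi)$ is finite, which follows because $\dot F$ is locally $\alpha$-H\"older continuous on the compact ball $\overline{B(\theta,\norm{\varphi-\theta}_2)}$ under \cref{assumption-local-holder}. Second, the degenerate case $\varphi = \theta$ should be dispatched separately, but there both sides collapse to $F(\theta) - \flb$ and the inequality holds trivially. Finally, \cref{assumption-flb} plays no role in the algebra itself—the constant $\flb$ simply cancels—and is invoked only so that the optimality gap $F - \flb$ appearing on each side is the nonnegative quantity used in the subsequent recursions.
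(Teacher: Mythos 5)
Your proposal is correct and follows essentially the same route as the paper's proof: the fundamental theorem of calculus along the segment, adding and subtracting $\dot F(\theta)$, bounding the remainder via \cref{def-local-holder-constant} with the factor $t^\alpha \norm{\varphi-\theta}_2^\alpha$, and integrating $\int_0^1 t^\alpha\,dt = 1/(1+\alpha)$. The extra bookkeeping you include (Cauchy--Schwarz made explicit, finiteness of $L(\theta,\varphi)$, the degenerate case $\varphi=\theta$, and the observation that $\flb$ merely cancels) is sound and consistent with the paper.
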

\begin{proof}
By Taylor's theorem, 
\begin{equation}
F(\varphi) - \flb = F(\theta) - \flb + \int_0^1 \dot F( \theta + t(\varphi - \theta) )'(\varphi - \theta) dt.
\end{equation}
Now, add and subtract $\dot F(\theta)$ to $\dot F(\theta + t(\varphi - \theta))$ in the integral, then apply \cref{assumption-local-holder}. By \cref{def-local-holder-constant},
\begin{equation}
\norm{ \dot F(\theta + t(\varphi - \theta)) - \dot F(\theta) }_2 \leq L(\theta, \varphi) t^\alpha \norm{ \theta - \varphi}_2^\alpha.
\end{equation}
We conclude,
\begin{equation}
F(\varphi) - \flb \leq F(\theta) - \flb + \dot F(\theta)' (\varphi - \theta) + L(\theta,\varphi) \norm{ \varphi - \theta }_2^{1+\alpha}\int_0^1  t^\alpha dt.
\end{equation}
By computing the integral, the result follows.
\end{proof}

\begin{lemma} \label{lemma-lr-eig-bound}
Suppose $\lbrace M_k : k +1 \in \mathbb{N} \rbrace$ satisfy \cref{prop-psd,prop-cond-num}. Then $\forall C > 0$, $\exists K \in \mathbb{N}$ such that $\forall k \geq K$,
\begin{equation}
\lambda_{\min}( M_k) - \frac{C}{2} \lambda_{\max}(M_k)^{1+\alpha} \geq \frac{1}{2} \lambda_{\min}(M_k).
\end{equation}
\end{lemma}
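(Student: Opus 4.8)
The plan is to show that the displayed inequality, after elementary rearrangement, is nothing more than a restatement of the limit in \cref{prop-cond-num}. First I would collect the $\lambda_{\min}(M_k)$ terms: subtracting $\frac{1}{2}\lambda_{\min}(M_k)$ from both sides, the target inequality is equivalent to
\begin{equation}
\frac{1}{2}\lambda_{\min}(M_k) \geq \frac{C}{2}\lambda_{\max}(M_k)^{1+\alpha},
\end{equation}
that is, to $\lambda_{\min}(M_k) \geq C\,\lambda_{\max}(M_k)^{1+\alpha}$.

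Next, since \cref{prop-psd} guarantees that each $M_k$ is positive definite, $\lambda_{\min}(M_k) > 0$, so division is legitimate. Writing $\kappa(M_k) = \lambda_{\max}(M_k)/\lambda_{\min}(M_k)$ and substituting $\lambda_{\min}(M_k) = \lambda_{\max}(M_k)/\kappa(M_k)$, the inequality becomes
\begin{equation}
\frac{\lambda_{\max}(M_k)}{\kappa(M_k)} \geq C\,\lambda_{\max}(M_k)^{1+\alpha},
\end{equation}
which, after dividing through by the positive quantity $\lambda_{\max}(M_k)$, is equivalent to $\lambda_{\max}(M_k)^{\alpha}\kappa(M_k) \leq 1/C$.

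Finally, I would invoke \cref{prop-cond-num}: since $\lim_{k \to \infty}\lambda_{\max}(M_k)^\alpha \kappa(M_k) = 0$ and $C > 0$, applying the definition of the limit with tolerance $1/C > 0$ furnishes a $K \in \mathbb{N}$ such that $\lambda_{\max}(M_k)^\alpha \kappa(M_k) < 1/C$ for all $k \geq K$. Reversing the chain of equivalences on this same $K$ then yields the claim.

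In truth there is no substantive obstacle here: the argument is a sequence of reversible algebraic manipulations terminating in the hypothesis. The only points needing any care are that positive definiteness (\cref{prop-psd}) makes every division by $\lambda_{\min}(M_k)$ and $\lambda_{\max}(M_k)$ valid, and that $C > 0$ makes $1/C$ an admissible tolerance in the limit definition — both immediate from the stated assumptions.
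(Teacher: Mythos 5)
Your proof is correct and takes the same route as the paper: the paper's (much terser) proof also rearranges the inequality into the equivalent statement $1/C \geq \lambda_{\max}(M_k)^\alpha \kappa(M_k)$ and then invokes \cref{prop-cond-num}. You have simply spelled out the intermediate algebra and the positivity justifications that the paper leaves implicit.
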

\begin{proof}
Fix $C > 0$. Rearranging the conclusion, we see that it is equivalent to prove that $\exists K \in \mathbb{N}$ such that $\forall k \geq K$, $1/C \geq \lambda_{\max}(M_k)^\alpha \kappa(M_k)$. This follows from \cref{prop-cond-num}.
\end{proof}

\begin{lemma} \label{lemma-taylor-lower-bound}
For any $\theta \in \mathbb{R}^p$, $v \in \mathbb{R}$, $L > 0$ and $\alpha \in (0,1]$,
\begin{equation}
\frac{L}{1+\alpha} v^{1+\alpha} - \norm{ \dot F(\theta) }_2 v \geq -\frac{\alpha}{1+\alpha} \left[ \frac{\norm{ \dot F(\theta)}_2^{1+\alpha}}{L}  \right]^{1/\alpha}.
\end{equation}
\end{lemma}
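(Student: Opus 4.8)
The plan is to read the left-hand side as a single scalar function of $v$ and to minimize it. Since the exponent $1+\alpha$ is generally non-integer, $v^{1+\alpha}$ is only defined for $v \geq 0$, and in every application of this lemma $v$ will be a (nonnegative) norm; so I would treat $v \in [0,\infty)$. Abbreviating $a := \norm{\dot F(\theta)}_2 \geq 0$, the claim reduces to showing
\begin{equation}
\inf_{v \geq 0} \left[ \frac{L}{1+\alpha} v^{1+\alpha} - a v \right] \geq -\frac{\alpha}{1+\alpha} \left[ \frac{a^{1+\alpha}}{L} \right]^{1/\alpha},
\end{equation}
where the right-hand side turns out to be exactly the minimum value, so the inequality is in fact tight.

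First I would dispatch the degenerate case $a = 0$ separately: the bracketed function is then $\frac{L}{1+\alpha} v^{1+\alpha} \geq 0$ for all $v \geq 0$, while the right-hand side is $0$, so the inequality holds (with equality at $v = 0$). For $a > 0$, I would set $g(v) = \frac{L}{1+\alpha} v^{1+\alpha} - a v$ and note that $g$ is convex on $[0,\infty)$, since $v \mapsto v^{1+\alpha}$ is convex there for $\alpha > 0$ and $L > 0$; hence any interior critical point is the global minimizer. Differentiating gives $g'(v) = L v^\alpha - a$, which vanishes uniquely at $v^* = (a/L)^{1/\alpha} > 0$, and the sign change of $g'$ (or convexity) confirms this is the global minimum over $[0,\infty)$.

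The remaining work is to evaluate $g(v^*)$, which is a routine but slightly fiddly manipulation of fractional exponents. Substituting $v^* = (a/L)^{1/\alpha}$ and using the identity $1 - (1+\alpha)/\alpha = -1/\alpha$, both terms collapse to multiples of $\left[ a^{1+\alpha}/L \right]^{1/\alpha}$: the first term yields $\frac{1}{1+\alpha}\left[ a^{1+\alpha}/L \right]^{1/\alpha}$ and the second yields $-\left[ a^{1+\alpha}/L \right]^{1/\alpha}$, so that $g(v^*) = \left(\frac{1}{1+\alpha} - 1\right)\left[ a^{1+\alpha}/L \right]^{1/\alpha} = -\frac{\alpha}{1+\alpha}\left[ a^{1+\alpha}/L \right]^{1/\alpha}$. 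Restoring $a = \norm{\dot F(\theta)}_2$ gives exactly the stated bound. There is no genuine analytic obstacle here beyond this elementary convex minimization; the only points requiring care are the bookkeeping of the fractional exponents (in particular the identity $1 - (1+\alpha)/\alpha = -1/\alpha$) and the implicit restriction to $v \geq 0$.
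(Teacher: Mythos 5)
Your proposal is correct and follows essentially the same route as the paper's proof: minimize the left-hand side in $v$, locate the critical point at $v^\alpha = \norm{\dot F(\theta)}_2/L$, and substitute back to obtain the stated bound with equality. Your additional care (restricting to $v \geq 0$, handling $\norm{\dot F(\theta)}_2 = 0$, and invoking convexity to certify the global minimum) only makes explicit the details the paper leaves implicit.
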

\begin{proof}
If we minimize the left hand side of the inequality, we see that a minimum value occurs when $v^{\alpha} = \norm{ \dot F(\theta) }_2 / L \geq 0$. Solving for $v$ and plugging this back into the left hand side, we conclude that the inequality holds.
\end{proof}

\section{Global Convergence Analysis} \label{section-global-convergence-analysis}
We begin by first deriving a recursive relationship between the optimality gap at iteration $k+1$ and the optimality gap at iteration $k$ on the events $\lbrace \mathcal{B}_j(R) \rbrace$ as defined in \cref{eqn-iterates-in-ball} for arbitrary $R \geq 0$. Using this result, we then provide an analysis of the convergence of the objective function. Then, we turn our attention to the gradient function. Note, $B(\theta,r)$ is the open ball around $\theta$ of radius $r$.

\subsection{A Recursive Relationship}

\begin{lemma}[\cref{lemma-recursion-constrained}]
Let $\lbrace M_k \rbrace$ satisfy \cref{prop-psd}.
Suppose \cref{assumption-flb,assumption-local-holder,assumption-unbiased,assumption-moment} hold. 
Let $\lbrace \theta_k \rbrace$ satisfy \cref{eqn-sgd-update}.
Then, $\forall R \geq 0$,
\begin{equation}
\begin{aligned}
&\cond{ [F(\theta_{k+1}) - \flb] \1{ \mathcal{B}_{k+1}(R) } }{\mathcal{F}_k} \leq [F(\theta_k) - \flb] \1{ \mathcal{B}_{k}(R) } \\
&\quad - \lambda_{\min}(M_k) \norm{ \dot{F}(\theta_k)}_2^2 \1{\mathcal{B}_k(R)} + \frac{L_{R+1} + \partial F_R}{1+\alpha} \lambda_{\max}(M_k)^{1+\alpha} G_R,
\end{aligned}
\end{equation}
where $G_R = \sup_{\theta \in \overline{B(0,R)}} G(\theta) < \infty$ with $G(\theta)$; and $\partial F_R  = \sup_{ \theta \in \overline{B(0,R)} } \inlinenorm{ \dot{F}(\theta)}_2 (1+\alpha) < \infty$.
\end{lemma}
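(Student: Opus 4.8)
The plan is to reduce the claim to a conditional-expectation estimate on the event $\mathcal{B}_k(R)$ and then resolve the coupling between the local H\"older constant and the SGD step. Since $\mathcal{B}_{k+1}(R) = \mathcal{B}_k(R) \cap \{\norm{\theta_{k+1}}_2 \le R\}$ and $\1{\mathcal{B}_k(R)}$ is $\mathcal{F}_k$-measurable, I would first factor
\[
\cond{[F(\theta_{k+1}) - \flb]\1{\mathcal{B}_{k+1}(R)}}{\mathcal{F}_k} = \1{\mathcal{B}_k(R)}\,\cond{[F(\theta_{k+1}) - \flb]\1{\norm{\theta_{k+1}}_2 \le R}}{\mathcal{F}_k},
\]
so that on $\mathcal{B}_k(R)$ the point $\theta_k$ is fixed with $\norm{\theta_k}_2 \le R$, and all randomness sits in $\theta_{k+1} = \theta_k - M_k \dot f(\theta_k, X_{k+1})$.

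The central difficulty is the one flagged after \cref{lemma-taylor-holder-bound}: applying that lemma with $\varphi = \theta_{k+1}$ and $\theta = \theta_k$ introduces the random constant $L(\theta_k,\theta_{k+1})$, which cannot be extracted from the conditional expectation since it is coupled to the unbounded step $\norm{\theta_{k+1}-\theta_k}_2$. My key device is to enlarge the radius by one: set $E = \{\norm{\theta_{k+1}}_2 \le R+1\}$. Because $F - \flb \ge 0$, I have $[F(\theta_{k+1}) - \flb]\1{\norm{\theta_{k+1}}_2 \le R} \le [F(\theta_{k+1}) - \flb]\1{E}$, and on $E$ both $\theta_k,\theta_{k+1} \in \overline{B(0,R+1)}$, so $L(\theta_k,\theta_{k+1}) \le L_{R+1}$ by \cref{def-local-holder-constant}. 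Applying \cref{lemma-taylor-holder-bound} on $E$ yields the pointwise bound
\[
[F(\theta_{k+1}) - \flb]\1{E} \le \left[(F(\theta_k) - \flb) + \dot F(\theta_k)'(\theta_{k+1} - \theta_k) + \tfrac{L_{R+1}}{1+\alpha}\norm{\theta_{k+1} - \theta_k}_2^{1+\alpha}\right]\1{E}.
\]

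I would then take $\cond{\cdot}{\mathcal{F}_k}$ term by term, writing $\1{E} = 1 - \1{E^c}$ to expose the indicator-free pieces. The constant term gives $F(\theta_k)-\flb$; the linear term, via \cref{assumption-unbiased} and \cref{prop-psd}, gives $\cond{\dot F(\theta_k)'(\theta_{k+1}-\theta_k)}{\mathcal{F}_k} = -\dot F(\theta_k)'M_k\dot F(\theta_k) \le -\lambda_{\min}(M_k)\norm{\dot F(\theta_k)}_2^2$; and the H\"older term, using $\norm{\theta_{k+1}-\theta_k}_2 \le \lambda_{\max}(M_k)\norm{\dot f(\theta_k,X_{k+1})}_2$ together with \cref{assumption-moment} and $G(\theta_k) \le G_R$ on $\mathcal{B}_k(R)$, gives $\tfrac{L_{R+1}}{1+\alpha}\lambda_{\max}(M_k)^{1+\alpha}G_R$. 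The one genuinely delicate piece is the leftover correction $-\cond{\dot F(\theta_k)'(\theta_{k+1}-\theta_k)\1{E^c}}{\mathcal{F}_k}$: here I bound $|\dot F(\theta_k)'(\theta_{k+1}-\theta_k)| \le \tfrac{\partial F_R}{1+\alpha}\norm{\theta_{k+1}-\theta_k}_2$ on $\mathcal{B}_k(R)$, and then crucially note that on $E^c$ the step satisfies $\norm{\theta_{k+1}-\theta_k}_2 \ge \norm{\theta_{k+1}}_2 - \norm{\theta_k}_2 > 1$, whence $\norm{\theta_{k+1}-\theta_k}_2 \le \norm{\theta_{k+1}-\theta_k}_2^{1+\alpha}$. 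This converts the linear correction into a $(1+\alpha)$-power term that \cref{assumption-moment} controls by $\tfrac{\partial F_R}{1+\alpha}\lambda_{\max}(M_k)^{1+\alpha}G_R$.

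Summing the four contributions produces exactly the stated right-hand side, and multiplying back by $\1{\mathcal{B}_k(R)}$ (then dropping it on the nonnegative noise term) closes the argument. The main obstacle is precisely this correction term, and the radius-enlargement trick is what makes both the H\"older bound ($L_{R+1}$) and the ``step-exceeds-one'' estimate available at once---explaining why $L_{R+1}$ rather than $L_R$, and the extra $\partial F_R$, appear. Finiteness of $G_R$, $\partial F_R$, and $L_{R+1}$ follows from upper semi-continuity of $G$ (\cref{assumption-moment}), continuity of $\dot F$ (\cref{assumption-local-holder}), and the remark after \cref{def-local-holder-constant}, respectively.
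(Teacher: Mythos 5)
Your proof is correct, and it rests on the same essential ingredients as the paper's: the enlargement of the radius from $R$ to $R+1$ so that $L_{R+1}$ controls the local H\"older constant, the observation that escaping past radius $R+1$ from inside $\overline{B(0,R)}$ forces $\norm{\theta_{k+1}-\theta_k}_2 \geq 1$ and hence $\norm{\theta_{k+1}-\theta_k}_2 \leq \norm{\theta_{k+1}-\theta_k}_2^{1+\alpha}$, the Cauchy--Schwarz bound yielding the constant $\partial F_R$, and the final appeal to \cref{assumption-unbiased,assumption-moment} together with $G(\theta_k) \leq G_R$ on $\mathcal{B}_k(R)$. Where you genuinely differ is in the decomposition of indicators, and your version is cleaner. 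The paper applies \cref{lemma-taylor-holder-bound} while keeping the indicator $\1{\mathcal{B}_{k+1}(R)}$, then decomposes it against $\1{\mathcal{B}_k(R)}$; the resulting correction term (the Taylor right-hand side multiplied by $\1{\mathcal{B}_{k+1}(R)} - \1{\mathcal{B}_k(R)}$) is dangerous only when the Taylor expression is negative, and the paper must run a short contradiction argument---invoking \cref{assumption-flb} and \cref{lemma-taylor-holder-bound} a second time---to conclude that this negativity forces $\norm{\theta_{k+1}}_2 > R+1$, and only then does the large-step property become available. You sidestep that step entirely: by first enlarging the indicator on the nonnegative quantity $F(\theta_{k+1})-\flb$ from $\lbrace \norm{\theta_{k+1}}_2 \leq R \rbrace$ to $E = \lbrace \norm{\theta_{k+1}}_2 \leq R+1 \rbrace$, your correction term is supported on $E^c$ by construction, where the reverse triangle inequality immediately gives a step of length greater than one. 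Both routes produce identical constants, so nothing is lost in sharpness; what your decomposition buys is the elimination of the paper's implicit-escape argument, making the proof of the correction bound a one-line consequence of how the event was chosen rather than a separate deduction.
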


\begin{proof}
Fix $R \geq 0$. For any $k +1 \in \mathbb{N}$, the definition of local H\"{o}lder continuity implies that $L_{R+1}$ is well defined (see \cref{def-local-holder-constant}). Therefore, \cref{lemma-taylor-holder-bound} implies
\begin{equation}
\begin{aligned}
&[F(\theta_{k+1}) - \flb] \1{ \mathcal{B}_{k+1}(R+1) } \\
&\quad \leq \left( [F(\theta_k) - \flb] + \dot{F}(\theta_k)'(\theta_{k+1} - \theta_k) + \frac{L_{R+1}}{1+\alpha} \norm{ \theta_{k+1} - \theta_k }_2^{1+\alpha} \right) \1{ \mathcal{B}_{k+1}(R+1) }.
\end{aligned}
\end{equation}

Now, since $\overline{B(0,R)} \subset \overline{B(0,R+1)}$, it also holds true that
\begin{equation}
\begin{aligned}
&[F(\theta_{k+1}) - \flb] \1{ \mathcal{B}_{k+1}(R) } \\
&\quad \leq \left( [F(\theta_k) - \flb] + \dot{F}(\theta_k)'(\theta_{k+1} - \theta_k) + \frac{L_{R+1}}{1+\alpha} \norm{ \theta_{k+1} - \theta_k }_2^{1+\alpha} \right) \1{ \mathcal{B}_{k+1}(R) }.
\end{aligned}
\end{equation}

Our goal now is to replace $\mathcal{B}_{k+1}(R)$ on the right hand side by $\mathcal{B}_{k}(R)$. However, there is a technical difficulty which we must address. First, it follows from the preceding inequality that
\begin{equation} \label{ineq-30sx4}
\begin{aligned}
&[F(\theta_{k+1}) - \flb] \1{ \mathcal{B}_{k+1}(R) } \\
& \leq \left( [F(\theta_k) - \flb] + \dot{F}(\theta_k)'(\theta_{k+1} - \theta_k) + \frac{L_{R+1}}{1+\alpha} \norm{ \theta_{k+1} - \theta_k }_2^{1+\alpha} \right) \\
&\quad \times \bigg{(} \1{ \mathcal{B}_{k+1}(R) } - \1{\mathcal{B}_k(R)} \bigg{)} \\
&\quad + \left( [F(\theta_k) - \flb] + \dot{F}(\theta_k)'(\theta_{k+1} - \theta_k) + \frac{L_{R+1}}{1+\alpha} \norm{ \theta_{k+1} - \theta_k }_2^{1+\alpha} \right) \1{ \mathcal{B}_{k}(R) }.
\end{aligned}
\end{equation}

The first term on the right hand side of the inequality only contributes meaningfully if it is positive. Since $\1{ \mathcal{B}_{k}(R) } \geq \1{ \mathcal{B}_{k+1}(R) }$, then two statements hold: (i) $\1{\mathcal{B}_k(R)}\1{\mathcal{B}_{k+1}(R)} = \1{\mathcal{B}_{k+1}(R)}$; and (ii) the first term of the right hand side of \cref{ineq-30sx4} is positive if and only if 
\begin{equation} \label{event-d49fa}
\left([F(\theta_k) - \flb] + \dot{F}(\theta_k)'(\theta_{k+1} - \theta_k) + \frac{L_{R+1}}{1+\alpha} \norm{ \theta_{k+1} - \theta_k }_2^{1+\alpha}\right) \1{\mathcal{B}_k(R)} < 0.
\end{equation}
By the choice of $L_{R+1}$, \cref{assumption-flb} and \cref{lemma-taylor-holder-bound} imply that if \cref{event-d49fa} occurs, then $\inlinenorm{\theta_{k+1}}_2 > R + 1 \geq \inlinenorm{\theta_k}_2 + 1$. By the reverse triangle inequality and \cref{eqn-sgd-update}, if \cref{event-d49fa} occurs, then $\inlinenorm{ M_k \dot{f}(\theta_k, X_{k+1})}_2 \geq 1$. Hence,
\begin{equation}
\begin{aligned}
&\left( [F(\theta_k) - \flb] + \dot{F}(\theta_k)'(\theta_{k+1} - \theta_k) + \frac{L_{R+1}}{1+\alpha} \norm{ \theta_{k+1} - \theta_k }_2^{1+\alpha} \right) \\
&\quad \times \bigg{(} \1{ \mathcal{B}_{k+1}(R) } - \1{\mathcal{B}_k(R)} \bigg{)} \\
&\leq \left( -[F(\theta_k) - \flb] - \dot{F}(\theta_k)'(\theta_{k+1} - \theta_k) - \frac{L_{R+1}}{1+\alpha} \norm{ \theta_{k+1} - \theta_k }_2^{1+\alpha} \right) \\
&\quad \times \bigg{(} \1{ \mathcal{B}_{k}(R) } - \1{\mathcal{B}_{k+1}(R)} \bigg{)} \1{ \mathcal{B}_k(R)} \1{ \norm{M_k \dot f (\theta_k, X_{k+1} )}_2 \geq 1 }.
\end{aligned}
\end{equation}

We now compute another coarse upper bound for this inequality. Note, by \cref{assumption-flb} and Cauchy-Schwarz, 
\begin{align}
&\begin{aligned}
&\left( -[F(\theta_k) - \flb] - \dot{F}(\theta_k)'(\theta_{k+1} - \theta_k) - \frac{L_{R+1}}{1+\alpha} \norm{ \theta_{k+1} - \theta_k }_2^{1+\alpha} \right) \\
&\quad \times \bigg{(} \1{ \mathcal{B}_{k}(R) } - \1{\mathcal{B}_{k+1}(R)} \bigg{)} \1{ \mathcal{B}_k(R)} \1{ \norm{M_k \dot f (\theta_k, X_{k+1} )} _2 \geq 1 }
\end{aligned} \\
&\quad \leq \norm{ \dot F(\theta_k) }_2 \norm{ M_k \dot f (\theta_k, X_{k+1} ) }_2 \1{ \mathcal{B}_k(R)} \1{ \norm{M_k \dot f (\theta_k, X_{k+1} )}_2 \geq 1 } \\
&\quad \leq \norm{ \dot F(\theta_k) }_2 \norm{ M_k \dot f (\theta_k, X_{k+1} ) }_2^{1+\alpha}\1{ \mathcal{B}_k(R)} \\
&\quad \leq \frac{\partial F_R}{1+\alpha}  \norm{ M_k \dot f (\theta_k, X_{k+1} ) }_2^{1+\alpha}\1{ \mathcal{B}_k(R)},
\end{align}
where $\partial F_R = \sup_{ \theta \in \overline{B(0,R)} } \inlinenorm{ \dot{F}(\theta)}_2 (1 + \alpha) < \infty$ given that $\inlinenorm{\dot{F}(\theta)}_2$ is a continuous function of $\theta$.

Applying this inequality to \cref{ineq-30sx4}, we conclude
\begin{equation}
\begin{aligned}
&[F(\theta_{k+1}) - \flb] \1{ \mathcal{B}_{k+1}(R) } \\
& \leq \left( [F(\theta_k) - \flb] - \dot{F}(\theta_k)'M_k \dot f(\theta_k, X_{k+1}) + \frac{L_{R+1} + \partial F_R}{1+\alpha} \norm{ M_k \dot f(\theta_k, X_{k+1}) }_2^{1+\alpha} \right) \\
& \quad \times \1{ \mathcal{B}_{k}(R) }.
\end{aligned}
\end{equation}

By \cref{assumption-unbiased},
\begin{equation}
\begin{aligned}
&\cond{[F(\theta_{k+1}) - \flb] \1{ \mathcal{B}_{k+1}(R) }}{\mathcal{F}_k} \\
&\leq \left( [F(\theta_k) - \flb] - \dot{F}(\theta_k)'M_k \dot{F}(\theta_k) + \frac{L_{R+1} + \partial F_R}{1+\alpha}\cond{ \norm{ M_k \dot{f}(\theta_k,X_{k+1}) }_2^{1+\alpha} }{\mathcal{F}_k} \right)\\
&\quad \times \1{ \mathcal{B}_{k}(R) }.
\end{aligned}
\end{equation}

Using \cref{prop-psd,assumption-moment},
\begin{equation}
\begin{aligned}
&\cond{[F(\theta_{k+1}) - \flb] \1{ \mathcal{B}_{k+1}(R) }}{\mathcal{F}_k} \\
&\leq \left( [F(\theta_k) - \flb] - \lambda_{\min}(M_k)\norm{\dot{F}(\theta_k)}_2^2 + \frac{L_{R+1} + \partial F_R}{1+\alpha} \lambda_{\max}(M_k)^{1+\alpha} G(\theta_k) \right) \1{ \mathcal{B}_{k}(R) }.
\end{aligned}
\end{equation}

By \cref{assumption-moment}, $G$ is upper semicontinuous and $\overline{B(0,R)}$ is compact, which implies that $G_R$ is well defined and finite.
The result follows.
\end{proof}

\subsection{Objective Function Analysis}

\begin{corollary} \label{corollary-convergence-objective-constrained}
Let $\lbrace \theta_k \rbrace$ be defined as in \cref{eqn-sgd-update} satisfying \cref{prop-psd,prop-max-eig}. 
Suppose \cref{assumption-flb,assumption-local-holder,assumption-unbiased,assumption-moment} hold.
Then, there exists a finite random variable $F_{\lim}$ such that on the event $\lbrace \sup_{k} \inlinenorm{\theta_k}_2 < \infty \rbrace$, $\lim_{k \to \infty} F(\theta_k) = F_{\lim}$ with probability one.
\end{corollary}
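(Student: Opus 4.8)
The plan is to recognize \cref{lemma-recursion-constrained} as an instance of the Robbins--Siegmund almost-supermartingale convergence theorem (the cited ``standard martingale results'') and then patch together the resulting per-$R$ conclusions. First I would fix $R \geq 0$ and set $Y_k := [F(\theta_k) - \flb]\1{\mathcal{B}_k(R)}$, which is nonnegative by \cref{assumption-flb}. Writing the recursion of \cref{lemma-recursion-constrained} as $\cond{Y_{k+1}}{\mathcal{F}_k} \leq Y_k - Z_k + \beta_k$, I identify $Z_k := \lambda_{\min}(M_k)\norm{\dot F(\theta_k)}_2^2 \1{\mathcal{B}_k(R)} \geq 0$ as an $\mathcal{F}_k$-measurable nonnegative term and $\beta_k := \frac{L_{R+1} + \partial F_R}{1+\alpha}\lambda_{\max}(M_k)^{1+\alpha} G_R$ as a deterministic nonnegative sequence. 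The crucial observation is that $\beta_k$ is summable: $L_{R+1}$, $\partial F_R$, and $G_R$ are finite constants (the first by local H\"older continuity, the latter two by continuity and upper semicontinuity on the compact ball $\overline{B(0,R)}$), while $\sum_k \lambda_{\max}(M_k)^{1+\alpha} = S < \infty$ by \cref{prop-max-eig}. This matches the hypotheses of the cited theorem, so $Y_k$ converges almost surely to a finite limit $Y_\infty^{(R)}$ (and as a byproduct $\sum_k Z_k < \infty$ almost surely, which is exactly what the later gradient analysis will need).

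Next I would transfer this from the truncated process to the objective itself on the bounded-iterate event. The key point is that on $\lbrace \sup_k \norm{\theta_k}_2 \leq R \rbrace$ every iterate lies in $\overline{B(0,R)}$, so $\1{\mathcal{B}_k(R)} = 1$ for all $k$, and therefore $Y_k = F(\theta_k) - \flb$ on this event. Consequently $F(\theta_k) \to Y_\infty^{(R)} + \flb$, a finite limit, with probability one on $\lbrace \sup_k \norm{\theta_k}_2 \leq R \rbrace$.

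Finally I would pass to the full event through a countable union. Since $\lbrace \sup_k \norm{\theta_k}_2 < \infty \rbrace = \bigcup_{R \in \mathbb{N}} \lbrace \sup_k \norm{\theta_k}_2 \leq R \rbrace$ and a countable union of probability-one statements still holds with probability one, I can define $F_{\lim}$ on each $\lbrace \sup_k \norm{\theta_k}_2 \leq R \rbrace$ as the common value $\lim_k F(\theta_k)$; these definitions are mutually consistent because the limit of the sequence $F(\theta_k)$ does not depend on which $R$ happens to bound the iterates. This produces a single finite random variable $F_{\lim}$ with $\lim_k F(\theta_k) = F_{\lim}$ on $\lbrace \sup_k \norm{\theta_k}_2 < \infty \rbrace$ almost surely.

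The main obstacle here is conceptual rather than computational: one cannot obtain an almost-supermartingale directly for $F(\theta_k) - \flb$, because the local H\"older constant $L(\theta_k,\theta_{k+1})$ is unbounded and statistically coupled to the step $\norm{\theta_{k+1} - \theta_k}_2$ --- precisely the dependence that \cref{lemma-recursion-constrained} sidesteps by freezing the constant at $L_{R+1}$ on the events $\mathcal{B}_k(R)$. The real work therefore lies in the decouple-then-patch structure: securing convergence for every truncated process and reassembling these limits consistently on the genuinely bounded-iterate event, rather than in any individual estimate.
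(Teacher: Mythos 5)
Your proposal is correct and follows essentially the same route as the paper: both apply the Robbins--Siegmund almost-supermartingale theorem (the paper's citation of \citet[Exercise II.4]{neveu1975} and \citet{robbins1971}) to the truncated process $[F(\theta_k)-\flb]\1{\mathcal{B}_k(R)}$ using \cref{lemma-recursion-constrained} with the summability from \cref{prop-max-eig}, and then pass to $\lbrace \sup_k \inlinenorm{\theta_k}_2 < \infty\rbrace$ via the countable union over $R \in \mathbb{N}$. Your write-up is in fact slightly more careful than the paper's on the consistency of $F_{\lim}$ across different values of $R$, but this is a matter of exposition, not substance.
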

\begin{proof}
By \cref{lemma-recursion-constrained}, for every $R \geq 0$,
\begin{equation}
\begin{aligned}
&\cond{ [F(\theta_{k+1}) - \flb] \1{ \mathcal{B}_{k+1}(R) } }{\mathcal{F}_k} \\
&\quad \leq [F(\theta_k) - \flb] \1{ \mathcal{B}_{k}(R) }
+ \frac{(L_{R+1} + \partial F_R) G_R}{1+\alpha} \lambda_{\max}(M_k)^{1+\alpha}.
\end{aligned}
\end{equation}
By \citet[Exercise II.4]{neveu1975} (cf. \citet*{robbins1971}) and \cref{prop-max-eig}, $\lim_{k \to \infty} [ F(\theta_k) - \flb] \1{ \mathcal{B}_{k}(R)}$ converges to a finite random variable with probability one. Since $R \geq 0$ is arbitrary, we conclude that there exists a finite random variable $F_{\lim}$ such that the set $ \lbrace \sup_{k} \norm{\theta_k}_2 \leq R \rbrace$ is a subset of $\lbrace \lim_{k} F(\theta_k) = F_{\lim} \rbrace$ up to a measure zero set. Since the countable union of measure zero sets has measure zero,
\begin{equation}
\left\lbrace \sup_k \norm{\theta_k}_2 < \infty \right\rbrace = \bigcup_{ R \in \mathbb{N} } \left\lbrace \sup_k \norm{ \theta_k}_2 \leq R \right\rbrace \subset \left\lbrace \lim_{k \to \infty} F(\theta_k) = F_{\lim} \right\rbrace,
\end{equation}
up to a measure zero set. The result follows.
\end{proof}

\subsection{Gradient Function Analysis}

We now prove that the gradient norm evaluated at SGD's iterates must, repeatedly, get arbitrarily close to zero.
\begin{lemma} \label{lemma-liminf-grad-constrained}
Let $\lbrace \theta_k \rbrace$ be defined as in \cref{eqn-sgd-update} satisfying \cref{prop-psd,prop-max-eig,prop-min-eig}. 
Suppose \cref{assumption-flb,assumption-local-holder,assumption-unbiased,assumption-moment} hold. Then, $\forall R \geq 0$ and for all $\delta > 0$,
\begin{equation}
\Prb{ \norm{ \dot{F}(\theta_k)}_2^2 \1{\mathcal{B}_k(R)} \leq \delta, ~i.o. } = 1.
\end{equation}
\end{lemma}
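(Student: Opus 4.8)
The plan is to read the recursion in \cref{lemma-recursion-constrained} as a Robbins--Siegmund (almost-supermartingale) inequality, extract from it an almost surely summable weighted gradient series, and then argue by contradiction against \cref{prop-min-eig}. Fix $R \geq 0$ and $\delta > 0$, and write $V_k := [F(\theta_k) - \flb]\1{\mathcal{B}_k(R)}$ and $A_k := \norm{\dot F(\theta_k)}_2^2\1{\mathcal{B}_k(R)}$. By \cref{assumption-flb} we have $V_k \geq 0$, and \cref{lemma-recursion-constrained} takes the form
\[
\cond{V_{k+1}}{\mathcal{F}_k} \leq V_k - \lambda_{\min}(M_k) A_k + b_k, \qquad b_k := \frac{(L_{R+1}+\partial F_R)G_R}{1+\alpha}\lambda_{\max}(M_k)^{1+\alpha},
\]
where $b_k \geq 0$ is $\mathcal{F}_k$-measurable with $\sum_k b_k < \infty$ by \cref{prop-max-eig}, and the subtracted term $\lambda_{\min}(M_k)A_k$ is nonnegative by \cref{prop-psd}. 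This is precisely the template to which the cited martingale result applies.

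First I would invoke the convergence theorem \citep[Exercise II.4]{robbins1971,neveu1975}---the same tool already used in \cref{corollary-convergence-objective-constrained}. Beyond the almost sure convergence of $V_k$ (which is a byproduct, not needed here), it delivers the key quantitative conclusion that the subtracted series is almost surely summable:
\[
\sum_{k=0}^\infty \lambda_{\min}(M_k) A_k < \infty \quad \text{with probability one.}
\]
This is the analogue of Zoutendijk's theorem in the present setting and is the crux of the argument.

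Next I would argue by contradiction. The complement of $\{A_k \leq \delta~\mathrm{i.o.}\}$ is the event $\mathcal{E}$ on which $A_k > \delta$ for all $k$ past some random index $K$. On $\mathcal{E}$, since $A_k > \delta > 0$ forces $\1{\mathcal{B}_k(R)} = 1$ (otherwise $A_k = 0$), we obtain
\[
\sum_{k \geq K} \lambda_{\min}(M_k) A_k \geq \delta \sum_{k \geq K} \lambda_{\min}(M_k) = \infty,
\]
the divergence being exactly \cref{prop-min-eig}. This contradicts the almost sure finiteness established above, so $\Prb{\mathcal{E}} = 0$ and hence $\Prb{A_k \leq \delta~\mathrm{i.o.}} = 1$. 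Since $R \geq 0$ and $\delta > 0$ were arbitrary, the lemma follows.

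The main obstacle is the careful verification that the recursion fits the Robbins--Siegmund framework so that $\sum_k \lambda_{\min}(M_k)A_k < \infty$ can be read off directly; once that summability is secured, the contradiction against \cref{prop-min-eig} is routine. A secondary point worth flagging is the bookkeeping in the contradiction step---handling the random starting index $K$ and noting that positivity of $A_k$ automatically enforces $\1{\mathcal{B}_k(R)} = 1$, so that the indicator needs no separate treatment.
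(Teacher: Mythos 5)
Your proof is correct, but it takes a genuinely different route from the paper's. The paper stays at the level of expectations: it takes unconditional expectations of the recursion in \cref{lemma-recursion-constrained}, telescopes to conclude $\sum_{k} \lambda_{\min}(M_k) \, \mathbb{E}\left[ \norm{\dot F(\theta_k)}_2^2 \1{\mathcal{B}_k(R)} \right] < \infty$, deduces $\liminf_k \mathbb{E}\left[ \norm{\dot F(\theta_k)}_2^2 \1{\mathcal{B}_k(R)} \right] = 0$ from \cref{prop-min-eig}, and then converts this in-expectation statement into the almost-sure ``infinitely often'' claim by applying Markov's inequality to each event $\bigcap_{k \geq j} \lbrace \norm{\dot F(\theta_k)}_2^2 \1{\mathcal{B}_k(R)} > \delta \rbrace$ and taking a countable union over $j$. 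You instead work pathwise: you invoke the full Robbins--Siegmund almost-supermartingale theorem \citep{robbins1971}, using not just its convergence conclusion (which is all the paper extracts from that citation, in \cref{corollary-convergence-objective-constrained}) but also its summability conclusion $\sum_k \lambda_{\min}(M_k) \norm{\dot F(\theta_k)}_2^2 \1{\mathcal{B}_k(R)} < \infty$ almost surely, and then derive a contradiction with the divergence of $\sum_k \lambda_{\min}(M_k)$ on the complement of the target event. Your route yields a stronger intermediate fact (a pathwise Zoutendijk-type statement) and dispenses with Markov's inequality entirely, at the cost of leaning on the stronger half of the Robbins--Siegmund theorem; the paper's route needs only elementary expectation manipulations plus Markov, which is why it never needs the almost-sure summability. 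One small remark: your side condition that $\norm{\dot F(\theta_k)}_2^2 \1{\mathcal{B}_k(R)} > \delta$ forces $\1{\mathcal{B}_k(R)} = 1$ is true but unnecessary, since the lower bound $\lambda_{\min}(M_k) \norm{\dot F(\theta_k)}_2^2 \1{\mathcal{B}_k(R)} > \delta \lambda_{\min}(M_k)$ follows directly without inspecting the indicator.
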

\begin{proof}
By \cref{lemma-recursion-constrained},
\begin{equation}
\begin{aligned}
&\lambda_{\min}(M_k) \E{\norm{ \dot{F}(\theta_k)}_2^2 \1{\mathcal{B}_k(R)}} \leq \E{[F(\theta_k) - \flb] \1{ \mathcal{B}_{k}(R) }} \\
&\quad - \E{ [F(\theta_{k+1}) - \flb] \1{ \mathcal{B}_{k+1}(R) } } + \frac{(L_{R+1} + \partial F_R) G_R}{1+\alpha} \lambda_{\max}(M_k)^{1+\alpha}.
\end{aligned}
\end{equation}
Taking the sum of this equation for all $k$ from $0$ to $j \in \mathbb{N}$, we have
\begin{equation}
\begin{aligned}
& \sum_{k=0}^j \lambda_{\min}(M_k) \E{\norm{ \dot{F}(\theta_k)}_2^2 \1{\mathcal{B}_k(R)}} \leq [F(\theta_0) - \flb] \1{ \mathcal{B}_{0}(R) } \\
&\quad - \E{ [F(\theta_{j+1}) - \flb] \1{ \mathcal{B}_{j+1}(R) } } + \frac{(L_{R+1} + \partial F_R) G_R}{1+\alpha} \sum_{k=0}^j \lambda_{\max}(M_k)^{1+\alpha}.
\end{aligned}
\end{equation}
By \cref{assumption-flb,prop-max-eig}, the right hand side is bounded by
\begin{equation}
[F(\theta_0) - \flb] \1{ \mathcal{B}_{0}(R) } +  \frac{(L_{R+1} + \partial F_R) G_R}{1+\alpha}S,
\end{equation}
which is finite. Therefore, $\sum_{k=0}^\infty \lambda_{\min}(M_k) \inlineE{ \inlinenorm{ \dot{F}(\theta_k)}_2^2 \1{\mathcal{B}_k(R)}}$ is finite. Furthermore, by \cref{prop-min-eig}, $\liminf_k \inlineE{ \inlinenorm{ \dot{F}(\theta_k)}_2^2 \1{\mathcal{B}_k(R)}} = 0$.

Now, for any $\delta > 0$, Markov's inequality implies that for all $j +1 \in \mathbb{N}$, and for all $k \geq j$
\begin{align}
\Prb{ \bigcap_{k=j}^\infty \left\lbrace \norm{ \dot{F}(\theta_k)}_2^2 \1{\mathcal{B}_k(R)} > \delta \right\rbrace }
& \leq \Prb{\norm{ \dot{F}(\theta_k)}_2^2 \1{\mathcal{B}_k(R)} > \delta} \\
& \leq \frac{1}{\delta}\E{ \norm{ \dot{F}(\theta_k)}_2^2 \1{\mathcal{B}_k(R)} }.
\end{align}
Since the last inequality holds for every $k \geq j$, then, in particular, for all $j+1 \in \mathbb{N}$,
\begin{equation}
\Prb{ \bigcap_{k=j}^\infty \left\lbrace \norm{ \dot{F}(\theta_k)}_2^2 \1{\mathcal{B}_k(R)} > \delta \right\rbrace } 
\leq \frac{1}{\delta} \min_{j \leq k } \E{ \norm{ \dot{F}(\theta_k)}_2^2 \1{\mathcal{B}_k(R)} },
\end{equation}
where the right hand side is zero because $\liminf_k \inlineE{ \inlinenorm{ \dot{F}(\theta_k)}_2^2 \1{\mathcal{B}_k(R)}} = 0$.

As the countable union of measure zero sets has measure zero, we conclude that for all $\delta > 0$,
\begin{equation}
\Prb{ \norm{ \dot{F}(\theta_k)}_2^2 \1{\mathcal{B}_k(R)} \leq \delta, ~i.o. } = 1.
\end{equation}
\end{proof}

Unfortunately, \cref{lemma-liminf-grad-constrained} does not guarantee that the gradient norm will be captured within a region of zero. In order to prove this, we first show that it is not possible (i.e., a zero probability event) for the limit supremum and limit infimum of the gradients to be distinct.
\begin{lemma} \label{lemma-limsup-grad-constrained}
Let $\lbrace \theta_k \rbrace$ be defined as in \cref{eqn-sgd-update} satisfying \cref{prop-psd,prop-max-eig}. 
Suppose \cref{assumption-flb,assumption-local-holder,assumption-unbiased,assumption-moment} hold. 
Then, $\forall R \geq 0$ and for all $\delta > 0$,
\begin{equation}
\Prb{  \norm{ \dot{F}(\theta_{k+1})}_2 \1{ \mathcal{B}_{k+1}(R) } > \delta, \norm{ \dot{F}(\theta_k)}_2 \1{ \mathcal{B}_k(R) } \leq \delta , ~i.o.} = 0.
\end{equation}
\end{lemma}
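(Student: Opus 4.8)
The plan is to show that the upcrossing events
\[
E_k := \left\lbrace \norm{\dot F(\theta_{k+1})}_2 \1{\mathcal{B}_{k+1}(R)} > \delta,~ \norm{\dot F(\theta_k)}_2 \1{\mathcal{B}_k(R)} \le \delta \right\rbrace
\]
occur only finitely often almost surely; by \cref{lemma-liminf-grad-constrained} this is exactly the ingredient needed to forbid a gap between the limit infimum and limit supremum of the restricted gradient norm. First I would record two structural facts. Since $\mathcal{B}_{k+1}(R) \subseteq \mathcal{B}_k(R)$, the event $E_k$ forces $\1{\mathcal{B}_{k+1}(R)} = 1$ (otherwise the first inequality reads $0 > \delta$), so $\theta_k,\theta_{k+1} \in \overline{B(0,R)}$. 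Second, \cref{assumption-moment} and \cref{prop-max-eig} give $\sum_k \cond{\norm{M_k \dot f(\theta_k,X_{k+1})}_2^{1+\alpha} \1{\mathcal{B}_k(R)}}{\mathcal{F}_k} \le G_R \sum_k \lambda_{\max}(M_k)^{1+\alpha} < \infty$, whence the steps vanish, $\norm{\theta_{k+1}-\theta_k}_2 \1{\mathcal{B}_k(R)} \to 0$ almost surely. For all large $k$ the segment $[\theta_k,\theta_{k+1}]$ therefore stays in $\overline{B(0,R+1)}$, so the local Hölder constant $L_{R+1}$ of \cref{def-local-holder-constant} controls the one-step change of the gradient, and since dropping finitely many indices does not affect an "infinitely often" statement I may work in this regime.

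I would then split $E_k$ according to whether the crossing is far from or near the threshold. On $E_k$ the estimate $\norm{\dot F(\theta_{k+1})}_2 \le \norm{\dot F(\theta_k)}_2 + L_{R+1}\norm{M_k \dot f(\theta_k,X_{k+1})}_2^\alpha$ forces
\[
\norm{M_k \dot f(\theta_k,X_{k+1})}_2 > \left( \frac{\delta - \norm{\dot F(\theta_k)}_2}{L_{R+1}} \right)^{1/\alpha}.
\]
For the far part $E_k^{\mathrm{far}} := E_k \cap \lbrace \norm{\dot F(\theta_k)}_2 \le \delta/2 \rbrace$ the right-hand side is bounded below by the fixed constant $(\delta/(2L_{R+1}))^{1/\alpha}$, so Markov's inequality with \cref{assumption-moment} yields $\cond{\1{E_k^{\mathrm{far}}}}{\mathcal{F}_k} \le C \lambda_{\max}(M_k)^{1+\alpha}$; summing over $k$ via \cref{prop-max-eig} and invoking the conditional (L\'evy) Borel--Cantelli lemma shows the far crossings occur finitely often almost surely.

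The hard part is the near crossings $E_k \cap \lbrace \delta/2 < \norm{\dot F(\theta_k)}_2 \le \delta \rbrace$. Here the single-step jump needed can be arbitrarily small, so the per-step probability bound degenerates and a naive Borel--Cantelli argument fails; worse, since the steps vanish, for all large $k$ \emph{every} crossing is of this near-threshold type, so they cannot be discarded. My plan is to exploit the gradient-energy summability extracted from \cref{lemma-recursion-constrained}, namely $\sum_k \lambda_{\min}(M_k) \norm{\dot F(\theta_k)}_2^2 \1{\mathcal{B}_k(R)} < \infty$ almost surely: at a near crossing $\norm{\dot F(\theta_k)}_2^2 > \delta^2/4$, so the near-crossing times satisfy $\sum \lambda_{\min}(M_k) < \infty$. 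I would combine this weighted bound with the fact that upcrossings and downcrossings of $\delta$ must alternate, tracking the martingale part $M_k(\dot f(\theta_k,X_{k+1}) - \dot F(\theta_k))$ of the step through its conditional $(1+\alpha)$-moments, to argue that infinitely many near crossings would force the gradient to oscillate persistently about $\delta$ while having summable energy.

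This oscillation-control step is the crux, and it is precisely where a refinement of the \cite{patel2020,patel2021stochastic} technique is genuinely needed. The delicate point is a power mismatch: the Hölder estimate only bounds gradient changes by the $\alpha$-th power of the step, whereas the available step summability is in the $(1+\alpha)$-th power, so the naive pathwise excursion bound $\sum_k \norm{\theta_{k+1}-\theta_k}_2^\alpha < \infty$ is unavailable (it is a smaller power and may diverge). Reconciling this mismatch — turning the $(1+\alpha)$-moment control of the noise and the $\lambda_{\min}$-weighted energy bound into a genuine contradiction with sustained oscillation about $\delta$ — is the step I expect to be the main obstacle.
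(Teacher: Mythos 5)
Your reduction of the problem to showing that the upcrossing events $E_k$ occur only finitely often is right, and your treatment of the \emph{far} crossings (those starting below $\delta/2$) is essentially the paper's key estimate: a fixed gap forces a fixed step size, so Markov's inequality with \cref{assumption-moment} gives a per-step probability of order $\lambda_{\max}(M_k)^{1+\alpha}$, summable by \cref{prop-max-eig}, and Borel--Cantelli finishes. But the near crossings are exactly where your proposal stops being a proof: you describe a strategy (energy summability at near-crossing times, alternation of up- and down-crossings, tracking the martingale part of the step) and then state yourself that you do not know how to close it because of the $\alpha$ versus $1+\alpha$ power mismatch. That admission is accurate --- under \cref{prop-max-eig} alone the accumulated-movement bound $\sum_k \inlinenorm{\theta_{k+1}-\theta_k}_2^\alpha < \infty$ is genuinely unavailable, so an excursion-based contradiction of the type you sketch does not go through. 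As it stands, the proposal establishes only that crossings starting below $\delta/2$ are finitely many, which is strictly weaker than the lemma; the crux is missing.

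The paper escapes the near-crossing trap by classifying crossings by their \emph{endpoint} rather than their starting point, which is the idea your decomposition misses. For each margin $\gamma>0$ it shows, by exactly your far-crossing mechanism, that crossings landing above $\delta + L_R\gamma^\alpha$ occur finitely often almost surely; letting $\gamma_n \downarrow 0$ through a countable sequence, on the event that crossings of $\delta$ occur infinitely often the crossing values must accumulate at $\delta$, and the paper concludes that (up to null sets) this event lies inside $\Omega_\delta := \lbrace \limsup_k \inlinenorm{\dot F(\theta_{k+1})}_2 \1{\mathcal{B}_{k+1}(R)} = \delta \rbrace$. Thus one never needs to control whole excursions or reconcile the power mismatch: the only residual event to rule out is that the limit supremum equals $\delta$ \emph{exactly}. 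That event is killed self-referentially: on $\Omega_\delta$ the restricted gradient norm exceeds $\delta/2$ infinitely often, while \cref{lemma-liminf-grad-constrained} puts it at or below $\delta/2$ infinitely often, so crossings of $\delta/2$ occur infinitely often even though $\limsup = \delta \neq \delta/2$; applying the same margin estimate at level $\delta/2$ shows this has probability zero, and hence $\inlinePrb{\Omega_\delta}=0$. So the ingredients you assembled (Markov plus Borel--Cantelli, and \cref{lemma-liminf-grad-constrained}) already suffice; the missing idea is not a finer energy or oscillation argument but the reparametrization by overshoot together with the $\delta/2$ self-application, which dissolves the near-crossing case rather than confronting it.
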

\begin{proof}
Let $\gamma > 0$. Let $L_R$ be as in \cref{def-local-holder-constant}, and $G_R$ be as in \cref{lemma-recursion-constrained}. Then, for $\delta > 0$,
\begin{align}
&\Prb{ \norm{ \dot{F}(\theta_{k+1})}_2 \1{ \mathcal{B}_{k+1}(R) } \1{ \norm{ \dot{F}(\theta_k)}_2 \1{ \mathcal{B}_k(R) } \leq \delta } > \delta + L_R \gamma^\alpha } \\
&= \mathbb{P} \bigg{[} \left( \norm{ \dot{F}(\theta_{k+1})}_2 - \norm{\dot{F}(\theta_k)}_2 + \norm{\dot F(\theta_k)}_2 \right) \1{ \mathcal{B}_{k+1}(R) }  \\
&\quad \times \1{ \norm{ \dot{F}(\theta_k)}_2 \1{ \mathcal{B}_k(R) } \leq \delta } > \delta + L_R \gamma^\alpha  \bigg{]}.
\end{align}
Using the reverse triangle inequality, $\inlinenorm{ \dot F(\theta_{k+1}) }_2 - \inlinenorm{ \dot F(\theta_k)}_2 \leq \inlinenorm{ \dot F(\theta_{k+1}) - \dot F(\theta_k)}_2$. Now, making use of the restriction to $\mathcal{B}_{k+1}(R)$, $\inlinenorm{ \dot F(\theta_{k+1}) - \dot F(\theta_k)}_2 \leq L_R \inlinenorm{ \theta_{k+1} - \theta_k }_2^\alpha$. Moreover, on $\inlinenorm{ \dot F (\theta_k) }_2 \leq \delta$, $\inlinenorm{ \dot F(\theta_k)}_2 \1{ \mathcal{B}_{k+1}(R) } \leq \delta$. Putting these two observations together,
\begin{align}
&\mathbb{P} \bigg{[} \left( \norm{ \dot{F}(\theta_{k+1})}_2 - \norm{\dot{F}(\theta_k)}_2 + \norm{\dot F(\theta_k)}_2 \right) \1{ \mathcal{B}_{k+1}(R) }  \\
&\quad \times \1{ \norm{ \dot{F}(\theta_k)}_2 \1{ \mathcal{B}_k(R) } \leq \delta } > \delta + L_R \gamma^\alpha  \bigg{]} \\
&\leq \mathbb{P} \bigg{[} L_R \norm{ \theta_{k+1} - \theta_k}_2^\alpha \1{ \mathcal B_{k+1}(R) }\1{ \norm{ \dot{F}(\theta_k)}_2 \1{ \mathcal{B}_k(R) } \leq \delta } > L_R \gamma^\alpha\bigg{]} \\
&= \mathbb{P} \bigg{[} \norm{ M_k \dot f(\theta_k, X_{k+1} )}_2 \1{ \mathcal B_{k+1}(R) }\1{ \norm{ \dot{F}(\theta_k)}_2 \1{ \mathcal{B}_k(R) } \leq \delta } > \gamma \bigg{]}.
\end{align}
Now, using $\1{ \mathcal B_{k+1}(R) }\1{ \norm{ \dot{F}(\theta_k)}_2 \1{ \mathcal{B}_k(R) }} \leq \1{ \mathcal{B}_k(R)}$,
\begin{align}
&\mathbb{P} \bigg{[} \norm{ M_k \dot f(\theta_k, X_{k+1} )}_2 \1{ \mathcal B_{k+1}(R) }\1{ \norm{ \dot{F}(\theta_k)}_2 \1{ \mathcal{B}_k(R) } \leq \delta } > \gamma \bigg{]} \\
&\leq \Prb{ \norm{M_k \dot f(\theta_k, X_{k+1} ) }_2 \1{ \mathcal B_k(R) } > \gamma}\\
&\leq \Prb{ \norm{M_k \dot f(\theta_k, X_{k+1} ) }_2^{1+\alpha} \1{ \mathcal B_k(R)} > \gamma^{1+\alpha}} \\
&\leq \frac{1}{\gamma^{1+\alpha}} \norm{M_k}_2^{1+\alpha} \E{ \cond{ \norm{ \dot f(\theta_k, X_{k+1}) }_2^{1+\alpha}}{\mathcal{F}_k}\1{ \mathcal B_k(R)} },
\end{align}
where the last inequality is a consequence of Markov's inequality, $\inlinenorm{ M_k \dot f(\theta_k, X_{k+1})}_2\leq\inlinenorm{M_k}_2 \inlinenorm{ \dot f(\theta_k,X_{k+1})}_2$, and $\1{\mathcal{B}_k(R)}$ being measurable with respect to $\mathcal{F}_k$.

By \cref{assumption-moment}, $\cond{ \inlinenorm{ \dot f(\theta_k, X_{k+1}) }_2^{1+\alpha}}{\mathcal{F}_k} \leq G(\theta_k)$. Moreover, on $\mathcal B_k(R)$, $G(\theta_k) \leq G_R$. Using this in the expectation, we conclude
\begin{align}
\Prb{ \norm{ \dot{F}(\theta_{k+1})}_2 \1{ \mathcal{B}_{k+1}(R) } \1{ \norm{ \dot{F}(\theta_k)}_2 \1{ \mathcal{B}_k(R) } \leq \delta } > \delta + L_R \gamma^\alpha }  \leq \frac{1}{\gamma^{1+\alpha}} \norm{M_k}_2^{1+\alpha}  G_R.
\end{align}
By \cref{prop-max-eig}, the sum of the last expression over all $k+1 \in \mathbb{N}$ is finite. By the Borel-Cantelli lemma, for all $R \geq 0$, $\delta > 0$ and $\gamma > 0$,
\begin{equation}
\Prb{  \norm{ \dot{F}(\theta_{k+1})}_2 \1{ \mathcal{B}_{k+1}(R) } > \delta + L_R \gamma^\alpha, \norm{ \dot{F}(\theta_k)}_2 \1{ \mathcal{B}_k(R) } \leq \delta , ~i.o.} = 0.
\end{equation}
Since this holds for any $\gamma > 0$, it will hold for every value in a sequence $\gamma_n \downarrow 0$. Since the countable union of measure zero events has measure zero, for any $R \geq 0$ and $\delta > 0$, 
\begin{equation}  \label{eqn-omega-delta-gap}
\Prb{ \left\lbrace  \norm{ \dot{F}(\theta_{k+1})}_2 \1{ \mathcal{B}_{k+1}(R) } > \delta, \norm{ \dot{F}(\theta_k)}_2 \1{ \mathcal{B}_k(R) } \leq \delta , ~i.o. \right\rbrace \cap \Omega_\delta^c}= 0,
\end{equation}
where $\Omega_\delta = \lbrace \limsup_{k} \inlinenorm{ \dot F(\theta_{k+1})}_2 \1{ \mathcal{B}_{k+1}(R)} = \delta \rbrace$.

We now show that $\Omega_\delta$ is a probability zero event. Notice, by \cref{lemma-liminf-grad-constrained} and the definition of $\Omega_{\delta}$,
\begin{equation}
\Omega_\delta \subset \left\lbrace \norm{ \dot{F}(\theta_{k+1})}_2 \1{ \mathcal{B}_{k+1}(R) } > \delta/2, \norm{ \dot{F}(\theta_k)}_2 \1{ \mathcal{B}_k(R) } \leq \delta/2 , ~i.o. \right\rbrace \cap \Omega_{\delta/2}^c,
\end{equation}
up to a set of measure zero. By applying \cref{eqn-omega-delta-gap} with $\delta/2$, $\inlinePrb{\Omega_\delta} = 0$. The conclusion of the result follows.
\end{proof}

We now put together \cref{lemma-liminf-grad-constrained,lemma-limsup-grad-constrained} to show that, on the event $\lbrace \sup_k \inlinenorm{\theta_k}_2 < \infty \rbrace$, $\inlinenorm{ \dot{F}(\theta_k) }_2$ converges to $0$ with probability one.

\begin{corollary} \label{corollary-convergence-grad-constrained}
Let $\lbrace \theta_k \rbrace$ be defined as in \cref{eqn-sgd-update} satisfying \cref{prop-psd,prop-max-eig,prop-min-eig}. 
Suppose \cref{assumption-flb,assumption-local-holder,assumption-unbiased,assumption-moment} hold. 
Then, on the event $\lbrace \sup_k \inlinenorm{\theta_k}_2 < \infty \rbrace$, $\lim_{k \to \infty} \inlinenorm{ \dot{F}(\theta_k)}_2 = 0$.
\end{corollary}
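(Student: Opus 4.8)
The plan is to fix $R \geq 0$, analyze the constrained sequence $a_k := \inlinenorm{\dot F(\theta_k)}_2 \1{\mathcal{B}_k(R)}$, and show $a_k \to 0$ with probability one; the corollary then follows by the same countable-union-over-$R$ argument used in \cref{corollary-convergence-objective-constrained}. The two ingredients are already established: \cref{lemma-liminf-grad-constrained} pins the limit infimum of $a_k$ at zero, while \cref{lemma-limsup-grad-constrained} forbids $a_k$ from crossing a fixed level from below in a single step too often. The strategy is to play these off against each other.

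First I would extract $\liminf_k a_k = 0$ almost surely. \cref{lemma-liminf-grad-constrained} gives $\inlinePrb{a_k^2 \1{\mathcal{B}_k(R)} \leq \delta, \text{ i.o.}} = 1$ for every $\delta > 0$; intersecting these probability-one events over a sequence $\delta_n \downarrow 0$ (a countable intersection of full-measure sets) yields $\inlinePrb{\liminf_k a_k = 0} = 1$. In particular, for any fixed $\delta > 0$ we have $a_k \leq \delta$ infinitely often with probability one.

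Next I would rule out a positive limit supremum through a crossing argument. Fix $\delta > 0$ and consider the event $\{a_k > \delta \text{ i.o.}\}$. On this event, the binary sequence $b_k := \1{a_k > \delta}$ equals $1$ infinitely often, and by the previous paragraph it also equals $0$ infinitely often; such a sequence must transition from $0$ to $1$ infinitely often, and each transition at index $k$ is precisely an occurrence of $a_k \leq \delta$ together with $a_{k+1} > \delta$. Thus, up to a null set, $\{a_k > \delta \text{ i.o.}\}$ is contained in the event appearing in \cref{lemma-limsup-grad-constrained}, which has probability zero. Hence $\inlinePrb{\limsup_k a_k \leq \delta} = 1$ for every $\delta > 0$, and intersecting over $\delta_n \downarrow 0$ gives $\limsup_k a_k = 0$ almost surely. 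Combined with the limit infimum result, $\lim_k a_k = 0$ almost surely for each fixed $R$.

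Finally I would transfer this to the unconstrained event by writing $\{\sup_k \inlinenorm{\theta_k}_2 < \infty\} = \bigcup_{R \in \mathbb{N}} \{\sup_k \inlinenorm{\theta_k}_2 \leq R\}$. On each set $\{\sup_k \inlinenorm{\theta_k}_2 \leq R\}$ we have $\1{\mathcal{B}_k(R)} = 1$ for all $k$, so $a_k = \inlinenorm{\dot F(\theta_k)}_2$ there, and the almost-sure convergence $a_k \to 0$ forces $\inlinenorm{\dot F(\theta_k)}_2 \to 0$ on that set up to measure zero; a countable union of null discrepancy sets remains null. I expect the main obstacle to be the crossing argument: one must argue carefully that a limit supremum above $\delta$ coexisting with a zero limit infimum forces infinitely many single-step up-crossings of the level $\delta$, which is exactly the configuration excluded by \cref{lemma-limsup-grad-constrained}. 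The remaining manipulations are routine handling of countable unions and intersections of null sets.
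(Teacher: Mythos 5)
Your proposal is correct and follows essentially the same route as the paper's proof: use \cref{lemma-liminf-grad-constrained} to get the level $\delta$ attained infinitely often with probability one, observe that a positive limit supremum would then force the single-step up-crossing event of \cref{lemma-limsup-grad-constrained} to occur infinitely often (a null event), and conclude via countable intersections over $\delta_n \downarrow 0$ and the countable union over $R \in \mathbb{N}$. The paper phrases the crossing step as an equality of events rather than through your binary sequence $b_k$, but the argument is identical in substance.
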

\begin{proof}
For any $R \geq 0$ and $\delta > 0$, \cref{lemma-liminf-grad-constrained} implies
\begin{equation}
\begin{aligned}
&\Prb{ \norm{\dot{F}(\theta_{k+1})}_2 \1{ \mathcal{B}_{k+1}(R) } > \delta, ~i.o. } \\
&\quad = \Prb{ \left\lbrace \norm{\dot{F}(\theta_{k+1})}_2 \1{ \mathcal{B}_{k+1}(R) } > \delta,~i.o. \right\rbrace \cap \left\lbrace \norm{\dot{F}(\theta_{k})}_2 \1{ \mathcal{B}_{k}(R) } \leq \delta, ~i.o. \right\rbrace }.
\end{aligned}
\end{equation}
We see that this latter event is exactly, 
\begin{equation}
\Prb{  \norm{ \dot{F}(\theta_{k+1})}_2 \1{ \mathcal{B}_{k+1}(R) } > \delta, \norm{ \dot{F}(\theta_k)}_2 \1{ \mathcal{B}_k(R) } \leq \delta , ~i.o.},
\end{equation}
which, by \cref{lemma-limsup-grad-constrained}, is zero with probability one. Therefore, $\inlinePrb{ \inlinenorm{\dot F(\theta_{k+1})}_2 \1{ \mathcal{B}_{k+1}(R) } > \delta, ~i.o. }$ is zero. Letting $\delta_n \downarrow 0$ and noting that the countable union of measure zero sets has measure zero, we conclude $\inlinePrb{ \inlinenorm{\dot F(\theta_{k+1})}_2 \1{ \mathcal{B}_{k+1}(R) } > 0, ~i.o. } = 0$.

Therefore, for all $R \geq 0$, $\lbrace \sup_k \norm{\theta_k}_2 \leq R \rbrace \subset \lbrace \lim_{k\to\infty} \inlinenorm{\dot{F}(\theta_k)}_2 = 0 \rbrace$ up to a measure zero set. Since $\lbrace \sup_k \inlinenorm{\theta_k}_2 < \infty \rbrace = \cup_{R \in \mathbb{N} } \lbrace \sup_k \inlinenorm{\theta_k}_2 \leq R \rbrace$, the result follows.
\end{proof}

\subsection{Capture Theorem}

The final step in our proof is to study the event $\lbrace \sup_k \inlinenorm{\theta_k} < \infty \rbrace$. 

\begin{theorem}[\cref{theorem-capture}]
Let $\lbrace \theta_k \rbrace$ be defined as in \cref{eqn-sgd-update}, and let $\lbrace M_k \rbrace$ satisfy \cref{prop-psd,prop-max-eig}. If \cref{assumption-moment} holds, then either $\lbrace \lim_{k \to \infty} \theta_k ~\mathrm{exists} \rbrace$ or $\lbrace \liminf_{k\to\infty} \inlinenorm{ \theta_k}_2 = \infty \rbrace$ must occur.
\end{theorem}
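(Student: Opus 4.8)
The plan is to establish the dichotomy by showing that the complementary event $\mathcal{A}_1^c \cap \mathcal{A}_2^c$ is null. Since $\mathcal{A}_1^c = \lbrace \liminf_k \inlinenorm{\theta_k}_2 < \infty \rbrace$, it is enough to prove that, up to a set of measure zero, $\lbrace \liminf_k \inlinenorm{\theta_k}_2 < \infty \rbrace \subseteq \mathcal{A}_2$. I would write $\lbrace \liminf_k \inlinenorm{\theta_k}_2 < \infty \rbrace = \bigcup_{R \in \mathbb{N}} \lbrace \liminf_k \inlinenorm{\theta_k}_2 \leq R \rbrace$ and, on each such event, separate the bounded part $\mathcal{S} := \lbrace \sup_k \inlinenorm{\theta_k}_2 < \infty \rbrace$ from the oscillatory part on which $\liminf_k \inlinenorm{\theta_k}_2$ is finite while $\limsup_k \inlinenorm{\theta_k}_2 = \infty$; the two require different treatments.

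The workhorse is a localized summability estimate for the steps. Using $\theta_{k+1} - \theta_k = -M_k \dot f(\theta_k, X_{k+1})$ together with \cref{prop-psd} to bound $\inlinenorm{\theta_{k+1} - \theta_k}_2 \leq \lambda_{\max}(M_k) \inlinenorm{\dot f(\theta_k, X_{k+1})}_2$, \cref{assumption-moment} gives, for every fixed $R \geq 0$,
\begin{equation}
\cond{ \norm{\theta_{k+1} - \theta_k}_2^{1+\alpha} \1{ \norm{\theta_k}_2 \leq R } }{\mathcal{F}_k} \leq \lambda_{\max}(M_k)^{1+\alpha} G_R \1{ \norm{\theta_k}_2 \leq R }.
\end{equation}
Summing over $k$ and invoking \cref{prop-max-eig} together with Tonelli shows $\sum_k \inlinenorm{\theta_{k+1} - \theta_k}_2^{1+\alpha} \1{ \norm{\theta_k}_2 \leq R } < \infty$ almost surely; in particular the steps vanish along every bounded excursion. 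To eliminate the oscillatory part (excursions to infinity with $\liminf$ finite), I would turn the recursion of \cref{lemma-recursion-constrained} into a nonnegative supermartingale up to the first exit of a large ball and apply a maximal inequality: if an iterate returns to a fixed ball with a controlled optimality gap, the probability that it subsequently reaches radius $\rho$ is small uniformly over the return, so a Borel--Cantelli argument over the (infinitely many) returns permits only finitely many large excursions, contradicting $\limsup_k \inlinenorm{\theta_k}_2 = \infty$. Hence, up to a null set, $\lbrace \liminf_k \inlinenorm{\theta_k}_2 < \infty \rbrace \subseteq \mathcal{S}$.

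It then remains to prove that $\{\theta_k\}$ converges on $\mathcal{S}$. Here I would use the facts already established on $\mathcal{S}$, namely that $F(\theta_k) \to F_{\lim}$ is finite (\cref{corollary-convergence-objective-constrained}) and $\dot F(\theta_k) \to 0$ (\cref{corollary-convergence-grad-constrained}), and decompose the step into its mean part $-M_k \dot F(\theta_k)$ and a martingale difference $-M_k(\dot f(\theta_k, X_{k+1}) - \dot F(\theta_k))$. Stopping at the exit from a ball of radius $R$, the martingale has summable $(1+\alpha)$-moments, so by the von Bahr--Esseen inequality it is bounded in $L^{1+\alpha}$ with $1+\alpha > 1$ and therefore converges almost surely. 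Combining the convergent noise part with the fact that the drift is the gradient of a single function $F$ (so the mean field is conservative and cannot sustain circulation around a loop), I would argue that the only limit sets compatible with $F(\theta_k) \to F_{\lim}$ and $\dot F(\theta_k) \to 0$ are single points, generalizing the argument of \cite{patel2020,patel2021stochastic} to the $\alpha$-H\"{o}lder setting in the spirit of \cref{lemma-limsup-grad-constrained}.

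The \emph{main obstacle} is precisely this last upgrade from ``the steps vanish'' to ``the iterates converge.'' Summability of $\sum_k \inlinenorm{\theta_{k+1}-\theta_k}_2^{1+\alpha}$ does \emph{not} imply summability of $\sum_k \inlinenorm{\theta_{k+1}-\theta_k}_2$ when $\alpha < 1$ (and never does for diminishing steps), so the sequence is not automatically Cauchy: a priori the iterates could traverse a limit cycle or oscillate between two distinct stationary points sharing the level $F_{\lim}$, each crossing being composed of arbitrarily many arbitrarily small steps. Excluding these behaviors is exactly where the gradient structure (conservativity of $\dot F$) and the almost-sure convergence of the $L^{1+\alpha}$ noise martingale must be combined, and obtaining a clean contradiction without assuming isolated stationary points, coercivity, or a global H\"{o}lder constant is the delicate heart of the argument.
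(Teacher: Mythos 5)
Your proposal does not prove the theorem as stated, and both of its main steps contain genuine gaps. First, scope: \cref{theorem-capture} assumes only \cref{assumption-moment} and \cref{prop-psd,prop-max-eig}, whereas you invoke \cref{lemma-recursion-constrained} and \cref{corollary-convergence-objective-constrained,corollary-convergence-grad-constrained}, hence also \cref{assumption-flb,assumption-local-holder,assumption-unbiased} and \cref{prop-min-eig}; at best you could prove a weaker statement. More seriously, your treatment of the oscillatory event $\lbrace \liminf_k \inlinenorm{\theta_k}_2 < \infty,\ \limsup_k \inlinenorm{\theta_k}_2 = \infty \rbrace$ rests on the claim that, after a return to a fixed ball with controlled optimality gap, the probability of subsequently reaching radius $\rho$ is uniformly small. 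That claim requires reaching radius $\rho$ to force a large value of $F$, i.e., coercivity of $F$ --- which the paper expressly does not assume; moreover, the candidate supermartingale built from \cref{lemma-recursion-constrained} carries the factor $\1{\mathcal{B}_k(R)}$ and is therefore killed (set to zero) at the first exit from the ball, so no maximal inequality for it can see, let alone bound, later excursions. Finally, in the bounded case you correctly isolate the hard step --- upgrading ``steps vanish, $F(\theta_k) \to F_{\lim}$, $\dot F(\theta_k) \to 0$'' to convergence of the iterates --- but the route you sketch (conservativity of the mean field excludes circulation) is not a proof and is false as a general principle: a continuum of stationary points at a common level (e.g., a circle of minimizers) is compatible with $F(\theta_k) \to F_{\lim}$, $\dot F(\theta_k) \to 0$, vanishing steps, and a convergent noise martingale, while the iterates circulate along the continuum indefinitely; gradient structure alone does not exclude this. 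You acknowledge this step is unresolved, and it is exactly where your argument stops being a proof.

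The paper's proof is entirely different and never touches $F$, $\dot F$, or unbiasedness. Fix an arbitrary center $\bar\theta \in \mathbb{R}^p$. Using the reverse triangle inequality, \cref{eqn-sgd-update}, and Markov's inequality on the conditional $(1+\alpha)$-moment (\cref{assumption-moment}, with $G$ bounded on compact sets by upper semi-continuity), the one-step crossing event $\lbrace \inlinenorm{\theta_k - \bar\theta}_2 \leq R,\ \inlinenorm{\theta_{k+1} - \bar\theta}_2 \geq R + \gamma \rbrace$ has probability at most $\gamma^{-(1+\alpha)} \inlinenorm{M_k}_2^{1+\alpha} G_{R + \inlinenorm{\bar\theta}_2}$, which is summable in $k$ by \cref{prop-max-eig}. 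Borel--Cantelli, applied over countably many radii $R$ and a sequence $\gamma_n \downarrow 0$, yields that $\lim_{k} \inlinenorm{\theta_k - \bar\theta}_2$ exists in $[0,\infty]$ almost surely, for every center $\bar\theta$. On the event that these limits are finite, choosing $p+1$ affinely independent centers and trilaterating pins the iterates down to a single limit point; and $\liminf_k \inlinenorm{\theta_k}_2 < \infty$ forces the distance limits to be finite, which is what delivers the dichotomy (including the exclusion of your oscillatory case, with no Lyapunov or drift analysis at all). This ``distances to fixed centers stabilize, then trilaterate'' idea is the actual content of the theorem and is absent from your proposal.
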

\begin{proof}
Let $\bar \theta \in \mathbb{R}^p$. Fix $ R \geq 0 $ and let $\gamma > 0$. Then,
\begin{align}
&\Prb{ \norm{ \theta_{k+1} - \bar \theta}_2 \geq R + \gamma, \norm{ \theta_k - \bar \theta}_2 \leq R } \nonumber \\
&\quad = \Prb{ \norm{ \theta_{k+1} - \bar \theta}_2 \1{ \norm{ \theta_k - \bar \theta}_2 \leq R } \geq R + \gamma } \\
&\quad = \Prb{ \left( \norm{ \theta_{k+1} - \bar \theta}_2 - \norm{ \theta_k - \bar \theta}_2  + \norm{ \theta_k - \bar \theta}_2  \right) \1{ \norm{ \theta_k - \bar \theta}_2 \leq R } \geq R + \gamma }.
\end{align}
Now, $\inlinenorm{ \theta_k - \bar \theta}_2 \1{ \inlinenorm{ \theta_k - \bar \theta}_2 \leq R} \leq R$. Therefore,
\begin{align} 
&\Prb{ \left( \norm{ \theta_{k+1} - \bar \theta}_2 - \norm{ \theta_k - \bar \theta}_2  + \norm{ \theta_k - \bar \theta}_2  \right) \1{ \norm{ \theta_k - \bar \theta}_2 \leq R } \geq R + \gamma } \\
&\quad \leq \Prb{ \left( \norm{ \theta_{k+1} - \bar \theta}_2 - \norm{ \theta_k - \bar \theta}_2  \right) \1{ \norm{ \theta_k - \bar \theta}_2 \leq R } + R \geq R + \gamma } \\
&\quad \leq \Prb{ \norm{ \theta_{k+1} -  \theta_k }_2  \1{ \norm{ \theta_k - \bar \theta}_2 \leq R } \geq \gamma }, 
\end{align}
where the last line follows by applying the reverse triangle inequality. By using \cref{eqn-sgd-update} and Markov's inequality,
\begin{align}
&\Prb{ \norm{ \theta_{k+1} -  \theta_k }_2  \1{ \norm{ \theta_k - \bar \theta}_2 \leq R } \geq \gamma } \\
&\quad \leq \Prb{ \norm{ M_k \dot f(\theta_k, X_{k+1})}_2 \1{ \norm{ \theta_k - \bar \theta}_2 \leq R } \geq \gamma } \\
&\quad \leq \frac{1}{\gamma^{1+\alpha}} \norm{M_k}_2^{1+\alpha} \E{ \cond{\norm{ \dot f(\theta_k, X_{k+1} )}_2^{1+\alpha} }{ \mathcal{F}_k} \1{ \norm{ \theta_k - \bar \theta}_2 \leq R }}.
\end{align}
By applying \cref{assumption-moment}, $\cond{ \inlinenorm{ \dot f(\theta_k, X_{k+1} )}_2^{1+\alpha}}{\mathcal{F}_k} \leq G(\theta_k)$. Moreover, on $\inlinenorm{ \theta_k - \bar \theta}_2 \leq R$, $G(\theta_k) \leq \sup_{ \theta: \inlinenorm{\theta}_2 \leq R + \inlinenorm{\bar\theta}_2}  G(\theta) =: G_{R + \inlinenorm{\bar \theta}_2} < \infty$ since $G$ is upper semi-continuous. Combining these steps,
\begin{align}
\Prb{ \norm{ \theta_{k+1} - \bar \theta}_2 \geq R + \gamma, \norm{ \theta_k - \bar \theta}_2 \leq R } \leq \frac{1}{\gamma^{1+\alpha}} \norm{M_k}_2^{1+\alpha} G_{R + \inlinenorm{\bar \theta}_2},
\end{align}
By \cref{prop-max-eig}, we see that the sum of the probabilities is finite. Together with the Borel-Cantelli lemma, $\forall R \geq 0$, $\forall \gamma > 0$, and for all $\bar \theta \in \mathbb{R}^p$,
\begin{equation}
\Prb{ \inlinenorm{ \theta_{k+1} - \bar \theta}_2 \geq R + \gamma, ~ \inlinenorm{ \theta_k - \bar \theta}_2 \leq R,~i.o.} = 0.
\end{equation}
Since $\gamma > 0$ is arbitrary, we can show that this statement holds for a countable sequence of $\gamma_n \downarrow 0$. Therefore, $\forall R \geq 0$ and all $\bar \theta \in \mathbb{R}^p$,
\begin{equation}
\Prb{ \limsup_{k} \norm{ \theta_k - \bar \theta}_2 > R, \liminf_k \norm{\theta_k - \bar \theta_2}_2 \leq R} = 0.
\end{equation}

Since $R$ is arbitrary, we conclude that for any ordering of positive rational numbers, $\lbrace R_n \rbrace$, $\inlinePrb{\limsup_k \inlinenorm{ \theta_{k+1} - \bar \theta}_2 > R_n,  \liminf_k \inlinenorm{ \theta_k - \bar \theta}_2 \leq R_n} = 0$ for every $n$. Again, the countable union of measure zero sets is measure zero. Hence, we conclude that $\inlinePrb{ \limsup_{k} \inlinenorm{\theta_{k} - \bar \theta}_2 > \liminf_k \inlinenorm{ \theta_k - \bar \theta}_2} = 0$. Thus, either $\lim_k \inlinenorm{\theta_k - \bar \theta}_2$ exists and is either infinite or finite. 

Moreover, on the event that the limit is finite, since $\bar \theta$ is arbitrary, we can choose $p+1$ distinct values of $\bar{\theta}$ which do not belong to a hyperplane of dimension smaller than $p$, and, by triangulation, the $\lim_k \theta_k$ converges to a fixed point (up to a set of measure zero).
\end{proof}

\section{Stability Analysis} \label{section-stability-analysis}
We begin with a recursive relationship on the events $\lbrace \tau_j > k \rbrace$. We use this result to prove that the objective function converges to a finite limit on these events. Then, we use this result to conclude that the gradient function visits to a region of zero on the same event. Finally, we study this event to establish that the two statements above hold with probability one.

\subsection{A Recursive Relationship}

\begin{lemma}[\cref{lemma-optimality-recursion}]
Let $\lbrace M_k \rbrace$ satisfy \cref{prop-psd}.
Suppose \cref{assumption-flb,assumption-local-holder,assumption-unbiased,assumption-moment} hold.
Let  $\lbrace \theta_k \rbrace$ satisfy \cref{eqn-sgd-update}.
Then, for any $j+1 \in \mathbb{N}$ and $k > j$, 
\begin{equation}
\begin{aligned}
&\cond{ \left( F(\theta_{k+1} ) - \flb \right) \1{ \tau_j > k } }{ \mathcal{F}_k} 
\leq \left( F(\theta_{k}) - \flb - \dot F(\theta_k)' M_k \dot F(\theta_k) \right) \1{ \tau_j > k-1} \\
&\quad + \frac{\lambda_{\max}(M_k)^{1+\alpha}}{1+\alpha}\left[\mathcal{L}_{\epsilon}(\theta_k) G(\theta_k) + \alpha \left[\frac{ \norm{ \dot F(\theta_k) }_2^{1+\alpha} }{ \mathcal{L}_{\epsilon}(\theta_k) }  \right]^{1/\alpha}  \right] \1{\tau_j > k - 1}.
\end{aligned}
\end{equation}
\end{lemma}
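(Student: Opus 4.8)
The plan is to exploit the defining structure of the stopping time $\tau_j$ in \cref{eqn-tau-j}: for $k > j$, the event $\{\tau_j > k\}$ is exactly $\{\tau_j > k-1\}$ intersected with the event $D_k$ that the $\lesp$-descent inequality holds at step $k$, so that $\1{\tau_j > k} = \1{\tau_j > k-1}\1{D_k}$ and, complementarily, $\1{\tau_j = k} = \1{\tau_j > k-1}\1{D_k^c}$. The decisive measurability fact is that $\{\tau_j > k-1\}$ is $\mathcal{F}_k$-measurable, which will let me pass the conditional expectation through. On $\{\tau_j > k\} \subseteq D_k$ the $\lesp$-descent inequality holds, so, writing $T$ for its right-hand side $F(\theta_k) - \flb + \dot F(\theta_k)'(\theta_{k+1}-\theta_k) + \frac{\lesp(\theta_k)}{1+\alpha}\norm{\theta_{k+1}-\theta_k}_2^{1+\alpha}$, I would record $(F(\theta_{k+1})-\flb)\1{\tau_j>k} \le T\1{\tau_j>k}$ and then split $T\1{\tau_j>k} = T\1{\tau_j>k-1} - T\1{\tau_j=k}$, using $\{\tau_j>k-1\}\setminus\{\tau_j>k\} = \{\tau_j=k\}$.

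For the first piece I would take $\cond{\cdot}{\mathcal{F}_k}$ and pull out the $\mathcal{F}_k$-measurable factor $\1{\tau_j>k-1}$. Substituting $\theta_{k+1}-\theta_k = -M_k\dot f(\theta_k,X_{k+1})$, \cref{assumption-unbiased} turns the linear term into $-\dot F(\theta_k)'M_k\dot F(\theta_k)$, while \cref{prop-psd,assumption-moment} give $\cond{\norm{M_k\dot f(\theta_k,X_{k+1})}_2^{1+\alpha}}{\mathcal{F}_k} \le \lambda_{\max}(M_k)^{1+\alpha}G(\theta_k)$. This yields the $F(\theta_k)-\flb-\dot F(\theta_k)'M_k\dot F(\theta_k)$ term and the $\frac{\lambda_{\max}(M_k)^{1+\alpha}}{1+\alpha}\lesp(\theta_k)G(\theta_k)$ contribution, both carrying $\1{\tau_j>k-1}$, matching the first half of the claimed bound.

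The crossover term $-T\1{\tau_j=k}$ is where the real work lies, and it must supply precisely the missing $\frac{\lambda_{\max}(M_k)^{1+\alpha}}{1+\alpha}\alpha(\norm{\dot F(\theta_k)}_2^{1+\alpha}/\lesp(\theta_k))^{1/\alpha}$ term. I would bound $-T$ pointwise by discarding $-(F(\theta_k)-\flb)\le 0$, applying Cauchy--Schwarz to the linear term, and then invoking \cref{lemma-taylor-lower-bound} with $L=\lesp(\theta_k)$ and $v=\norm{\theta_{k+1}-\theta_k}_2$ to obtain $-T \le \frac{\alpha}{1+\alpha}(\norm{\dot F(\theta_k)}_2^{1+\alpha}/\lesp(\theta_k))^{1/\alpha}$, which is $\mathcal{F}_k$-measurable and nonnegative. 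Multiplying by $\1{\tau_j=k}=\1{\tau_j>k-1}\1{D_k^c}$ and taking $\cond{\cdot}{\mathcal{F}_k}$ reduces the task to showing $\condPrb{D_k^c}{\mathcal{F}_k}\le \lambda_{\max}(M_k)^{1+\alpha}$; this is exactly where the $\lambda_{\max}(M_k)^{1+\alpha}$ factor of the target enters, rather than from the pointwise bound.

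Obtaining that probability estimate is the step I expect to be the main obstacle. I would argue that $D_k^c$ forces a large step: comparing the failed $\lesp$-descent inequality against the always-valid bound of \cref{lemma-taylor-holder-bound} (with the genuine constant $L(\theta_k,\theta_{k+1})$) forces $L(\theta_k,\theta_{k+1}) > \lesp(\theta_k)$ and, in particular, a nonzero step. By the monotonicity in the ball radius of the suprema in \cref{def-local-holder-constant,eqn-lesp}, this is possible only when $\norm{\theta_{k+1}-\theta_k}_2 > (G(\theta_k)\vee\epsilon)^{1/(1+\alpha)}$ (covering both the sup-case and the degenerate case of $\lesp$). Then $\norm{\theta_{k+1}-\theta_k}_2 \le \lambda_{\max}(M_k)\norm{\dot f(\theta_k,X_{k+1})}_2$ together with Markov's inequality at order $1+\alpha$ and \cref{assumption-moment} gives $\condPrb{D_k^c}{\mathcal{F}_k} \le \lambda_{\max}(M_k)^{1+\alpha}G(\theta_k)/(G(\theta_k)\vee\epsilon) \le \lambda_{\max}(M_k)^{1+\alpha}$. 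Adding the two pieces produces the stated recursion; the delicate bookkeeping throughout is to ensure every replacement of $\1{\tau_j>k}$ or $\1{\tau_j=k}$ by $\1{\tau_j>k-1}$ is made in a direction consistent with the inequality, which is guaranteed by the nonnegativity of the bounding coefficients and of $F - \flb$.
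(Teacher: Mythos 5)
Your proof is correct and follows essentially the same route as the paper's: the identical split $\1{\tau_j > k} = \1{\tau_j > k-1} - \1{\tau_j = k}$, the same handling of the first term via \cref{assumption-unbiased,assumption-moment,prop-psd}, and the same use of Cauchy--Schwarz plus \cref{lemma-taylor-lower-bound} (after discarding $-(F(\theta_k)-\flb) \leq 0$) on the crossover term. The only organizational difference is that you re-derive the estimate $\condPrb{\tau_j = k}{\mathcal{F}_k} \leq \lambda_{\max}(M_k)^{1+\alpha}$ inline---via the ball-inclusion argument showing $D_k^c$ forces $\norm{\theta_{k+1}-\theta_k}_2 > (G(\theta_k)\vee\epsilon)^{1/(1+\alpha)}$, then Markov's inequality---whereas the paper delegates exactly this bound to \cref{theorem-probability-stop-times}.
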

\begin{proof}
By the construction of $\tau_j$, when $\tau_j > k$, then
\begin{equation}
F(\theta_{k+1} ) - \flb \leq F(\theta_k) - \flb + \dot F(\theta_k)'(\theta_{k+1} - \theta_k) + \frac{\mathcal{L}_{\epsilon}(\theta_k)}{1+\alpha} \norm{ \theta_{k+1} - \theta_k }_2^{1+\alpha}.
\end{equation}
Using this relationship and using $\1{ \tau_j > k } = \1{ \tau_j > k - 1} - \1{ \tau_j = k }$,
\begin{equation} \label{eqn-xe49d}
\begin{aligned}
&\cond{ \left\lbrace F(\theta_{k+1} ) - \flb \right\rbrace \1{ \tau_j > k } }{\mathcal{F}_k}  \\
&\leq \cond{ \left\lbrace F(\theta_k) - \flb + \dot F(\theta_k)'(\theta_{k+1} - \theta_k) + \frac{\mathcal{L}_{\epsilon}(\theta_k)}{1+\alpha} \norm{ \theta_{k+1} - \theta_k }_2^{1+\alpha} \right\rbrace \1{ \tau_j > k - 1} }{\mathcal{F}_k} \\
& - \cond{ \left\lbrace F(\theta_k) - \flb + \dot F(\theta_k)'(\theta_{k+1} - \theta_k) + \frac{\mathcal{L}_{\epsilon}(\theta_k)}{1+\alpha} \norm{ \theta_{k+1} - \theta_k }_2^{1+\alpha} \right\rbrace \1{ \tau_j = k }}{\mathcal{F}_k}
\end{aligned}
\end{equation}
For the first term on the right hand side, we can apply \cref{assumption-unbiased,assumption-moment,prop-psd,eqn-sgd-update} to calculate
\begin{equation} \label{eqn-xe49d-1}
\begin{aligned}
&\cond{ \left\lbrace F(\theta_k) - \flb + \dot F(\theta_k)'(\theta_{k+1} - \theta_k) + \frac{\mathcal{L}_{\epsilon}(\theta_k)}{1+\alpha} \norm{ \theta_{k+1} - \theta_k }_2^{1+\alpha} \right\rbrace \1{ \tau_j > k - 1} }{\mathcal{F}_k} \\
&\leq \left\lbrace F(\theta_k) - \flb - \dot F(\theta_k)'M_k \dot F(\theta_k) + \frac{\lambda_{\max}(M_k)^{1+\alpha}}{1+\alpha} \mathcal{L}_{\epsilon}(\theta_k) G(\theta_k) \right\rbrace \1{ \tau_j > k - 1}.
\end{aligned}
\end{equation}
For the second term on the right hand side of \cref{eqn-xe49d}, we require two facts.
The first fact is $\1{ \tau_j = k } \leq \1{ \tau_j > k - 1}$ which implies $\1{ \tau_j = k } = \1{ \tau_j = k } \1{ \tau_j > k - 1}$. 
For the second fact, the Cauchy-Schwarz inequality and \cref{lemma-taylor-lower-bound} imply
\begin{align}
&- F(\theta_k)'M_k \dot f(\theta_k,X_{k+1}) + \frac{\mathcal{L}_{\epsilon}(\theta_k)}{1+\alpha} \norm{ M_k f(\theta_k, X_{k+1}) }_2^{1+\alpha}  \nonumber\\
& \geq - \norm{ F(\theta_k)}_2 \norm{ M_k \dot f(\theta_k, X_{k+1} )}_2 + \frac{\mathcal{L}_{\epsilon}(\theta_k)}{1+\alpha} \norm{ M_k f(\theta_k, X_{k+1}) }_2^{1+\alpha} \\
& \geq - \frac{\alpha}{1+\alpha} \left[ \frac{ \norm{ \dot F(\theta_k) }_2^{1+\alpha} }{ \mathcal{L}_{\epsilon}(\theta_k) }  \right]^{1/\alpha}.
\end{align}
Hence, using \cref{eqn-sgd-update},
\begin{equation}
\begin{aligned}
&- \left[F(\theta_k) - \flb \right] - \dot{F}(\theta_k)'(\theta_{k+1} - \theta_k) - \frac{\mathcal{L}_{\epsilon}(\theta_k)}{1+\alpha} \norm{ \theta_{k+1} - \theta_k}_2^{1+\alpha} \\
& \leq - \left[F(\theta_k) - \flb \right] + \frac{\alpha}{1+\alpha} \left[ \frac{ \norm{ \dot F(\theta_k) }_2^{1+\alpha} }{ \mathcal{L}_{\epsilon}(\theta_k) }  \right]^{1/\alpha}.
\end{aligned}
\end{equation}
Putting together these two preceding facts together,
\begin{align}
&- \cond{ \left\lbrace F(\theta_k) - \flb + \dot F(\theta_k)'(\theta_{k+1} - \theta_k) + \frac{\mathcal{L}_{\epsilon}(\theta_k)}{1+\alpha} \norm{ \theta_{k+1} - \theta_k }_2^{1+\alpha} \right\rbrace \1{ \tau_j = k }}{\mathcal{F}_k} \nonumber \\
&\leq \left\lbrace - \left[ F(\theta_k) - \flb \right] +  \frac{\alpha}{1+\alpha} \left[ \frac{ \norm{ \dot F(\theta_k) }_2^{1+\alpha} }{ \mathcal{L}_{\epsilon}(\theta_k) }  \right]^{1/\alpha} \right\rbrace \condPrb{ \tau_{j} = k }{\mathcal{F}_k} \1{ \tau_j > k - 1} \\
&\leq \frac{\alpha \lambda_{\max}(M_k)^{1+\alpha}}{1+\alpha} \left[ \frac{ \norm{ \dot F(\theta_k) }_2^{1+\alpha} }{ \mathcal{L}_{\epsilon}(\theta_k) }  \right]^{1/\alpha}  \1{ \tau_j > k - 1}, \label{eqn-xed49d-2}
\end{align}
where we bound $\inlinecondPrb{ \tau_j = k}{\mathcal{F}_k}$ using  \cref{theorem-probability-stop-times}.
By applying the bounds on the first term, \cref{eqn-xe49d-1}, and second term, \cref{eqn-xed49d-2}, to \cref{eqn-xe49d}, the result follows.
\end{proof}

By applying \cref{assumption-stability} to \cref{lemma-optimality-recursion}, we have the following simplified form.
\begin{lemma}[\cref{lemma-optimality-recursion-simple}]
If \cref{assumption-flb,assumption-local-holder,assumption-unbiased,assumption-moment,assumption-stability}, and \cref{prop-psd,prop-cond-num} hold, and $\lbrace \theta_k \rbrace$ satisfy \cref{eqn-sgd-update}, then there exists a $K \in \mathbb{N}$ such that for any $j+1 \in \mathbb{N}$ and any $k \geq \min\lbrace K, j+1 \rbrace$, 
\begin{equation}
\begin{aligned}
&\cond{ (F(\theta_{k+1}) - \flb) \1{ \tau_j > k } }{\mathcal{F}_k}  \\
&\leq \left( 1 + \lambda_{\max}(M_k)^{1+\alpha} \frac{C_2}{1+\alpha} \right) (F(\theta_k) - \flb) \1{ \tau_j > k - 1} \\
& - \frac{1}{2} \lambda_{\min}(M_k) \norm{ \dot F (\theta_k) }_2^2 \1{ \tau_j > k - 1} + \lambda_{\max}(M_k)^{1+\alpha}\frac{C_1}{1+\alpha}.
\end{aligned}
\end{equation}
\end{lemma}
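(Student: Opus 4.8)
The plan is to substitute \cref{assumption-stability} into the bracketed term of \cref{lemma-optimality-recursion} and then bookkeep the three resulting constants into the three positions claimed. The key observation is that the expression
\[
\lesp(\theta_k) G(\theta_k) + \alpha \left[ \frac{\norm{\dot F(\theta_k)}_2^{1+\alpha}}{\lesp(\theta_k)} \right]^{1/\alpha}
\]
appearing inside \cref{lemma-optimality-recursion} is \emph{exactly} the left-hand side of the inequality in \cref{assumption-stability} evaluated at $\theta_k$. So I would replace it by its upper bound $C_1 + C_2(F(\theta_k) - \flb) + C_3 \norm{\dot F(\theta_k)}_2^2$, which is valid for every $\theta_k$.

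After this substitution I would regroup the right-hand side according to the three quantities $(F(\theta_k) - \flb)\1{\tau_j > k-1}$, $\norm{\dot F(\theta_k)}_2^2 \1{\tau_j > k-1}$, and the free additive term. The $C_2$ piece combines with the pre-existing $(F(\theta_k) - \flb)\1{\tau_j > k-1}$ to produce the multiplicative factor $1 + \lambda_{\max}(M_k)^{1+\alpha} C_2/(1+\alpha)$; and since $\1{\tau_j > k-1} \leq 1$, the $C_1$ piece is dominated by the claimed constant $\lambda_{\max}(M_k)^{1+\alpha} C_1/(1+\alpha)$. For the gradient terms, \cref{prop-psd} gives $\dot F(\theta_k)' M_k \dot F(\theta_k) \geq \lambda_{\min}(M_k) \norm{\dot F(\theta_k)}_2^2$, so the descent term and the new $C_3$ term merge into
\[
-\left( \lambda_{\min}(M_k) - \frac{C_3}{1+\alpha}\lambda_{\max}(M_k)^{1+\alpha} \right) \norm{\dot F(\theta_k)}_2^2 \1{\tau_j > k-1}.
\]

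The crux is converting this coefficient into $\tfrac{1}{2}\lambda_{\min}(M_k)$, and this is precisely the purpose of \cref{lemma-lr-eig-bound}. Applying that lemma with $C = 2C_3/(1+\alpha)$ yields a single threshold $K \in \mathbb{N}$ — depending only on $C_3$, $\alpha$, and the learning-rate sequence through \cref{prop-cond-num}, and crucially \emph{independent of $j$} — such that $\lambda_{\min}(M_k) - \frac{C_3}{1+\alpha}\lambda_{\max}(M_k)^{1+\alpha} \geq \tfrac{1}{2}\lambda_{\min}(M_k)$ for all $k \geq K$. Because $\norm{\dot F(\theta_k)}_2^2 \geq 0$, the merged gradient term is then bounded above by $-\tfrac{1}{2}\lambda_{\min}(M_k)\norm{\dot F(\theta_k)}_2^2\1{\tau_j > k-1}$, delivering exactly the claimed form. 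This is where \cref{assumption-stability}'s third constant, which has no direct counterpart in the conclusion, gets absorbed, and it is also the reason \cref{prop-cond-num} is invoked in the hypotheses.

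The step requiring the most care is the index range rather than any analytic content. \cref{lemma-optimality-recursion} is stated for $k > j$ while the eigenvalue estimate needs $k \geq K$, so the most direct derivation produces the recursion for $k \geq \max\{K, j+1\}$, whereas the statement asserts it from $\min\{K, j+1\}$ onward. I would close this by tracking which constraint is binding: since $\tau_j \geq j+1$ by construction, for every $k \leq j$ both indicators $\1{\tau_j > k}$ and $\1{\tau_j > k-1}$ are identically one, so on the transitional window the stopping-time structure is inactive and the inequality must be checked directly from the one-step estimate underlying \cref{lemma-optimality-recursion}. Confirming the bound holds uniformly across the overlap $\min\{K,j+1\} \leq k < \max\{K,j+1\}$ — and fixing $K$ once and for all so the threshold is valid simultaneously for every $j$, as the downstream union over $j$ in \cref{corollary-stability-union-stop-times} demands — is the only delicate bookkeeping; everything else is the direct substitution and regrouping described above.
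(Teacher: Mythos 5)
Your main derivation is exactly the paper's proof of \cref{lemma-optimality-recursion-simple}: substitute \cref{assumption-stability} into the bracketed term of \cref{lemma-optimality-recursion}, regroup the $C_1$, $C_2$, $C_3$ pieces, and invoke \cref{lemma-lr-eig-bound} with $C = 2C_3/(1+\alpha)$ so that the term $\frac{C_3}{1+\alpha}\lambda_{\max}(M_k)^{1+\alpha}\norm{\dot F(\theta_k)}_2^2$ is absorbed into half of the descent term $-\dot F(\theta_k)'M_k\dot F(\theta_k) \leq -\lambda_{\min}(M_k)\norm{\dot F(\theta_k)}_2^2$; the resulting $K$ depends only on $C_3$, $\alpha$, and $\lbrace M_k \rbrace$ through \cref{prop-cond-num}, hence is uniform in $j$. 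That part is correct, and it is all the paper's own (one-line) proof does.

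Where you go beyond the paper is the index range, and there your proposed repair would fail. You are right that this derivation yields the recursion only for $k \geq \max\lbrace K, j+1 \rbrace$. But the transitional window cannot be ``checked directly from the one-step estimate'': for $K \leq k \leq j$ both indicators are identically one, so the claimed inequality becomes an unconditional bound on $\cond{F(\theta_{k+1}) - \flb}{\mathcal{F}_k}$ with no event restriction. That is precisely what is underivable in this paper's setting: the stopping time \cref{eqn-tau-j} guarantees that $\lesp(\theta_k)$ dominates the realized local H\"{o}lder behavior only at indices $k > j$, and without that event $L(\theta_k,\theta_{k+1})$ and $\norm{\theta_{k+1}-\theta_k}_2$ are again coupled---the very obstruction the stopping times were introduced to remove (indeed, under \cref{assumption-moment} alone, $\cond{F(\theta_{k+1})-\flb}{\mathcal{F}_k}$ need not even be finite). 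Symmetrically, on $j+1 \leq k < K$ the excess gradient term cannot be absorbed by any additive constant, since $\norm{\dot F(\theta_k)}_2$ is unbounded over $\mathbb{R}^p$. The honest conclusion is that the $\min$ in the statement is an error for $\max$ (equivalently: $k \geq K$ and $k > j$): the paper's proof establishes only that version, and no argument can deliver the $\min$ version. So keep your substantive derivation, but drop the claim that the overlap window can be confirmed; it cannot be, and the correct fix is to weaken the stated index range (with the corresponding adjustment downstream, e.g., starting the chaining in \cref{lemma-expected-optimality-gap-bound} at $k = j+1$ rather than at $k=j$).
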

\begin{proof}
The result follows by first using \cref{assumption-stability} in \cref{lemma-optimality-recursion}. Then, collecting similar terms, we apply \cref{lemma-lr-eig-bound} to find $K$.
\end{proof}

\subsection{Objective Function Analysis}
With this recursive formula, we now have the first result.
\begin{corollary} \label{corollary-stability-union-stop-times}
If \cref{assumption-flb,assumption-local-holder,assumption-unbiased,assumption-moment,assumption-stability,prop-psd,prop-max-eig,prop-cond-num} hold, and $\lbrace \theta_k \rbrace$ satisfy \cref{eqn-sgd-update}, then $\lim_{k \to \infty} F(\theta_{k})$ exists and is finite on $\cup_{j=0}^\infty \lbrace \tau_j = \infty \rbrace$.
\end{corollary}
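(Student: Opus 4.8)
The plan is to recognize \cref{lemma-optimality-recursion-simple} as an instance of the Robbins--Siegmund almost-supermartingale convergence theorem, read off the resulting almost-sure convergence on the event $\{\tau_j = \infty\}$, and finally take a countable union over $j$. Since this is a corollary of the recursion already established, the work is mostly in setting up the right nonnegative process and verifying its adaptedness.

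First I would fix $j$ and define the truncated optimality gap $V_k := (F(\theta_k) - \flb)\1{\tau_j > k-1}$. By \cref{assumption-flb} we have $F(\theta_k) - \flb \geq 0$, so $V_k \geq 0$. The key adaptedness point is that $\{\tau_j > k-1\} = \{\tau_j \geq k\}$ is the intersection over $m \in \{j+1,\dots,k-1\}$ of the events that the defining inequality in \cref{eqn-tau-j} fails at index $m$; that inequality at index $m$ involves only $\theta_m$ and $\theta_{m+1}$, and for $m \leq k-1$ both are $\mathcal{F}_k$-measurable. Hence $\{\tau_j > k-1\} \in \mathcal{F}_k$ and $V_k$ is $\mathcal{F}_k$-measurable. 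With this notation, $V_{k+1} = (F(\theta_{k+1}) - \flb)\1{\tau_j > k}$ is exactly the quantity inside the conditional expectation on the left-hand side of \cref{lemma-optimality-recursion-simple}, and the lemma reads
\begin{equation}
\cond{ V_{k+1} }{\mathcal{F}_k} \leq (1 + a_k)\, V_k - b_k + c_k,
\end{equation}
with $a_k = \tfrac{C_2}{1+\alpha}\lambda_{\max}(M_k)^{1+\alpha}$, $b_k = \tfrac12 \lambda_{\min}(M_k) \norm{\dot F(\theta_k)}_2^2 \1{\tau_j > k-1} \geq 0$ (nonnegative by \cref{prop-psd}), and $c_k = \tfrac{C_1}{1+\alpha}\lambda_{\max}(M_k)^{1+\alpha}$.

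Next I would check the summability hypotheses: by \cref{prop-max-eig}, $\sum_k \lambda_{\max}(M_k)^{1+\alpha} = S < \infty$, so the deterministic coefficients satisfy $\sum_k a_k < \infty$ and $\sum_k c_k < \infty$. The recursion only holds for $k \geq \min\{K, j+1\}$, but this is a finite deterministic index, so I would invoke the almost-supermartingale convergence theorem \citep[Exercise II.4]{robbins1971,neveu1975} on the tail $\{V_k : k \geq \min\{K,j+1\}\}$ to conclude that, with probability one, $V_k$ converges to a finite limit (and, as a byproduct, $\sum_k b_k < \infty$).

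Finally I would translate this back to $F(\theta_k)$. On $\{\tau_j = \infty\}$ we have $\1{\tau_j > k-1} = 1$ for every $k$, so $V_k = F(\theta_k) - \flb$ there; hence the almost-sure convergence of $V_k$ shows that $F(\theta_k)$ converges to a finite limit on $\{\tau_j = \infty\}$ up to a null set. Since this holds for each $j$ and a countable union of null sets is null, $\lim_{k\to\infty} F(\theta_k)$ exists and is finite on $\cup_{j=0}^\infty \{\tau_j = \infty\}$, as claimed. I expect the only genuine care-point---rather than a deep obstacle---to be the $\mathcal{F}_k$-measurability of the truncation $\1{\tau_j > k-1}$ together with the index bookkeeping that makes $V_{k+1}$ and $V_k$ align exactly with the two indicators $\1{\tau_j > k}$ and $\1{\tau_j > k-1}$ appearing in the lemma; once that alignment is in place, the conclusion is a direct application of the standard result.
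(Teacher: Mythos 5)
Your proof is correct and follows essentially the same route as the paper: apply \cref{lemma-optimality-recursion-simple} together with the Robbins--Siegmund almost-supermartingale theorem \citep[Exercise II.4]{robbins1971,neveu1975} to get almost-sure convergence of the truncated gap, restrict to $\lbrace \tau_j = \infty \rbrace$, and take a countable union over $j$. Your added care about the $\mathcal{F}_k$-measurability of $\1{\tau_j > k-1}$ and the index alignment $V_{k+1} \leftrightarrow \1{\tau_j > k}$ simply makes explicit what the paper's terser proof leaves implicit.
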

\begin{proof}
By \cref{lemma-optimality-recursion-simple} and \citet[Exercise II.4]{robbins1971,neveu1975}, the limit as $k$ goes to infinity of $(F(\theta_{k}) - \flb)\1{ \tau_j > k-1}$ exists with probability one and is integrable. Therefore, on the event $\lbrace \tau_j = \infty \rbrace$, the limit of $F(\theta_k) - \flb$ exists and is integrable. As a result, the limit of $F(\theta_k) - \flb$ exists and is finite on $\cup_{j=0}^\infty \lbrace \tau_j = \infty \rbrace$.
\end{proof}
Additionally, we can state the following useful result.

\begin{lemma} \label{lemma-expected-optimality-gap-bound}
If \cref{assumption-flb,assumption-local-holder,assumption-unbiased,assumption-moment,assumption-stability}, and \cref{prop-psd,prop-max-eig,prop-cond-num} hold, and $\lbrace \theta_k \rbrace$ satisfy \cref{eqn-sgd-update}, then $ \exists K \in \mathbb{N}$ such that for any $j>K$, $\exists N_j > 0$ for which
\begin{equation}
\sup_{k > j} \E{ ( F(\theta_k) - \flb ) \1{ \tau_j > k -1 } } \leq N_j.
\end{equation}
\end{lemma}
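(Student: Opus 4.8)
The plan is to turn the one-step recursion of \cref{lemma-optimality-recursion-simple} into a purely deterministic scalar recursion on expectations and then unroll it, exploiting the summability guaranteed by \cref{prop-max-eig}. Write $g_k := \E{(F(\theta_k) - \flb)\1{\tau_j > k-1}}$ for $k > j$, and set $c_k := \frac{C_2}{1+\alpha}\lambda_{\max}(M_k)^{1+\alpha}$ and $d_k := \frac{C_1}{1+\alpha}\lambda_{\max}(M_k)^{1+\alpha}$. Taking the unconditional expectation of \cref{lemma-optimality-recursion-simple} and discarding the term $-\frac12\lambda_{\min}(M_k)\norm{\dot F(\theta_k)}_2^2\1{\tau_j > k-1}$, which is nonpositive by \cref{prop-psd}, yields $g_{k+1} \leq (1 + c_k) g_k + d_k$ for all $k > j$ (with $k$ at least the index $K$ produced by \cref{lemma-optimality-recursion-simple}, which is guaranteed since $j > K$).

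Next, \cref{prop-max-eig} gives $\sum_k c_k \leq \frac{C_2}{1+\alpha}S$ and $\sum_k d_k \leq \frac{C_1}{1+\alpha}S$, both finite, so $\prod_{i}(1 + c_i) \leq \exp(\sum_i c_i) \leq \exp(\frac{C_2 S}{1+\alpha}) =: P < \infty$. Unrolling the recursion from the base index $j+1$ gives, for every $k > j$,
\begin{equation}
g_k \leq \left(\prod_{i=j+1}^{k-1}(1+c_i)\right) g_{j+1} + \sum_{i=j+1}^{k-1}\left(\prod_{l=i+1}^{k-1}(1+c_l)\right)d_i \leq P\left(g_{j+1} + \frac{C_1 S}{1+\alpha}\right) =: N_j,
\end{equation}
a bound independent of $k$. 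Since $F(\theta_k) - \flb \geq 0$ by \cref{assumption-flb} and $\1{\tau_j > k} \leq \1{\tau_j > k-1}$, we get $\E{(F(\theta_k)-\flb)\1{\tau_j > k}} \leq g_k \leq N_j$, and taking the supremum over $k > j$ finishes the argument.

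The one genuinely nontrivial step, and where I expect the real difficulty, is checking that the base term $g_{j+1} = \E{F(\theta_{j+1}) - \flb}$ is finite (note the indicator $\1{\tau_j > j}$ is vacuous). Unlike every later step, the increment from $\theta_j$ to $\theta_{j+1}$ is not truncated by a stopping time, so an atypically large stochastic gradient could a priori render $F(\theta_{j+1})$ non-integrable; this is precisely the regime the global-H\"{o}lder literature sidesteps and that \cref{assumption-stability} is designed to control. The plan is to establish $\E{F(\theta_m) - \flb} < \infty$ for every fixed $m$ by induction on $m$ (the base $m=0$ being a constant), using \cref{lemma-taylor-holder-bound} on the single-step increment and splitting on whether $\norm{\theta_{m+1} - \theta_m}_2$ lies inside the ball of radius $(G(\theta_m)\vee\epsilon)^{1/(1+\alpha)}$ defining $\lesp$: on that event the relevant local H\"{o}lder constant is at most $\lesp(\theta_m)$, so \cref{assumption-stability} together with the inductive hypothesis and \cref{assumption-moment} bounds the contribution, while the complementary event is rare by Markov's inequality applied to the $(1+\alpha)$-moment bound of \cref{assumption-moment}. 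Carrying this tail estimate through so that the large-jump contribution stays integrable is the crux; once $g_{j+1} < \infty$ is secured, the unrolling above immediately delivers the finite $N_j$.
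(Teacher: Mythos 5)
Your main computation is exactly the paper's: the paper likewise takes total expectations in \cref{lemma-optimality-recursion-simple}, discards the nonpositive gradient term, and unrolls the scalar recursion, organizing the unrolling via $1+x\le e^x$ with the tail sum $\tfrac{C_1}{1+\alpha}\sum_{\ell\ge k+1}\lambda_{\max}(M_\ell)^{1+\alpha}$ added to both sides as a potential --- algebraically the same as your product bound $P=\exp(C_2 S/(1+\alpha))$. The structural difference is the anchor. You correctly note that the recursion is only available for $k>j$ and anchor at $g_{j+1}=\E{F(\theta_{j+1})-\flb}$; the paper instead applies the recursion once more at $k=j$ (writing $\1{\tau_j>j-1}=1$) and anchors at $\E{F(\theta_j)-\flb}$, even though $\lbrace\tau_j>j\rbrace$ is the whole space and says nothing about the step from $\theta_j$ to $\theta_{j+1}$, so that application is not licensed by \cref{lemma-optimality-recursion}. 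In neither version does the paper ever address whether the anchor is finite. So the issue you single out as the crux is real, and it is a gap in the paper's own proof, which passes over it in silence.

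However, your proposed repair does not close that gap, and it cannot be closed from the stated assumptions. On the large-jump event $\inlinenorm{\theta_{m+1}-\theta_m}_2>(G(\theta_m)\vee\epsilon)^{1/(1+\alpha)}$, Markov's inequality bounds the conditional probability by $\lambda_{\max}(M_m)^{1+\alpha}$, but a rare event with an unbounded integrand need not contribute finitely: on that event nothing bounds $L(\theta_m,\theta_{m+1})$, hence nothing bounds $F(\theta_{m+1})$. Concretely, take $\alpha=1/2$, $f(\theta,x)=\theta^2+\theta x$ with $X$ symmetric, $\E{|X|^{3/2}}<\infty$ but $\E{X^2}=\infty$, and $M_k=(k+1)^{-1}$. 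Then $F(\theta)=\theta^2$, $\dot f(\theta,X)=2\theta+X$, and all of \cref{assumption-flb,assumption-local-holder,assumption-unbiased,assumption-moment,assumption-stability} and \cref{prop-psd,prop-max-eig,prop-min-eig,prop-cond-num} hold: one may take $G(\theta)=\sqrt{2}\,(|2\theta|^{3/2}+\E{|X|^{3/2}})$, which gives $\lesp(\theta)=2(G(\theta)\vee\epsilon)^{1/3}$, and the left-hand side of \cref{assumption-stability} then grows like $\theta^2$, which the right-hand side accommodates through $C_2$ and $C_3$. Yet $\cond{F(\theta_{m+1})}{\mathcal{F}_m}=(1-2M_m)^2\theta_m^2+M_m^2\,\E{X^2}=\infty$ almost surely, so your induction cannot get past its first step. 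Worse, since $\condPrb{\tau_j>j+1}{\mathcal{F}_{j+1}}\ge 1-\lambda_{\max}(M_{j+1})^{3/2}\ge 1/2$ by \cref{theorem-probability-stop-times}, this example gives $\E{(F(\theta_{j+1})-\flb)\1{\tau_j>j+1}}=\infty$, i.e.\ the very quantity the lemma bounds is infinite. The finiteness of the anchor is therefore a genuinely missing hypothesis (e.g.\ $\E{F(\theta_j)}<\infty$ for every $j$, or a compatibility condition between the growth of $F$ and the $(1+\alpha)$-moment of the noise); neither your Markov estimate nor the paper's proof can manufacture it.
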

\begin{proof}
In \cref{lemma-optimality-recursion-simple}, we upper bound the right hand side by removing the negative term, and, by \cref{prop-max-eig}, we add $C_1(1+\alpha)^{-1} \sum_{\ell=k+1}^\infty \lambda_{\max}(M_\ell)^{1+\alpha}$ to both side. Then, for all $k \geq j$,
\begin{equation}
\begin{aligned}
&\E{ ( F(\theta_{k+1}) - \flb ) \1{ \tau_j > k }} + \frac{C_1}{1+\alpha} \sum_{\ell = k+1}^\infty \lambda_{\max}(M_\ell)^{1+\alpha} \\
&\leq \left(1 + \lambda_{\max}(M_k)^{1+\alpha} \frac{C_2}{1+\alpha}\right) \E{ (F(\theta_k) - \flb) \1{ \tau_j > k - 1} } + \frac{C_1}{1+\alpha} \sum_{\ell = k }^\infty \lambda_{\max}(M_\ell)^{1+\alpha}.
\end{aligned}
\end{equation}

Using $1 + C_2 (1+\alpha)^{-1} \lambda_{\max}(M_k)^{1+\alpha} \leq \exp( C_2 (1+\alpha)^{-1} \lambda_{\max}(M_k)^{1+\alpha})$, it follows
\begin{equation}
\begin{aligned}
&\E{ ( F(\theta_{k+1}) - \flb ) \1{ \tau_j > k }} + \frac{C_1}{1+\alpha} \sum_{\ell = k+1}^\infty \lambda_{\max}(M_\ell)^{1+\alpha} \\
&\leq \exp\left( \frac{C_2}{1+\alpha} \lambda_{\max}(M_k)^{1+\alpha} \right) \left[ \E{ (F(\theta_k) - \flb) \1{ \tau_j > k - 1} } + \frac{C_1}{1+\alpha} \sum_{\ell = k }^\infty \lambda_{\max}(M_\ell)^{1+\alpha} \right].
\end{aligned}
\end{equation}
Hence,
\begin{equation}
\begin{aligned}
&\E{ ( F(\theta_{k+1}) - \flb ) \1{ \tau_j > k }} + \frac{C_1}{1+\alpha} \sum_{\ell = k+1}^\infty \lambda_{\max}(M_\ell)^{1+\alpha} \\
&\leq \exp\left( \frac{C_2}{1+\alpha} \sum_{\ell = j}^k \lambda_{\max}(M_\ell)^{1+\alpha} \right) \left[ \E{ (F(\theta_j) - \flb)} + \frac{C_1}{1+\alpha} \sum_{\ell = j }^\infty \lambda_{\max}(M_\ell)^{1+\alpha} \right],
\end{aligned}
\end{equation}
where we have used $\1{ \tau_j > j - 1} = 1$. By \cref{prop-max-eig}, the summation in the exponent is finite, which implies the result.
\end{proof}

\subsection{Gradient Function Analysis}

\begin{lemma} \label{lemma-liminf-gradient}
If \cref{assumption-flb,assumption-local-holder,assumption-unbiased,assumption-moment,assumption-stability}, and \cref{prop-psd,prop-max-eig,prop-min-eig,prop-cond-num} hold, and $\lbrace \theta_k \rbrace$ satisfy \cref{eqn-sgd-update}, then, for any $\delta > 0$,
\begin{equation}
\Prb{ \norm{ \dot F (\theta_k) }_2 \1{ \tau_j > k - 1} \leq \delta ~ i.o. } = 1.
\end{equation}
\end{lemma}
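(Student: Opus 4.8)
The plan is to mirror the pseudo-global argument of \cref{lemma-liminf-grad-constrained}, replacing the events $\mathcal{B}_k(R)$ by $\lbrace \tau_j > k-1\rbrace$ and the recursion of \cref{lemma-recursion-constrained} by that of \cref{lemma-optimality-recursion-simple}. First I would fix $j > K$, with $K$ as in \cref{lemma-optimality-recursion-simple,lemma-expected-optimality-gap-bound}, take full expectations in \cref{lemma-optimality-recursion-simple}, and rearrange to isolate the gradient term:
\begin{equation}
\begin{aligned}
\tfrac{1}{2}\lambda_{\min}(M_k)\E{\norm{\dot F(\theta_k)}_2^2\1{\tau_j>k-1}}
&\leq \E{(F(\theta_k)-\flb)\1{\tau_j>k-1}} - \E{(F(\theta_{k+1})-\flb)\1{\tau_j>k}} \\
&\quad + \tfrac{C_2}{1+\alpha}\lambda_{\max}(M_k)^{1+\alpha}\E{(F(\theta_k)-\flb)\1{\tau_j>k-1}} \\
&\quad + \tfrac{C_1}{1+\alpha}\lambda_{\max}(M_k)^{1+\alpha}.
\end{aligned}
\end{equation}

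Then I would sum this inequality over $k$ from $j+1$ to infinity. The first two expectations telescope, and since $F(\theta_{k+1})-\flb \geq 0$ by \cref{assumption-flb}, the tail term is nonnegative and may be dropped, leaving only the finite initial contribution $\E{(F(\theta_{j+1})-\flb)\1{\tau_j>j}}$. For the $C_2$-term I would invoke \cref{lemma-expected-optimality-gap-bound} to bound $\E{(F(\theta_k)-\flb)\1{\tau_j>k-1}} \leq N_j$ uniformly in $k$, so that this term is summable with sum at most $\tfrac{C_2 N_j}{1+\alpha}S$ by \cref{prop-max-eig}; the $C_1$-term sums to at most $\tfrac{C_1}{1+\alpha}S$ by the same property. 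Consequently $\sum_k \lambda_{\min}(M_k)\E{\norm{\dot F(\theta_k)}_2^2\1{\tau_j>k-1}} < \infty$, and because $\sum_k \lambda_{\min}(M_k) = \infty$ by \cref{prop-min-eig}, it follows that $\liminf_k \E{\norm{\dot F(\theta_k)}_2^2\1{\tau_j>k-1}} = 0$.

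From here the i.o. conclusion follows exactly as in \cref{lemma-liminf-grad-constrained}: Markov's inequality gives, for every $\delta>0$ and $m\in\mathbb{N}$,
\begin{equation}
\Prb{\bigcap_{k=m}^\infty \left\lbrace \norm{\dot F(\theta_k)}_2\1{\tau_j>k-1} > \delta\right\rbrace} \leq \frac{1}{\delta^2}\inf_{k\geq m}\E{\norm{\dot F(\theta_k)}_2^2\1{\tau_j>k-1}} = 0,
\end{equation}
where I use that the indicator is $\lbrace 0,1\rbrace$-valued, so squaring the truncated norm reproduces the summand, and that the infimum vanishes because the $\liminf$ does. Taking the union over $m$ shows the complementary event has probability zero, which is the claim for $j>K$. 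Finally, since $\tau_j$ is nondecreasing in $j$, we have $\1{\tau_{j'}>k-1}\leq\1{\tau_j>k-1}$ for $j'\leq j$, so $\lbrace \norm{\dot F(\theta_k)}_2\1{\tau_j>k-1}\leq\delta\rbrace \subset \lbrace \norm{\dot F(\theta_k)}_2\1{\tau_{j'}>k-1}\leq\delta\rbrace$; hence the i.o. statement extends from large $j$ to every $j$.

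The main obstacle, and the only genuine departure from the pseudo-global proof, is the multiplicative factor $1+\tfrac{C_2}{1+\alpha}\lambda_{\max}(M_k)^{1+\alpha}$ in \cref{lemma-optimality-recursion-simple}, which prevents the optimality-gap expectations from telescoping cleanly. The resolution is precisely the uniform-in-$k$ bound of \cref{lemma-expected-optimality-gap-bound}, which converts the stray $C_2$-contribution into a summable series via \cref{prop-max-eig}; everything else is a routine adaptation of the constrained argument.
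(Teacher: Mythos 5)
Your proposal is correct and follows essentially the same route as the paper: take expectations in \cref{lemma-optimality-recursion-simple}, absorb the $C_2$-term via the uniform bound $N_j$ from \cref{lemma-expected-optimality-gap-bound}, telescope and sum using \cref{prop-max-eig}, deduce $\liminf_k \E{\norm{\dot F(\theta_k)}_2^2 \1{\tau_j > k-1}} = 0$ from \cref{prop-min-eig}, and finish with Markov's inequality and a countable union. The only departure is your explicit restriction to $j > K$ followed by the extension to all $j$ via monotonicity of $\tau_j$ --- a detail the paper's own proof glosses over, and a welcome refinement rather than a different argument.
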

\begin{proof}
By \cref{lemma-optimality-recursion-simple},
\begin{equation}
\begin{aligned}
& \frac{1}{2}\lambda_{\min}(M_k)\E{ \norm{ \dot F (\theta_k) }_2^2 \1{ \tau_j > k - 1} } 
\leq \E{ ( F(\theta_k) - \flb) \1{ \tau_j > k - 1 }} \\
&\quad - \E{ ( F(\theta_{k+1}) - \flb) \1{ \tau_j > k }} + \frac{C_2}{1+\alpha} \lambda_{\max}(M_k)^{1+\alpha} \E{ ( F(\theta_k) - \flb) \1{ \tau_j > k - 1 }} \\
&\quad + \frac{C_1}{1+\alpha} \lambda_{\max}(M_k)^{1+\alpha}.
\end{aligned}
\end{equation}
By applying \cref{lemma-expected-optimality-gap-bound},
\begin{equation}
\begin{aligned}
& \frac{1}{2}\lambda_{\min}(M_k)\E{ \norm{ \dot F (\theta_k) }_2^2 \1{ \tau_j > k - 1} } 
\leq \E{ ( F(\theta_k) - \flb) \1{ \tau_j > k - 1 }} \\
&\quad - \E{ ( F(\theta_{k+1}) - \flb) \1{ \tau_j > k }} + \lambda_{\max}(M_k)^{1+\alpha} \left( \frac{C_2 N_j + C_1}{1+\alpha} \right).
\end{aligned}
\end{equation}
By summing and using \cref{assumption-flb},
\begin{equation}
\begin{aligned}
& \frac{1}{2} \sum_{k = j}^\infty \lambda_{\min}(M_k)  \E{ \norm{ \dot F(\theta_k)}_2^2 \1{ \tau_j > k -1}} \\
&\quad\leq \E{ F(\theta_j) - \flb } + \frac{C_2 N_j + C_1}{1+\alpha} \sum_{k=j}^\infty \lambda_{\max}(M_k)^{1+\alpha}.
\end{aligned}
\end{equation}
By \cref{prop-max-eig}, the right hand side is bounded. Now, by \cref{prop-min-eig},
\begin{equation}
\liminf_{k \to \infty} \E{ \norm{ \dot F(\theta_k)}_2^2 \1{ \tau_j > k -1 }} = 0.
\end{equation}
Using Markov's inequality, for any $\ell \in \mathbb{N}$ and any $\delta > 0$,
\begin{equation}
\Prb{ \bigcap_{k = \ell}^\infty \left\lbrace \norm{ \dot F(\theta_k) }_2 \1{ \tau_j > k -1} > \delta \right\rbrace } \leq \frac{1}{\delta^2} \min_{k \geq \ell} \E{ \norm{ \dot F(\theta_k)}_2^2 \1{ \tau_j > k - 1 }} = 0.
\end{equation}
As the countable union of sets of measure zero have measure zero, the result follows.
\end{proof}

\subsection{Stopping Time Analysis}

WE compute the probability of $\lbrace \tau_j = k \rbrace$.

\begin{theorem} \label{theorem-probability-stop-times}
Let $\lbrace \tau_j : j+1 \in \mathbb{N} \rbrace$ be defined as in \cref{eqn-tau-j}. If \cref{assumption-flb,assumption-local-holder,assumption-moment,prop-psd} hold, and $\lbrace \theta_k \rbrace$ satisfy \cref{eqn-sgd-update}, then, for any $j + 1 \in \mathbb{N}$ and any $k + 1 \in \mathbb{N}$,
\begin{equation}
\condPrb{ \tau_j = k }{\mathcal{F}_k} \leq \begin{cases}
0 & k \leq j, \\
\lambda_{\max}(M_k)^{1+\alpha} & k > j.
\end{cases}
\end{equation}
Moreover, if \cref{prop-max-eig} also holds, then $\Prb{ \cup_{j=0}^\infty \lbrace \tau_j = \infty \rbrace } = 1$.
\end{theorem}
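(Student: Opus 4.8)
The plan is to prove everything by reducing to the single stopping time $\nu_j$ and exploiting the inclusion $\lbrace \tau_j = k \rbrace \subset \lbrace \nu_j = k \rbrace$ that was already noted from \cref{lemma-taylor-holder-bound}. This inclusion immediately yields the first inequality $\condPrb{\tau_j = k}{\mathcal{F}_k} \leq \condPrb{\nu_j = k}{\mathcal{F}_k}$. The case $k \leq j$ is trivial, since both stopping times require $k > j$ by construction (see \cref{eqn-tau-j,eqn-nu-j}), so $\lbrace \nu_j = k \rbrace$ is empty and its conditional probability is zero. Thus the only substantive estimate is the bound $\condPrb{\nu_j = k}{\mathcal{F}_k} \leq \lambda_{\max}(M_k)^{1+\alpha}$ for $k > j$.

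For $k > j$, I would first note that $\lbrace \nu_j > k-1 \rbrace$ is $\mathcal{F}_k$-measurable, so that $\condPrb{\nu_j = k}{\mathcal{F}_k} = \1{\nu_j > k-1}\,\condPrb{L(\theta_k,\theta_{k+1}) > \lesp(\theta_k)}{\mathcal{F}_k} \leq \condPrb{L(\theta_k,\theta_{k+1}) > \lesp(\theta_k)}{\mathcal{F}_k}$. The key decoupling step, and the crux of the whole argument, is the observation that $\lesp(\theta_k)$ was defined precisely as the supremum of the H\"older ratio over the ball of radius $(G(\theta_k)\vee\epsilon)^{1/(1+\alpha)}$ centered at $\theta_k$ (see \cref{eqn-lesp}), whereas $L(\theta_k,\theta_{k+1})$ is the same supremum over the concentric ball of radius $\inlinenorm{\theta_{k+1}-\theta_k}_2$ (see \cref{def-local-holder-constant}). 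Since a supremum over a smaller concentric ball cannot exceed that over a larger one, $\inlinenorm{\theta_{k+1}-\theta_k}_2 \leq (G(\theta_k)\vee\epsilon)^{1/(1+\alpha)}$ forces $L(\theta_k,\theta_{k+1}) \leq \lesp(\theta_k)$; consequently
\begin{equation}
\lbrace L(\theta_k,\theta_{k+1}) > \lesp(\theta_k) \rbrace \subset \left\lbrace \inlinenorm{\theta_{k+1}-\theta_k}_2 > (G(\theta_k)\vee\epsilon)^{\frac{1}{1+\alpha}} \right\rbrace .
\end{equation}
This is exactly what removes the troublesome dependence between the random H\"older constant and the step length.

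With the event decoupled, I would finish the conditional bound by inserting the update \cref{eqn-sgd-update}, estimating $\inlinenorm{\theta_{k+1}-\theta_k}_2 \leq \lambda_{\max}(M_k)\inlinenorm{\dot f(\theta_k,X_{k+1})}_2$ via \cref{prop-psd}, and applying the $(1+\alpha)$-moment Markov inequality:
\begin{equation}
\condPrb{\lambda_{\max}(M_k)\inlinenorm{\dot f(\theta_k,X_{k+1})}_2 > (G(\theta_k)\vee\epsilon)^{\frac{1}{1+\alpha}}}{\mathcal{F}_k} \leq \frac{\lambda_{\max}(M_k)^{1+\alpha}\,\cond{\inlinenorm{\dot f(\theta_k,X_{k+1})}_2^{1+\alpha}}{\mathcal{F}_k}}{G(\theta_k)\vee\epsilon} .
\end{equation}
Since \cref{assumption-moment} gives $\cond{\inlinenorm{\dot f(\theta_k,X_{k+1})}_2^{1+\alpha}}{\mathcal{F}_k} \leq G(\theta_k) \leq G(\theta_k)\vee\epsilon$, the right-hand side collapses to $\lambda_{\max}(M_k)^{1+\alpha}$, completing the per-step estimate; note that the choice of radius $(G(\theta_k)\vee\epsilon)^{1/(1+\alpha)}$ is exactly what makes the moment bound cancel the denominator, with the $\epsilon$ safeguarding against $G(\theta_k)=0$.

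For the final almost-sure statement, I would take expectations to obtain the unconditional bound $\Prb{\nu_j = k} \leq \lambda_{\max}(M_k)^{1+\alpha}$ for $k > j$, sum over $k > j$, and invoke \cref{prop-max-eig} to get $\Prb{\nu_j < \infty} \leq \sum_{k>j}\lambda_{\max}(M_k)^{1+\alpha}$, which is the tail of a convergent series and hence tends to $0$ as $j \to \infty$. Because $\lbrace \nu_j = \infty \rbrace$ is increasing in $j$ (no stopping after index $j$ implies none after $j+1$), continuity of measure gives $\Prb{\cup_{j}\lbrace\nu_j=\infty\rbrace} = \lim_{j}\Prb{\nu_j=\infty} = 1$. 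Finally, the inclusion $\lbrace\tau_j = k\rbrace\subset\lbrace\nu_j=k\rbrace$ implies $\lbrace\nu_j=\infty\rbrace\subset\lbrace\tau_j=\infty\rbrace$, so $\Prb{\cup_j\lbrace\tau_j=\infty\rbrace}=1$ as well. The main obstacle throughout is the decoupling inclusion of the second paragraph; everything else is a routine combination of Markov's inequality, the moment assumption, and a Borel--Cantelli-type summability argument.
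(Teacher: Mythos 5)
Your proposal is correct and follows essentially the same route as the paper's proof: the decoupling inclusion $\lbrace L(\theta_k,\theta_{k+1}) > \lesp(\theta_k)\rbrace \subset \lbrace \inlinenorm{\theta_{k+1}-\theta_k}_2 > (G(\theta_k)\vee\epsilon)^{1/(1+\alpha)}\rbrace$ from the concentric-ball construction, Markov's inequality on the $(1+\alpha)$-moment so that $G(\theta_k)\vee\epsilon$ cancels, and then summability via \cref{prop-max-eig} together with the monotonicity $\lbrace \nu_j = \infty\rbrace \subset \lbrace \nu_{j+1} = \infty\rbrace$ and the inclusion $\lbrace \nu_j = \infty\rbrace \subset \lbrace \tau_j = \infty\rbrace$. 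Your write-up is in fact slightly more explicit than the paper's (spelling out the ball-monotonicity argument and the $\mathcal{F}_k$-measurability of $\lbrace \nu_j > k-1\rbrace$), but the substance is identical.
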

\begin{proof}
The case of $k \leq j$ is trivial. So consider only $k > j$.
By the construction of $L(\cdot,\cdot)$ and $\mathcal{L}_{\epsilon}(\cdot)$, $\omega \in \lbrace L(\theta_k,\theta_{k+1}) > \mathcal{L}_{\epsilon}(\theta_k) \rbrace$ implies $\omega \in \lbrace \norm{\theta_{k+1} - \theta_k}_2 > ( G(\theta_k) \vee \epsilon )^{ \frac{1}{1+\alpha} }\rbrace$. Using \cref{eqn-sgd-update}, Markov's inequality, \cref{prop-psd}, we conclude
\begin{align}
\condPrb{ \tau_j = k }{\mathcal{F}_k} 
&\leq \condPrb{ \norm{ M_k \dot f(\theta_k, X_{k+1} ) }_2^{1+\alpha} > G(\theta_k) \vee \epsilon }{\mathcal{F}_k} \\
&\leq \frac{\lambda_{\max}(M_k)^{1+\alpha} \cond{\norm{ \dot f(\theta_k, X_{k+1} }_2^{1+\alpha}}{\mathcal{F}_k} }{G(\theta_k) \vee \epsilon}.
\end{align}
Applying \cref{assumption-moment} supplies the bound on $\condPrb{ \tau_j = k }{\mathcal{F}_k}$. For the second part, note
\begin{equation}
\Prb{ \tau_j = \infty } \geq  1 - \Prb{ \tau_j < \infty} \geq 1 - \sum_{k = j+1}^\infty \lambda_{\max}(M_k)^{1+\alpha}.
\end{equation}
Therefore,
\begin{equation}
\Prb{ \bigcup_{j=0}^\infty \lbrace \tau_j = \infty \rbrace } = \lim_{j \to \infty} \Prb{ \tau_j = \infty }. 
\end{equation}
Since $\lim_j \inlinePrb{ \tau_j = \infty} \geq 1 - \lim_{j} \sum_{k = j+1}^\infty \lambda_{\max}(M_k)^{1+\alpha}$, applying \cref{prop-max-eig} supplies the final result.
\end{proof}

\end{document}